\documentclass[10pt]{article} % For LaTeX2e
% \usepackage{tmlr}
% If accepted, instead use the following line for the camera-ready submission:
\usepackage[accepted]{}
% To de-anonymize and remove mentions to TMLR (for example for posting to preprint servers), instead use the following:
%\usepackage[preprint]{tmlr}

% Optional math commands from https://github.com/goodfeli/dlbook_notation.
%%%%% NEW MATH DEFINITIONS %%%%%

\usepackage{amsmath,amsfonts,bm}

% Mark sections of captions for referring to divisions of figures

% Highlight a newly defined term

% Figure reference, lower-case.

% Figure reference, capital. For start of sentence

% Section reference, lower-case.

% Section reference, capital.

% Reference to two sections.

% Reference to three sections.

% Reference to an equation, lower-case.
\def\eqref#1{equation~\ref{#1}}
% Reference to an equation, upper case

% A raw reference to an equation---avoid using if possible

% Reference to a chapter, lower-case.

% Reference to an equation, upper case.

% Reference to a range of chapters

% Reference to an algorithm, lower-case.

% Reference to an algorithm, upper case.

% Reference to a part, lower case

% Reference to a part, upper case

\def\1{\bm{1}}

% Random variables

% rm is already a command, just don't name any random variables m

% Random vectors

% Elements of random vectors

% Random matrices

% Elements of random matrices

% Vectors

% Elements of vectors

% Matrix

% Tensor
\DeclareMathAlphabet{\mathsfit}{\encodingdefault}{\sfdefault}{m}{sl}
\SetMathAlphabet{\mathsfit}{bold}{\encodingdefault}{\sfdefault}{bx}{n}

% Graph

% Sets

% Don't use a set called E, because this would be the same as our symbol
% for expectation.

% Entries of a matrix

% entries of a tensor
% Same font as tensor, without \bm wrapper

% The true underlying data generating distribution

% The empirical distribution defined by the training set

% The model distribution

% Stochastic autoencoder distributions

 % Laplace distribution

% Wolfram Mathworld says $L^2$ is for function spaces and $\ell^2$ is for vectors
% But then they seem to use $L^2$ for vectors throughout the site, and so does
% wikipedia.

 % See usage in notation.tex. Chosen to match Daphne's book.

\usepackage{hyperref}
\usepackage{url}

% =========================================================
% own packages
\usepackage{adjustbox}
% \usepackage{makecell}
% \usepackage{multirow}
% if you would like to remove end of in algorithm (noend), be careful about the corresponding line numbers of "IV. Model Learning"
% \usepackage[ruled,vlined,linesnumbered,noend]{algorithm2e}
% \usepackage[ruled,vlined,linesnumbered]{algorithm2e}
% \usepackage{enumitem}
\usepackage{subcaption}
\usepackage{amsthm}
\usepackage{colortbl}
\usepackage{wrapfig}
\newtheorem{definition}{Definition}
\newtheorem{proposition}{Proposition}
\usepackage{booktabs}
% \DeclareMathOperator*{\argmin}{argmin}
% \DeclareMathOperator*{\argmax}{argmax}
% =========================================================

\title{Simplifying Node Classification on Heterophilous Graphs with Compatible Label Propagation}

% Authors must not appear in the submitted version. They should be hidden
% as long as the tmlr package is used without the [accepted] or [preprint] options.
% Non-anonymous submissions will be rejected without review.

\author{\name Zhiqiang Zhong \email zhiqiang.zhong@uni.lu \\
      \addr University of Luxembourg
      \AND
      \name Sergey Ivanov \email ivanovserg990@gmail.com \\
      \addr Criteo
      \AND
      \name Jun Pang \email jun.pang@uni.lu\\
      \addr University of Luxembourg
}

% % The \author macro works with any number of authors. Use \AND 
% % to separate the names and addresses of multiple authors.

  % Insert correct month for camera-ready version
 % Insert correct year for camera-ready version
 % Insert correct link to OpenReview for camera-ready version

\begin{document}

\maketitle

%------------------------------------------------------------------------------
\begin{abstract} 
Graph Neural Networks (GNNs) have been predominant for graph learning tasks; however, recent studies showed that a well-known graph algorithm, Label Propagation (LP), combined with a shallow neural network can achieve comparable performance to GNNs in semi-supervised node classification on graphs with high homophily. 
In this paper, we show that this approach falls short on graphs with low homophily, where nodes often connect to the nodes of the opposite classes.
To overcome this, we carefully design a combination of a base predictor with LP algorithm that enjoys a closed-form solution as well as convergence guarantees. 
Our algorithm first learns the class compatibility matrix and then aggregates label predictions using LP algorithm weighted by class compatibilities. 
On a wide variety of benchmarks, we show that our approach achieves the leading performance on graphs with various levels of homophily. 
Meanwhile, it has orders of magnitude fewer parameters and requires less execution time. 
% Empirical evaluations demonstrate that simple adaptations of LP can be competitive in semi-supervised node classification in both homophily and heterophily regimes.

\end{abstract}
%------------------------------------------------------------------------------

%------------------------------------------------------------------------------
\section{Introduction} 
\label{sec:introduction}
Following the triumph of deep learning in computer vision and natural language processing, more and more success stories are coming from message-passing Graph Neural Networks (GNNs) suited for relational data such as graphs or meshes~\citep{ZCZ20,WPCLZY21}. 
The majority of modern deep learning architectures can be considered as a special case of the GNN with specific geometrical structures ~\citep{BBCV21}. 
These models have achieved state-of-the-art performance in tasks such as (semi-)supervised node classification, common in real-world applications, and crested popular leaderboards such as Open Graph Benchmark~\citep{HFZDRLCL20}. 
The landscape of GNNs is rich, and many new architectures have been recently proposed to compensate for limited expressivity~\citep{VCCRLB18,XHLJ19,DHSPWX19,AL20} or to solve specific problems such as over-smoothing, inherent to the traditional message-passing layers~\citep{LHW18,ZA20,MWW20,YHSYK21}. 
Unfortunately, these models attain desiderata with the extra price of being more complex and less intuitive during inspection of their performance gains, therefore restricting their applicability in practice. 

To address these problems, several models were proposed recently that do not use message-passing algorithm of GNNs but instead are based on well-studied algorithms that show promising results in graph problems~\citep{TZPM19,RFCEBM20,HHSLB21,IP21}. 
Here, we resort to a graph algorithm called Label Propagation (LP)~\citep{ZBLWS03,Z05} -- a competitive algorithm in semi-supervised node classification setup, which was popular for more than a decade. 
While GNNs learn mapping functions between node features and class labels, LP algorithm directly incorporates class labels of the train nodes to make predictions on the test nodes. 
As traditional LP algorithm does not use node features (which may contain significant signal about the class labels of the nodes), it was recently shown~\citep{HHSLB21} that by making ``base predictions'' by a linear network on the node features and then substituting the predictions to the LP algorithm, it is possible to boost the performance up to the results of more complex GNNs. 
These results, however, are often obtained for the graph datasets that exhibit only high homophily, i.e. structure where neighbouring nodes are likely to have the same class labels. 
In graphs with low homophily, known as heterophily (``opposites attract''), LP and traditional GNNs fall short and are often outperformed by simple methods such as multi-layer perceptron~\citep{R61} (shown in Section~\ref{subsec:experimental_results_realworld}). 
In order to give a precise description of the node label relationship of an arbitrary graph, here we introduce and formally define the homophily ratio of a graph. 
\begin{definition}[Homophily Ratio $h$]
\label{def:homophily_ratio}
For an arbitrary graph $\mathcal{G}=(\mathcal{V}, \mathcal{E}, \mathbf{X})$, its homophily ratio $h$ is determined by the relationship between node class labels and graph structure encoded in the adjacency matrix.
Recent work commonly use two \textit{homophily} metrics: edge homophily $h_{\it edge}$~\citep{ZRRMLAK21} and node homophily $h_{\it node}$~\citep{PWCLY20}. 
They can be formulated as:
\begin{equation}
\label{eq:homophily}
    \begin{aligned}
    h_{\it edge} &= \frac{\vert\{(u, v): (u, v) \in \mathcal{E} \wedge y_{u}=y_{v} \}\vert}{\vert\mathcal{E}\vert} 
    \qquad\qquad
    h_{\it node} &= \frac{1}{\vert\mathcal{V}\vert}\sum_{v \in \mathcal{V}} \frac{\vert\{u: u \in \mathcal{N}_{v} \wedge y_{u}=y_{v}\}\vert}{\vert\mathcal{N}_{v}\vert}
    \end{aligned}
\end{equation}
where $\mathcal{N}_{v}$ is the set of adjacent nodes of node $v$ and $\vert \cdot \vert$ represents the number of elements of the set. 
Specifically, $h_{\it edge}$ evaluates the fraction of edges in a graph that connect nodes that have the same class labels;
$h_{\it node}$ evaluates the overall fraction of neighbouring nodes that have the same class labels.
In this paper, we focus on edge homophily and set $h=h_{\it edge}$ in the following sections.
\end{definition}

Motivated by this limitation, several GNN architectures were proposed to make message-passing paradigm work on heterophilous graphs~\citep{ZYZHAK20,CWHDL20,YHSYK21,BWSS21,ZLPZJY22}. 
These models revolve around modifications of neighbourhoods used for aggregation schemes of GNNs to enrich the diversity of class labels among neighbours. 
For example, \citet{ZYZHAK20} uses multiple-hop neighbourhoods for the aggregation in GNNs, which in turn provides more complete information about the connectivity of different classes. 
While such approaches bridge the gap for traditional GNNs on heterophilous graphs, they often do so at the expense of more parameters and longer training time. 

Instead, in this work, we modify LP algorithm to work well in semi-supervised node classification on heterophilous graphs. 
We start by conducting an experimental investigation over existing models' micro-level performance, i.e., evaluating the node classification accuracy for node groups with subgraphs of different homophily ratios.
The investigation results (as shown in Fig.~\ref{fig:acc_by_subh}) demonstrate that recent GNNs designed for heterophilous graphs do not outperform simple neural network model that only relies raw on node features, i.e., multi-layer perceptron, when the subgraph homophily ratio of a node is low. 
Inspired by this finding, we propose an efficient framework that relies on base predictions given by a simple neural network and further ameliorate the base predictions with a compatible LP algorithm.
In particular, we propose a simple pipeline (CLP) with three main steps (Fig.~\ref{fig:pipeline}): 
(i) base predictions of all nodes are made by a simple neural network purely on the node features; 
(ii) a global compatibility matrix that computes connectivity of different class labels is estimated; 
and (iii) smoothing of the predictions across neighbours weighted by the compatibility of the class labels is performed. 
Intuitively, step (i) calculates the class probabilities for the test nodes, while step (ii) defines the weights on edges with which LP algorithm at step (iii) will propagate the class probabilities for each node. 
While steps (i) and (iii) have been tried independently for semi-supervised classification before ~\citep{KW17,IP21}, it is learning the compatibility matrix at step (ii) that makes a big difference as we show in the experiments. 
In our theoretical analysis, we show that our approach can be computed via closed-form solution that provides necessary and sufficient conditions for convergence.
Empirically, extensive experimental results on a wide variety of benchmarks show the competitive and efficient performance of CLP. 

A significant boost in the performance of our method is related to learning a global compatibility matrix between classes. 
This idea is not new -- before the rise of neural networks for semi-supervised learning several algorithms such as DCE~\citep{PLG20}, ZooBP~\citep{EGFMK17}, LinBP~\citep{GGKF15} and FaBP~\citep{KKKCPF11} use compatibility matrix for belief-propagation algorithm. 
However, all of these methods are motivated by the regularisation framework, where the labelling function minimises some energy objective that does not depend on the node features~\citep{G14} and were shown to have suboptimal performance to GNNs~\citep{PLG20}. 
More recently, compatibility matrix was used for GNNs in the heterophily setting~\citep{ZRRMLAK21} and showed a significant increase in performance. 
That being said, we find that learning a compatibility matrix from the node features significantly improves the performance of LP on heterophilous graphs. 

Overall, we generalise LP algorithm to arbitrary heterophily assumption, where the commonly used smoothness assumption (homophily) is a special case with the identity matrix acting as the compatibility matrix. 
In this case, LP is orders of magnitude faster than log-likelihood estimators such as GNNs, and it presents new ways to understand the performance of graph learning through the lens of diffusion-based learning~\citep{KKKCPF11,G14,ZBLWS03,Z05}. 
For example, the insights of using compatibility matrix and class labels as part of the training can be incorporated into existing GNN models. 
As such, we hope that the ideas of LP algorithm could be fruitful for other tasks such as node regression, and LP could become a commonly used baseline of graph learning practitioners. 

%------------------------------------------------------------------------------

%------------------------------------------------------------------------------
\section{Additional Related Work} 
\label{sec:related_work}
\textbf{GNNs for heterophily regime.} 
The realisation that standard message-passing Graph Neural Network (GNNs)~\citep{KW17,VCCRLB18,BHBSZMT18} are suboptimal for graphs with high heterophily was not immediate. 
At first, there was rich literature on solving the over-smoothing problem~\citep{LHW18} which prevents an increasing number of layers of GNNs without loss of performance (common to deep convolutional nets). 
After that, with new graph datasets with high heterophily~\citep{PWCLY20,APKALHSG19,LHLHGBL21,ZLPZJY22} and new theory that connects the over-smoothing problem with the tendency of nodes to connect to the opposite classes~\citep{YHSYK21}, it has become evident that GNNs must incorporate additional knowledge to be suited for heterophilous graphs. 
Several GNNs were proposed to deal with heterophily setting~\citep{CPLM21, BWSS21,ZRRMLAK21}; however, usually improved accuracy of these GNNs is traded with an extra computational cost which makes it hard to scale for large datasets, unlike Label Propagation (LP) algorithm, which is a simple graph algorithm. 
Additionally, \citet{WL20} use label propagation as regularisation to assist message-passing GNNs in learning proper edge weights, but their approach is still tailored only for homophilous datasets.
A recent approach CPGNN~\citep{ZRRMLAK21} uses compatibility matrix with message-passing process; however, there are several notable differences compared to our approach. 
First, CPGNN adjusts the weight of the message only based on the class of a sending node and compatibility matrix. 
In turn, we additionally consider the class of a receiving node, which significantly improves the results in our experiments. 
Second, we provide additional theoretical analysis of our method, giving a closed-form solution and convergence guarantees, which is not available for CPGNN model. 

% \smallskip\noindent
\textbf{Label propagation for heterophily regime.} 
Perhaps the closest work to ours is~\citep{G14,GGKF15}, where a compatibility matrix is used in the Linearised Belief Propagation (LinBP) algorithm. 
There, a compatibility matrix is provided or estimated via a closed-form solution to minimise a convex energy function and does not use the node features that are crucial in the estimation of the right labelling functions. 
Several follow-ups aimed to generalise LinBP to various types of heterophily~\citep{P17} or Markov Random Fields~\citep{G17}. 
It was later shown in the experiments that these methods are less effective than GNNs in graphs with node features~\citep{PLG20}. 
In contrast, our method combines two orthogonal sources of information -- one from the labelling function learned on the node features and another from LP algorithm that uses known labels together with the graph structure. 

%------------------------------------------------------------------------------

%------------------------------------------------------------------------------
\section{Preliminaries}
\label{sec:preliminaries}
An unweighted graph with $n$ nodes can be formally represented as $\mathcal{G}=(\mathcal{V}, \mathcal{E}, \mathbf{X})$, where $\mathcal{V}$ is the set of nodes, $\mathcal{E} \subseteq \mathcal{V} \times \mathcal{V}$ denotes the set of edges, and $\mathbf{X} = \{\mathbf{x}_{1}, \mathbf{x}_{2}, \dots, \mathbf{x}_{n} \}$. 
$\mathbf{x}_{v} \in \mathbb{R}^{\kappa}$ represents node features ($\kappa$ is the dimensionality of node features). 
$\mathcal{Y}$ stands for the set of possible class labels for $v \in \mathcal{V}$. For subsequent discussion, we summarise $\mathcal{V}$ and $\mathcal{E}$ into adjacency matrix $\mathbf{A} \in \{0, 1\}^{n \times n}$.

% \smallskip\noindent
\textbf{Problem setup}.
In this paper, we focus on the semi-supervised node classification task on a graph $\mathcal{G}$, where $\mathcal{T}_{\mathcal{V}} \subset \mathcal{V}$ with known class labels $y_{v}$ for all $v \in \mathcal{T}_{\mathcal{V}}$. 
We aim to infer the unknown class labels $y_{u}$ for all $u \in \mathcal{V} \setminus \mathcal{T}_{\mathcal{V}}$. 
In addition, $\mathcal{T}_{\mathcal{V}}$ is split into two subsets: $\mathcal{T}^{t}_{\mathcal{V}}$ and $\mathcal{T}^{v}_{\mathcal{V}}$, where $\mathcal{T}^{t}_{\mathcal{V}}$ is training set and $\mathcal{T}^{v}_{\mathcal{V}}$ works as the validation set for early stopping or parameter fine-tune to prevent overfitting.

% In order to give a precise description of the node label relationship of an arbitrary graph, here we introduce and formally define the homophily ratio of a graph. 
% % 
% \begin{definition}[Homophily Ratio $h$]
% For an arbitrary graph $\mathcal{G}=(\mathcal{V}, \mathcal{E}, \mathbf{X})$, its homophily ratio $h$ is determined by the relationship between node class labels and graph structure encoded in the adjacency matrix.
% Recent work commonly use two \textit{homophily} metrics: edge homophily $h_{\it edge}$~\citep{ZRRMLAK21} and node homophily $h_{\it node}$~\citep{PWCLY20}. 
% They can be formulated as:
% \begin{equation}
% \label{eq:homophily}
%     \begin{aligned}
%     h_{\it edge} &= \frac{\vert\{(u, v): (u, v) \in \mathcal{E} \wedge y_{u}=y_{v} \}\vert}{\vert\mathcal{E}\vert} \\
%     h_{\it node} &= \frac{1}{\vert\mathcal{V}\vert}\sum_{v \in \mathcal{V}} \frac{\vert\{u: u \in \mathcal{N}_{v} \wedge y_{u}=y_{v}\}\vert}{\vert\mathcal{N}_{v}\vert}
%     \end{aligned}
% \end{equation}
% where $\mathcal{N}_{v}$ is the set of adjacent nodes of node $v$ and $\vert \cdot \vert$ represents the number of elements of the set. 
% Specifically, $h_{\it edge}$ evaluates the fraction of edges in a graph that connect nodes that have the same class labels;
% $h_{\it node}$ evaluates the overall fraction of neighbouring nodes that have the same class labels.
% In this paper, we focus on edge homophily and set $h=h_{\it edge}$ in the following sections.
% \end{definition}

The homophily ratio $h$ defined in Definition~\ref{def:homophily_ratio} is suitable for measuring the \textit{overall} homophily level in the graph. 
However, the actual homophily level is not necessarily uniform within all parts of the graph. One typical case is that the homophily level varies among different pairs of classes. 
To measure the variability of the homophily level, we further define the \textit{compatibility matrix} $\mathbf{H}$ by measuring the fraction of outgoing edges from a node in class $i$ to a node in class $j$.

\begin{wrapfigure}{r}{0.3\textwidth}
% \begin{figure}[!t]
\centering
\begin{subfigure}{1.\linewidth}
    \centering
    \includegraphics[width=\textwidth]{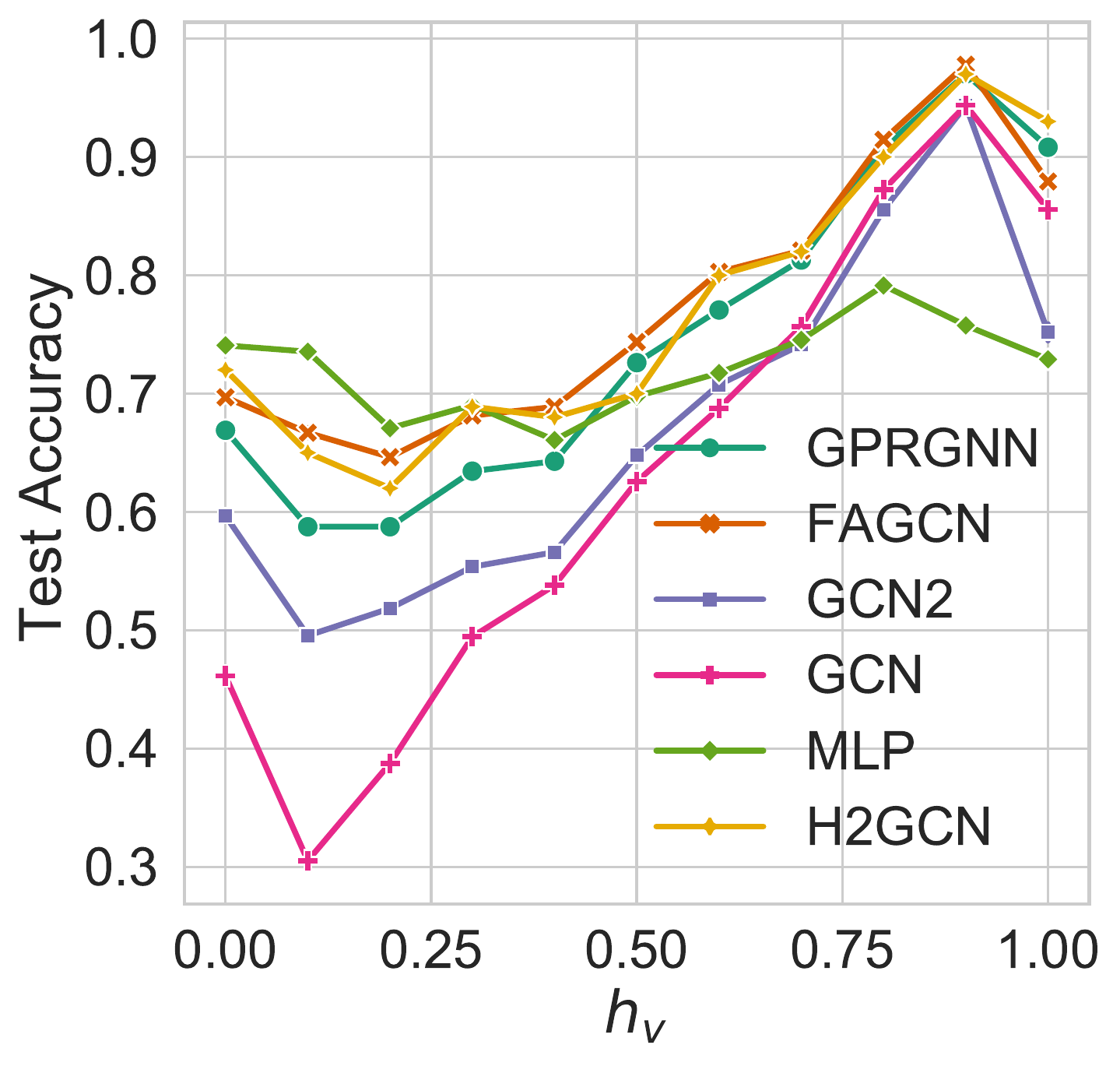}
    \caption{Wiki}
    \label{fig:acc_by_subh_wiki}
\end{subfigure}
\begin{subfigure}{1.\linewidth}
    \centering
    \includegraphics[width=\textwidth]{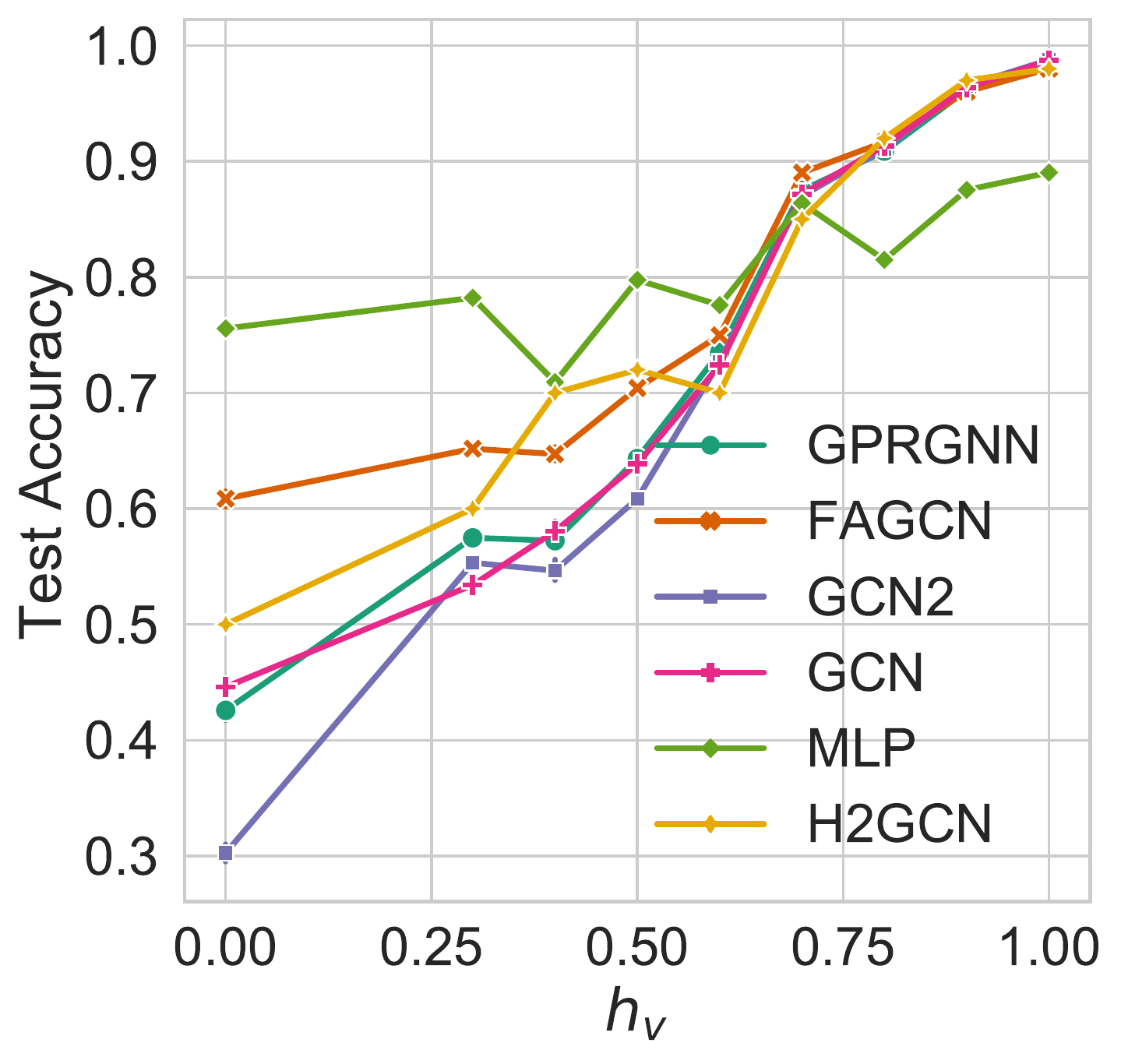}
    \caption{ACM}
    \label{fig:acc_by_subh_acm}
\end{subfigure}
\caption{
Classification accuracy for different 1-hop subgraph homophily ratios on Wiki (\ref{fig:acc_by_subh_wiki}) and ACM (\ref{fig:acc_by_subh_acm}) graphs.
}
\vspace{-17mm}
\label{fig:acc_by_subh}
% \end{figure}
\end{wrapfigure}

\begin{definition}[Compatibility Matrix $\mathbf{H}$]
\label{def:compatibility_matrix}
The compatibility matrix $\mathbf{H}$ has entries $[\mathbf{H}]_{ij}$ that capture the fraction of outgoing edges from a node in class $i$ to a node in class $j$:
\begin{equation}
\label{eq:compatibility_matrix}
    [\mathbf{H}]_{ij} = \frac{\vert (u, v) : (u, v) \in \mathcal{E} \wedge y_{u} = i \wedge y_{v} = j \vert}
    {\vert (u, v) : (u, v) \in \mathcal{E} \wedge y_{u} = i \vert}
\end{equation}
\end{definition}
The example of Appendix~\ref{sec:appendix_comparison_diff_lp_methods} gives an intuitive explanation of how $\mathbf{H}$ measures the variability of the homophily level.
In the semi-supervised node classification settings, compatibility matrix $\mathbf{H}$ empirically models the probability of nodes belonging to each pair of classes to connect. 
Modelling $\mathbf{H}$ is crucial for heterophily settings, but calculating the exact $\mathbf{H}$ would require knowledge to class labels of all nodes in the graph, which violates the semi-supervised node classification setting. 
Therefore it is not possible to incorporate exact $\mathbf{H}$. 
To fill this gap, in Sec.~\ref{subsec:estim_CM}, we propose an approach to estimate $\mathbf{H}$ based on a sparsely labelled graph, which is utilised after to assist the label propagation step (Sec.~\ref{subsec:compatible_LP}).
An empirical study in Sec.~\ref{subsec:experimental_results_analysis} empirically discusses the quality of estimated $\mathbf{H}$ and its influence on the model performances.

%------------------------------------------------------------------------------

%------------------------------------------------------------------------------
\section{An Experimental Investigation}
\label{sec:experimental_investigation}

In this section, we conduct an empirical study to motivate the design of our approach. 
Unlike the classic \textit{macro}-level node classification evaluation method, we provide a different way to understand existing models' \textit{micro}-level effectiveness. 
The main idea of this experiment is to study how different models perform at the level of an individual node depending on the homophily ratio of the 1-hop subgraphs.
We define homophily ratio of an individual node $h_v$ as follows: 
\begin{equation}
\label{eq:h_v}
    h_v = \frac{|\{(u, v): (u,v) \in \mathcal{E}_v \wedge y_{u}=y_{v} \}|}{|\mathcal{E}_v|}
\end{equation}
% $h_v = \frac{|\{(u, v): (u,v) \in \mathcal{E}_v \wedge y_{u}=y_{v} \}|}{|\mathcal{E}_v|}$
where $\mathcal{E}_v$ is the edge set of the induced 1-hop neighbourhood of $v$. 

We take two graphs as examples: a heterophily graph Wiki with $h=0.30$ and a homophily graph ACM with $h=0.82$. 
Following the \textit{medium splitting} (Sec.~\ref{subsec:experimental_setup} includes details settings), we train different models on the training nodes of a graph and compute predictions for the test nodes. 
We then aggregate the accuracy of predictions for each level of homophily, $\{0, 0.1, \ldots, 0.9, 1\}$, and plot the obtained results in Fig.~\ref{fig:acc_by_subh}. 
Global accuracy across all test nodes can be found in Tab.~\ref{table:NC_experimental_results_hete} and Fig.~\ref{fig:performance_homophily_graphs}.

Results from Tab.~\ref{table:NC_experimental_results_hete} and Fig.~\ref{fig:performance_homophily_graphs} demonstrate that in general GNNs outperform multi-layer perceptron (MLP)~\citep{R61}.
However, if we zoom in on local neighbourhoods, as shown in Fig.~\ref{fig:acc_by_subh}, the results of MLP are often better than those of GNNs when the homophily ratio of a node's $1$-hop subgraph is low. 

In particular, we can see from Fig.~\ref{fig:acc_by_subh} that (i) vanilla GCN has superior accuracy for nodes with strong subgraph homophily ratio ($h_{v} \geq 0.7$) on both graphs; other advanced GNN models mainly improve the classification accuracy over nodes with low $h_{v}$, and (ii) MLP is relatively stable across different homophily ratio $h_v$ and is a better model for nodes with low $h_{v}$ compared to other GNNs. 
For instance, MLP achieves the best accuracy on nodes with $h_{v} \leq 0.3$ on Wiki graph and nodes with $h_{v} \leq 0.6$ on ACM graph. 

This illustrates that recent GNN models designed for heterophilous graphs do not outperform MLP for nodes with a considerable fraction of neighbours with opposite class labels; instead, they have better global accuracy than MLP by having better accuracy on nodes with high homophily ratio $h_v$. 
Based on this evidence, we propose a simple but effective approach that mainly relies on the predictions of MLP to maintain its favourable performance on nodes with low homophily $h_{v}$ and that further ameliorates the classification results by incorporating the knowledge of the graph structure. 

%------------------------------------------------------------------------------

%------------------------------------------------------------------------------
\section{Compatible Label Propagation with Heterophily}
\label{sec:framework}
\begin{figure}[!ht]
\centering
\includegraphics[width=.55\linewidth]{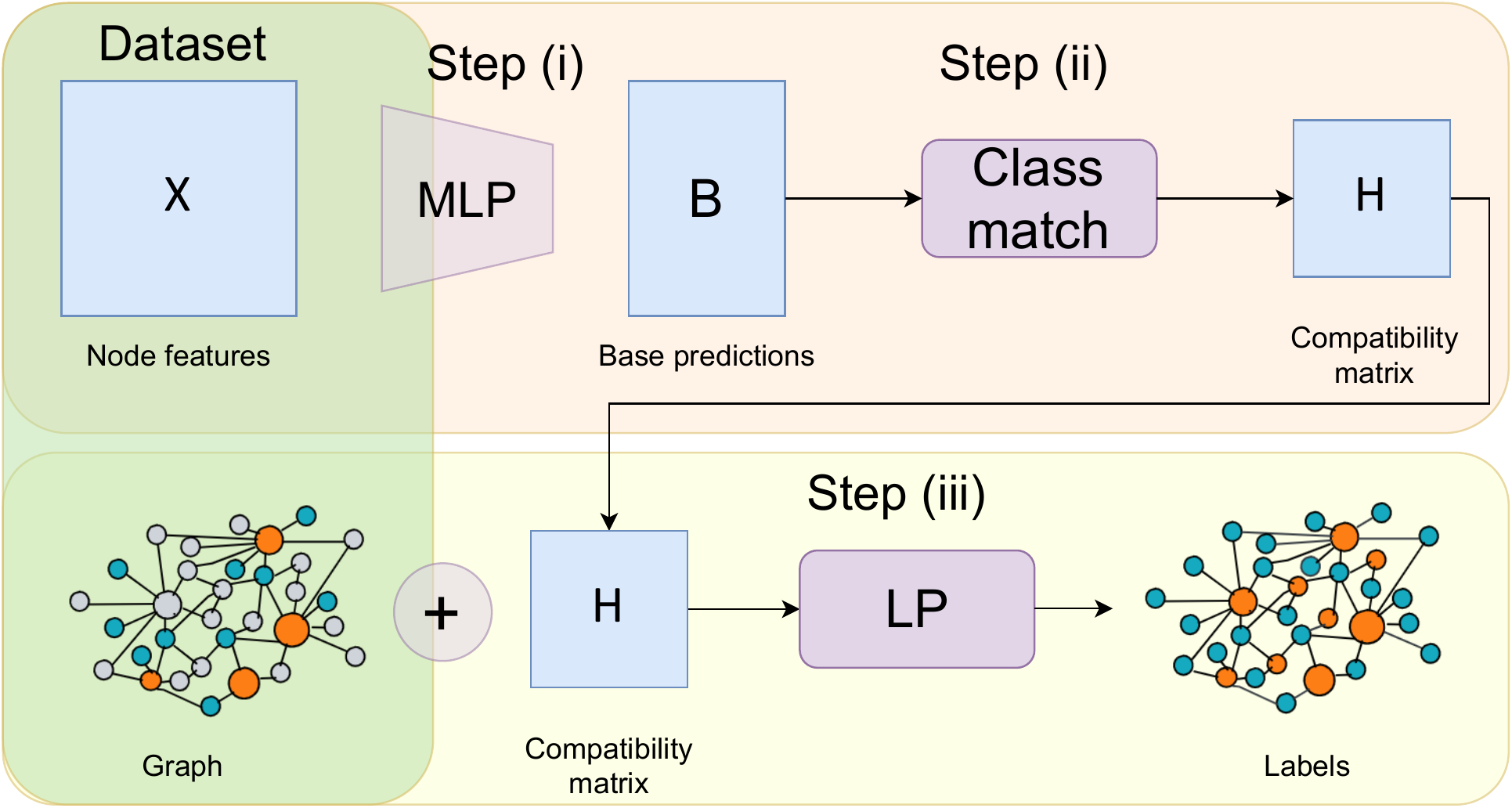}
\caption{
Overview of Compatible Label Propagation (CLP) model. Step (i): base predictor, MLP, makes class predictions for each node using only node features. 
Step (ii): global compatibility matrix between classes is computed with Eq.~\ref{eq:comp_matrix}. 
Step (iii): propagate class predictions with LP algorithm and get the classes for test nodes. 
Intuitively, compatibility matrix measures the weighted probabilities of any two target classes being connected, and as such, it defines the edge weights in LP algorithm.
}
% \vspace{-3mm}
\label{fig:pipeline}
\end{figure}

Our approach starts with a simple base predictor on raw node features, which does not rely on any learning over the topological structure. 
Any off-the-shelf graph-agnostic model can be plugged in to become a base predictor, which enables our approach to accommodate any node features. 
After, we propose an approach to estimate the compatibility matrix $\mathbf{H}$ of the overall graph and apply it to calculate the relation between each pair of nodes. 
Finally, we use label propagation algorithm with an estimated compatibility matrix to smooth the prior prediction probabilities on the weighted graph to get the final predictions.

\subsection{Simple Base Predictor}
\label{subsec:simple_base_predictor}
To start, we use a simple base predictor that does not rely on graph structure to learn prior predictions. 
Specifically, we train a model $f_{\theta}$ to minimise $\sum_{v \in \mathcal{T}^{t}_{\mathcal{V}}} \mathcal{L}(f_{\theta}(\mathbf{x}_{v}), y_{v})$, where $\mathbf{x}_{v}$ is the available feature of node $v$ and $y_{v}$ is its true class label, $\mathcal{L}$ is a loss function. 
In this paper, we adopt a simple multi-layer perceptron (MLP)~\citep{R61} as the base predictor, where $\ell$-th layer can be formally formulated as following:
\begin{equation}
\label{eq:base_predictor}
    \mathbf{D}^{(\ell)} = \sigma(\mathbf{D}^{(\ell-1)} \mathbf{W}^{(\ell)} + \mathbf{b}^{(\ell)})
\end{equation}
where $\mathbf{W}^{(\ell)}$ are learnable parameters and $\mathbf{b}^{(\ell)}$ is the bias vector. 
$\sigma$ is the activation function (e.g. \textsc{ReLU}), and we initialise $\mathbf{D}^{(0)}=\mathbf{X}$.

From $f_{\theta}$, we get a prior prediction $\hat{\mathbf{D}} = \mathrm{Softmax}(\mathbf{D}^{(L)}) \in \mathbb{R}^{\vert\mathcal{V}\vert \times \vert\mathcal{Y}\vert}$, where $\ell = L$ is the last layer. 
Omitting the graph for the prior predictions brings several benefits: 
(i) it avoids the sensitivity to homophily/heteriophily of the graph (as was shown in Fig.~\ref{fig:accuracy_diff_methods_diff_syn_data}, MLP's performance maintains good stability for graphs with different homophily ratios); 
and (ii) it significantly reduces the number of parameters that we need to learn, thus accelerating the approach (as shown in Fig.~\ref{fig:classification_accuracy_execution_time}). 
Next, we use MLP's predictions to estimate the weights for label propagation algorithm.

\subsection{Estimation of Compatibility Matrix}
\label{subsec:estim_CM}
The focal idea of compatibility matrix is summarising the relative frequencies of classes between neighbours. 
Under the semi-supervised node classification settings, we only know the class labels of a small fraction of nodes ($\mathcal{T}^{t}_{\mathcal{V}}$). 
We derive the preliminary class labels of unknown nodes ($\mathcal{V} \setminus (\mathcal{T}_{\mathcal{V}} \cup \mathcal{T}^{v}_{\mathcal{V}}$)) as the base prediction $\hat{\mathbf{D}}$. 
Note that we treat validation set nodes as unknown nodes, which will be used to evaluate the performance of LP step and pick up the better final predictions. 
More specifically, denote the training mask $\mathbf{M}$ as: $[\mathbf{M}]_{i,:} = \begin{cases} \mathbf{1}, & \text{if}\ i\in \mathcal{T}^{t}_{\mathcal{V}} \\ \mathbf{0}, & \text{otherwise} \end{cases}$. 
The preliminary knowledge of class labels can be formally represented as: 
\begin{equation}
\label{eq:preliminary_knowledge}
    \hat{\mathbf{B}}^{(0)} = \mathbf{M} \circ \mathbf{Y} + (1 - \mathbf{M}) \circ \hat{\mathbf{D}}
\end{equation}
% $\hat{\mathbf{B}} = \mathbf{M} \circ \mathbf{Y} + (1 - \mathbf{M}) \circ \hat{\mathbf{D}}$ 
where $\circ$ is the \textit{Hadamard} (element-wise) product,
$\mathbf{Y} \in \mathbb{R}^{|\mathcal{V}| \times |\mathcal{Y}|}$ and $\mathbf{Y}_{vj}=1$ if $y_{v}=j$, otherwise $\mathbf{Y}_{vj}=0$. 

Next, we estimate a compatibility matrix $\hat{\mathbf{H}}$ that calculates the probability that a training node of one class is connected with a node of another class. 
\begin{equation}
\label{eq:comp_matrix}
    \hat{\mathbf{H}} = \mathcal{S}((\mathbf{M} \circ \mathbf{Y})^\top \mathbf{A} \hat{\mathbf{B}}^{(0)})
\end{equation}
where $\mathcal{S}$ is the Sinkhorn-Knopp function that ensures $\hat{\mathbf{H}}$ is doubly-stochastic~\citep{SK67}. 

A compatibility matrix $\hat{\mathbf{H}}$ can be seen as a multiplication of two matrices, ($\mathbf{M} \circ \mathbf{Y})^\top$ and $\mathbf{A} \hat{\mathbf{B}}^{(0)}$. 
The matrix ($\mathbf{M} \circ \mathbf{Y})^\top$ represents one-hot encoded class labels of training nodes only. 
In turn, the matrix $\mathbf{A} \hat{\mathbf{B}}^{(0)}$ computes the sum of class probabilities across all neighbours of each node. 
After multiplication of these two matrices, each entry $(i, j)$ of $(\mathbf{M} \circ \mathbf{Y})^\top \mathbf{A} \hat{\mathbf{B}}^{(0)}$ represents a score that a class $i$ among training nodes is connected with a node of class $j$ estimated with prior probabilities $\hat{\mathbf{D}}$. 
A function $\mathcal{S}$ converts these scores back to probabilities such that each entry $(i, j)$ of $\hat{\mathbf{H}}$ indicates a probability that a class $i$ is connected with class $j$. 

\subsection{Compatible Label Propagation}
\label{subsec:compatible_LP}
After obtaining the estimation $\hat{\mathbf{H}}$, we propagate the knowledge about node class labels with the guide of $\hat{\mathbf{H}}$ over the graph. 
The key idea of our method is that the edge weight of a message $u \mapsto v$ in label propagation algorithm depends on both predicted classes of sending and receiving nodes. 
That contrasts with previous works~\citep{GGKF15,ZRRMLAK21} where edge weight depends only on the sending node class probabilities. 
In particular, for each edge $(i,j)$, we define an edge weight as follows:
\begin{equation}
\label{eq:edge_weight}
    [\mathbf{F}]_{ij} = ([\hat{\mathbf{B}}^{(0)}]_i \hat{\mathbf{H}}) \circ [\hat{\mathbf{B}}^{(0)}]_j
\end{equation}
Intuitively, edge weight $[\mathbf{F}]_{ij}$ depends on the probabilities that node $i$ is connected with some class $k$, $([\hat{\mathbf{B}}^{(0)}]_i \hat{\mathbf{H}})$, and the probabilities that node $j$ has the same class $k$.
Naturally, we can assign the edge weights to corresponding positions of adjacent matrix to get $\mathbf{A}^{\mathbf{F}} \in \mathbb{R}^{n \times n \times \vert\mathcal{Y}\vert}$, where $[\mathbf{A}]^{\mathbf{F}}_{ij} = [\mathbf{F}]_{ij}$. 

Let $\hat{\mathbf{B}}$ and $\hat{\mathbf{D}}$ be the final node class prediction matrix and the base prediction, respectively.
$\mathbf{A}^{\mathbf{F}}$ is the fixed weighted adjacent matrix. 
Then, the final node classifications are approximated by the equation system:
% We spread node class probabilities with the propagation weight matrix as follows:
\begin{equation}
\label{eq:approximated_propagation}
    % \hat{\mathbf{B}} = (1-\alpha)\hat{\mathbf{D}} + \alpha \; \mathcal{F}(\mathbf{A}^{\mathbf{F}} \oplus \hat{\mathbf{B}})
    \hat{\mathbf{B}} = (1-\alpha)\hat{\mathbf{D}} + \alpha \; \mathbf{A}^{\mathbf{F}} \oplus \hat{\mathbf{B}}
\end{equation}
where [$\mathbf{A}^{\mathbf{F}} \oplus \hat{\mathbf{B}}]_{:,k} = \mathbf{A}^{\mathbf{F}}_{k} \; [\hat{\mathbf{B}}]_{:,k}$ 
and $\mathbf{A}^{\mathbf{F}}_{k}$ means the weighted adjacent matrix with $k$-th dimensional edge weights.
% $\mathcal{F}$ is a function that guarantees that each message is a probability distribution and 
$\alpha$ is a hyperparameter, which defines how much update to the previous state each label propagation step makes. 

% \smallskip\noindent
\textbf{Iterative updates.}
Notice that Eq.~\ref{eq:approximated_propagation} gives an implicit definition of the final node classification after convergence, it can also be used as iterative update equations, allowing an iterative calculation of the final node classification predictions:
\begin{equation}
\label{eq:iteration}
    \hat{\mathbf{B}}^{(r+1)} \leftarrow (1-\alpha)\hat{\mathbf{D}} + \alpha\mathbf{A}^{\mathbf{F}} \oplus \hat{\mathbf{B}}^{(r)}
\end{equation}
Thus, the final node classification predictions can be computed via linear matrix operations.
Note that previous works~\citep{GGKF15,ZRRMLAK21} compute the compatibility matrix $\hat{\mathbf{H}}$ for LP as follows:
\begin{equation}
\label{eq:iteration_simple}
    \hat{\mathbf{B}}^{(r+1)} \leftarrow (1-\alpha)\hat{\mathbf{D}} + \alpha \mathbf{A}\hat{\mathbf{B}}^{(r)} \hat{\mathbf{H}}
\end{equation}
\begin{wrapfigure}{r}{0.45\textwidth}
% \begin{figure}
\centering
% \vspace{-5mm}
\includegraphics[width=1.\linewidth]{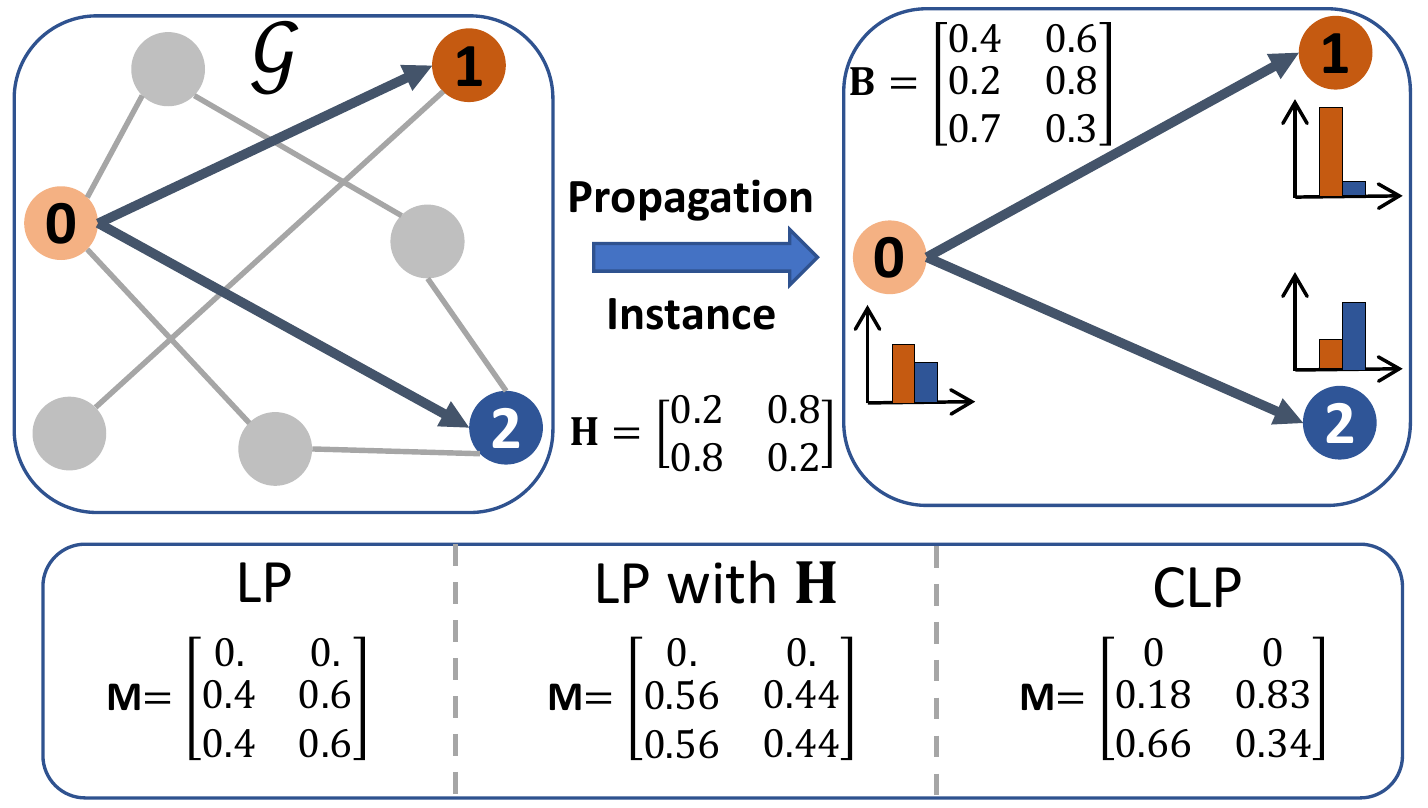}
\caption{
Comparison of three propagation schemes, $\mathbf{M}$ represents the received messages after one propagation iteration.
In LP nodes $\mathbf{1}$ and $\mathbf{2}$ receive the same message; LP with $\mathbf{H}$ overturns the prior prediction of node $\mathbf{1}$; CLP adapts the heterophily of the graph and reassures confident prior predictions. 
}
\vspace{-10mm}
\label{fig:heterophily_propagation}
% \end{figure}
\end{wrapfigure}
Eq.~\ref{eq:iteration_simple} defines an edge weight by the relation between the sending node and $\hat{\mathbf{H}}$. Hence, receiving nodes get the same message from a sending node regardless of the class of the receiving nodes.
We argue that the proper weight of a message should be determined by both sending and receiving nodes (Fig.~\ref{fig:heterophily_propagation}). 
Appendix~\ref{sec:appendix_comparison_diff_lp_methods} presents a detailed comparison between different LP-related methods, and we empirically demonstrate the advantages of CLP in Sec.~\ref{subsec:experimental_results_analysis}.  

\subsection{Theoretical Analysis of CLP}
\label{subsec:theoretical_analysis}

Eq.~\ref{eq:iteration} allows solving CLP Eq.~\ref{eq:approximated_propagation} via iterative updates. Here, we show an alternative method that provides a closed-form solution, which in turn sets convergence guarantees of CLP for each class $k$. 
We start by defining vectorisation of a matrix $\mathbf{X}$, which stacks columns of $\mathbf{X}$ side-by-side.
\begin{definition}[Matrix Vectorisation~\citep{HS81}]
\label{def:matrix_vectorisation}
Vectorisation of an $m \times n$ matrix $\mathbf{X}$ is an $mn \times 1$ vector given by:
\begin{equation}
\label{eq:matrix_vectorisation}
    vec(\mathbf{X}) = [\mathbf{x}_{11}, \dots, \mathbf{x}_{n1}, \mathbf{x}_{12}, \dots, \mathbf{x}_{n2}, \dots, \mathbf{x}_{1n}, \dots, \mathbf{x}_{nn}]^{\mathrm{T}}
\end{equation}

\end{definition}
Additionally, the Kronecker product of $\mathbf{X}$ and $\mathbf{Q}$ is the $mp \times nq$ matrix is defined by:
\begin{equation}
    \mathbf{X} \otimes \mathbf{Q} = 
    \begin{bmatrix}
    \mathbf{x}_{11}\mathbf{Q} & \mathbf{x}_{12}\mathbf{Q} & \dots & \mathbf{x}_{1n}\mathbf{Q} \\
    \mathbf{x}_{21}\mathbf{Q} & \mathbf{x}_{22}\mathbf{Q} & \dots & \mathbf{x}_{2n}\mathbf{Q} \\
    \vdots & \vdots & \ddots & \vdots \\
    \mathbf{x}_{m1}\mathbf{Q} & \mathbf{x}_{m2}\mathbf{Q} & \dots & \mathbf{x}_{mn}\mathbf{Q} \\
    \end{bmatrix}
\end{equation}
We are now ready to give a closed-form solution to Eq.~\ref{eq:approximated_propagation}:
\begin{proposition}[Closed-form CLP]
\label{proposition:closed-form_solution_clp}
The closed-form solution for CLP (Eq.~\ref{eq:approximated_propagation}) for class $k$ is given by:
\begin{equation}
\label{eq:closed_form_solution}
    vec([\hat{\mathbf{B}}]_{:,k}) = (\mathbf{I} - \alpha \; (\mathbf{I} \otimes \mathbf{A}^{\mathbf{F}}_k))^{-1} (1-\alpha) \; vec([\hat{\mathbf{D}}]_{:,k})
\end{equation}
% where $\mathbf{I} \odot \mathbf{A}^{\mathbf{F}} = \mathop{\parallel}\limits_{k \in \vert\mathcal{Y}\vert} \mathbf{I} \otimes \mathbf{A}^{\mathbf{F}}_k$. 
\end{proposition}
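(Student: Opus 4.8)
The plan is to reduce the matrix fixed-point equation (Eq.~\ref{eq:approximated_propagation}) to $|\mathcal{Y}|$ decoupled linear systems, one per class, and solve each by a direct matrix inversion. First I would fix a class index $k$ and read off the $k$-th column of both sides of Eq.~\ref{eq:approximated_propagation}. By the defining property $[\mathbf{A}^{\mathbf{F}} \oplus \hat{\mathbf{B}}]_{:,k} = \mathbf{A}^{\mathbf{F}}_k\,[\hat{\mathbf{B}}]_{:,k}$, the $k$-th column of the right-hand side depends only on the $k$-th columns of $\hat{\mathbf{B}}$ and $\hat{\mathbf{D}}$, so the columns genuinely decouple and each obeys
\[
[\hat{\mathbf{B}}]_{:,k} = (1-\alpha)[\hat{\mathbf{D}}]_{:,k} + \alpha\,\mathbf{A}^{\mathbf{F}}_k\,[\hat{\mathbf{B}}]_{:,k}.
\]
Rearranging gives the $n\times n$ system $(\mathbf{I} - \alpha\,\mathbf{A}^{\mathbf{F}}_k)\,[\hat{\mathbf{B}}]_{:,k} = (1-\alpha)\,[\hat{\mathbf{D}}]_{:,k}$.

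The second step is to invert $\mathbf{I} - \alpha\,\mathbf{A}^{\mathbf{F}}_k$. I would argue this matrix is nonsingular exactly when $1/\alpha$ is not an eigenvalue of $\mathbf{A}^{\mathbf{F}}_k$, i.e.\ when $\alpha\,\rho(\mathbf{A}^{\mathbf{F}}_k) < 1$ in the regime $\alpha \in (0,1)$, which is precisely the condition under which the Neumann series $\sum_{r\ge 0}(\alpha\,\mathbf{A}^{\mathbf{F}}_k)^r$ converges and equals $(\mathbf{I} - \alpha\,\mathbf{A}^{\mathbf{F}}_k)^{-1}$. Multiplying through by this inverse yields $[\hat{\mathbf{B}}]_{:,k} = (1-\alpha)(\mathbf{I} - \alpha\,\mathbf{A}^{\mathbf{F}}_k)^{-1}[\hat{\mathbf{D}}]_{:,k}$, establishing both existence and uniqueness of the CLP solution per class.

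The final step is essentially cosmetic: recast the column-vector identity in the $vec/\otimes$ notation of Definition~\ref{def:matrix_vectorisation}. Since $[\hat{\mathbf{B}}]_{:,k}$ and $[\hat{\mathbf{D}}]_{:,k}$ are already columns, $vec(\cdot)$ acts as the identity on them and the Kronecker factor collapses to $\mathbf{I}\otimes\mathbf{A}^{\mathbf{F}}_k$ acting on the stacked column, reproducing Eq.~\ref{eq:closed_form_solution}; equivalently one applies the standard rule $vec(\mathbf{A}^{\mathbf{F}}_k \mathbf{X}) = (\mathbf{I}\otimes\mathbf{A}^{\mathbf{F}}_k)\,vec(\mathbf{X})$ before the rearrangement. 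To connect with the iterative scheme (Eq.~\ref{eq:iteration}), I would unroll the recursion as $[\hat{\mathbf{B}}^{(r+1)}]_{:,k} = (1-\alpha)\sum_{s=0}^{r}(\alpha\,\mathbf{A}^{\mathbf{F}}_k)^s[\hat{\mathbf{D}}]_{:,k} + (\alpha\,\mathbf{A}^{\mathbf{F}}_k)^{r+1}[\hat{\mathbf{B}}^{(0)}]_{:,k}$ and let $r\to\infty$: the residual term vanishes and the partial sums converge to the closed form if and only if $\rho(\alpha\,\mathbf{A}^{\mathbf{F}}_k) < 1$, giving the advertised necessary-and-sufficient convergence characterisation.

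I do not expect a serious obstacle, as the argument is elementary linear algebra; the only point that requires care is the spectral-radius bookkeeping for $\mathbf{A}^{\mathbf{F}}_k$. One must either assume or verify — from the boundedness of the entries of $\hat{\mathbf{B}}^{(0)}$ (row-stochastic) and the double-stochasticity of $\hat{\mathbf{H}}$, which bound the edge weights $[\mathbf{F}]_{ij}$, hence the operator norm of $\mathbf{A}^{\mathbf{F}}_k$ by a degree-type quantity — that $\alpha$ can be taken below $1/\rho(\mathbf{A}^{\mathbf{F}}_k)$, and to make explicit that this same bound is necessary, not merely sufficient, for the iterates to converge.
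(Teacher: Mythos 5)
Your proposal is correct and follows essentially the same route as the paper's proof: decouple Eq.~\ref{eq:approximated_propagation} per class via $[\mathbf{A}^{\mathbf{F}}\oplus\hat{\mathbf{B}}]_{:,k}=\mathbf{A}^{\mathbf{F}}_k[\hat{\mathbf{B}}]_{:,k}$, rearrange to a linear system, and pass to the $vec$/Kronecker form using $vec(\mathbf{A}^{\mathbf{F}}_k\mathbf{X})=(\mathbf{I}\otimes\mathbf{A}^{\mathbf{F}}_k)vec(\mathbf{X})$, which is exactly the Roth's-column-lemma step the paper invokes. Your added Neumann-series/spectral-radius discussion makes the invertibility caveat explicit (the paper defers this to the surrounding text and to Proposition~\ref{lemma:exact_convergence}) but does not change the argument.
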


Proof of Proposition~\ref{proposition:closed-form_solution_clp} refers to Appendix~\ref{sec:appendix_proof}. 

Therefore, instead of iterative updates Eq.~\ref{eq:iteration_simple}, we can compute the final node predictions in a closed-form by using Eq.~\ref{eq:closed_form_solution}, as long as the inverse of the matrix exists. Based on this closed-form solution we next establish necessary and sufficient criteria for convergence.

% \smallskip\noindent
\textbf{Convergence of iterative CLP}.
We remind that spectral radius of a matrix $\mathbf{X}$ is the maximum eigenvalue, i.e. $\rho(\mathbf{X}) = \mathrm{max}(\{\vert \lambda_1 \vert, \dots,  \vert \lambda_n \vert\})$.
With Eq.~\ref{eq:closed_form_solution} we are now ready to establish convergence guarantees for CLP. 
\begin{proposition}[Convergence of CLP]
\label{lemma:exact_convergence}
For class $k$, CLP iterative updates Eq.~\ref{eq:closed_form_solution} converge if and only if $\rho(\mathbf{A}^{\mathbf{F}}_k) < \alpha^{-1}$. 
\end{proposition}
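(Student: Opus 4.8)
The plan is to reduce the convergence question to a standard fact about linear fixed-point iterations and then invoke the closed-form solution of Proposition~\ref{proposition:closed-form_solution_clp}. The iterative update Eq.~\ref{eq:iteration}, restricted to class $k$ and written in vectorised form, reads $vec([\hat{\mathbf{B}}^{(r+1)}]_{:,k}) = (1-\alpha)\, vec([\hat{\mathbf{D}}]_{:,k}) + \alpha\,(\mathbf{I} \otimes \mathbf{A}^{\mathbf{F}}_k)\, vec([\hat{\mathbf{B}}^{(r)}]_{:,k})$, which is an affine iteration $x^{(r+1)} = c + \mathbf{T} x^{(r)}$ with $\mathbf{T} = \alpha\,(\mathbf{I} \otimes \mathbf{A}^{\mathbf{F}}_k)$ and $c = (1-\alpha)\, vec([\hat{\mathbf{D}}]_{:,k})$.

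First I would recall the classical result that such an affine iteration converges (to the unique fixed point) for every starting vector if and only if $\rho(\mathbf{T}) < 1$; the ``if'' direction follows by writing the error $e^{(r)} = x^{(r)} - x^\ast$ and noting $e^{(r)} = \mathbf{T}^r e^{(0)} \to 0$ since $\rho(\mathbf{T})<1$ implies $\|\mathbf{T}^r\| \to 0$, while the ``only if'' direction follows because if $\rho(\mathbf{T}) \geq 1$ one can pick $e^{(0)}$ along an eigenvector of a dominant eigenvalue so that $\mathbf{T}^r e^{(0)}$ does not vanish. Second, I would translate $\rho(\mathbf{T}) < 1$ into the stated condition: since $\rho(\alpha\,(\mathbf{I}\otimes\mathbf{A}^{\mathbf{F}}_k)) = \alpha\,\rho(\mathbf{I}\otimes\mathbf{A}^{\mathbf{F}}_k)$ (for $\alpha>0$) and the eigenvalues of a Kronecker product $\mathbf{I}\otimes\mathbf{A}^{\mathbf{F}}_k$ are the products of eigenvalues, which here are just the eigenvalues of $\mathbf{A}^{\mathbf{F}}_k$ (each with multiplicity equal to the size of $\mathbf{I}$), we get $\rho(\mathbf{I}\otimes\mathbf{A}^{\mathbf{F}}_k) = \rho(\mathbf{A}^{\mathbf{F}}_k)$. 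Hence $\rho(\mathbf{T}) < 1 \iff \alpha\,\rho(\mathbf{A}^{\mathbf{F}}_k) < 1 \iff \rho(\mathbf{A}^{\mathbf{F}}_k) < \alpha^{-1}$, which is exactly the claim.

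Finally, I would tie the fixed point to the closed form: when $\rho(\mathbf{A}^{\mathbf{F}}_k) < \alpha^{-1}$, the matrix $\mathbf{I} - \alpha\,(\mathbf{I}\otimes\mathbf{A}^{\mathbf{F}}_k)$ is invertible (none of its eigenvalues $1-\alpha\lambda$ vanishes), so the unique fixed point is precisely the expression in Eq.~\ref{eq:closed_form_solution}, and the iteration converges to it; conversely, if the spectral radius condition fails, the iteration diverges for generic initialisation, so no meaningful convergence guarantee holds. The main obstacle — really the only non-bookkeeping point — is making the spectral-radius/Kronecker-product identity precise, i.e. justifying $\rho(\mathbf{I}\otimes\mathbf{A}^{\mathbf{F}}_k) = \rho(\mathbf{A}^{\mathbf{F}}_k)$ and handling the edge case $\alpha = 0$ (trivial) versus $\alpha \in (0,1]$; everything else is the standard linear-iteration argument. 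I would also note carefully that $\mathbf{A}^{\mathbf{F}}_k$ need not be symmetric, so the argument must use general spectral radius facts rather than anything specific to symmetric matrices.
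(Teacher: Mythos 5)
Your proposal is correct and follows essentially the same route as the paper's proof: both reduce the class-$k$ update to a standard affine fixed-point iteration $x^{(r+1)} = c + \mathbf{T}x^{(r)}$, invoke the classical criterion that such an iteration converges for all initialisations iff $\rho(\mathbf{T})<1$, and conclude via $\rho(\alpha\,\mathbf{T}) = \alpha\,\rho(\mathbf{T})$. The only cosmetic difference is that the paper applies $\mathbf{A}^{\mathbf{F}}_k$ directly to the column vector $[\hat{\mathbf{B}}]_{:,k}$ (so the Kronecker identity $\rho(\mathbf{I}\otimes\mathbf{A}^{\mathbf{F}}_k)=\rho(\mathbf{A}^{\mathbf{F}}_k)$ you flag as the main obstacle is not even needed there), and it states the spectral-radius criterion by citation rather than spelling out the error-propagation and divergence directions as you do.
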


% 
% \begin{proof}[Proof of Lemma~\ref{lemma:exact_convergence}]
% First, following the Jacobi method~\citep{S03},  we notice that the solution to Eq.~\ref{eq:closed_form_solution} can be expressed via an iterative form: 
% \begin{equation}
% \label{eq:iterate_class_k}
%     vec(\hat{\mathbf{B}}^{(l+1)}) \leftarrow (1-\alpha) \; vec(\hat{\mathbf{D}}) + \alpha \mathbf{A}^{\mathbf{F}} vec(\hat{\mathbf{B}}^{(l)})
% \end{equation}
% These equations converge for any choice of initial value of $\mathbf{B_0}$ if and only if spectral radius of matrix $\alpha \mathbf{A}^{\mathbf{F}}$ is less than one. 
% Moreover, from Proposition~\ref{proposition:closed-form_solution_clp} we know that the convergence guarantees for CLP Eq.~\ref{eq:approximated_propagation} are equivalent to the closed-form solution Eq.~\ref{eq:closed_form_solution}. 
% Hence, Eq.~\ref{eq:iterate_class_k} converges if and only if the spectral radius of $\rho(\alpha\mathbf{A}^{\mathbf{F}}) = \alpha\rho(\mathbf{A}^{\mathbf{F}}) < 1 \iff \rho(\mathbf{A}^{\mathbf{F}}) < \alpha^{-1} $.
% \end{proof}

Proof of Proposition~\ref{lemma:exact_convergence} refers to Appendix~\ref{sec:appendix_proof}. 

As computing the largest eigenvalue may be too expensive for large graphs, following the Gershgorin circle theorem~\citep{W03}, one can replace the spectral norm with any sub-multiplicative norm that is faster to compute and give an upper bound to the spectral radius. 
For $\parallel\mathbf{X}\parallel_{p} = (\sum_i \sum_j \vert\mathbf{X}(i, j)\vert^p)^{1/p}$ we have $\rho(\mathbf{X}) \le \parallel\mathbf{X}\parallel_{2} \le \parallel\mathbf{X}\parallel_{1}$. 
Hence, one can use Frobenius or 1-induced norm to efficiently check if the sufficient condition for convergence is satisfied. 
In our experiments, we found that CLP converges for all datasets.

\subsection{Summary}
\label{subsec:summary}
To review our approach, we start with a base predictor, which purely learns from node features to make node class label predictions. 
Next, we estimate the global compatibility matrix $\hat{\mathbf{H}}$ based on the sparsely labelled graph and base predictions. 
$\hat{\mathbf{H}}$ describes the overall possibility of nodes belonging to each pair of classes to connect, which can be utilised to estimate the relationship between each pair of base prediction vectors. 
Finally, we perform an efficient LP step to smooth base predictions and obtain class labels with the assistance of the relationship between each pair of nodes.

Compared with existing GNN models, CLP similarly benefits from both node features and graph structure, yet separates them into two processes. 
It is motivated by the investigation in Sec.~\ref{sec:experimental_investigation} that MLP has better accuracy over other GNN models for nodes with low homophily $h_v$.
Hence we would like to maintain MLP's advantages and utilise graph structure to improve it to obtain final predictions. 
Following this way, both node features and graph structure are appropriately involved in our approach, and it only requires learning parameters specified by a base predictor.
Next, we are going to demonstrate the competitive performances of CLP on node classification tasks.

%------------------------------------------------------------------------------

%------------------------------------------------------------------------------
\section{Experiments}
\label{sec:experiments}
\begin{table*}[!ht]
\centering
% \vspace{-2mm}
\caption{
Statistics for six synthetic datasets. 
(Prod) means contextual node features come from Ogbn-Products, or adopt the statistic features designed by 2D Gaussians.
}
\label{table:synthetic-data}
{
\resizebox{1\textwidth}{!}{
\begin{tabular}{ccrccr}
\toprule 
\textbf{Benchmark Name} & \textbf{\#Nodes $\vert\mathcal{V}\vert$}  & \textbf{\#Edges $\vert\mathcal{E}\vert$} & \textbf{\#Classes $\vert\mathcal{Y}\vert$} & \textbf{Homophily $h$} & \textbf{\#Avg. Degree} \\ 
\midrule
Syn-(Prod)-1 & $10,000$ & $49,446$ to $50,352$ & $10$ & [0, 0.1, \ldots, 1] & $4.95$ to $5.02$ \\
Syn-(Prod)-2 & $10,000$ & $99,556$ to $99,556$ & $10$ & [0, 0.1, \ldots, 1] & $9.96$ to $10.01$ \\
Syn-(Prod)-3 & $10,000$ & $149,090$ to $15,1494$ & $10$ & [0, 0.1, \ldots, 1] & $14.91$ to $15.15$ \\
\bottomrule
\end{tabular}
}
}
% \vspace{-3mm}
\end{table*}

To validate our approach's effectiveness, we first empirically demonstrate the performance of CLP and state of the art (SOTA) models on real-world and synthetic datasets with a wide variety of settings. 
Second, we compare the number of required parameters, the quality of compatibility estimation, the models' execution time, and their performance on different graphs with different label rates. 
Third, we empirically show the advantages of our propagation method compared with the previous design.
We also study the influence of different label rates on the compatibility matrix estimation and classification accuracy and show the efficiency of CLP in terms of the model size. 

\subsection{Datasets}
\label{subsec:datasets}

\textbf{Real-world datasets}.
We use a total of $19$ real-world datasets (Texas, Wisconsin, Actor, Squirrel, Chameleon, USA-Airports, Brazil-Airports, Wiki, Cornell, Europe-Airports, deezer-europe, Twitch-EN, Twitch-RU, Ogbn-Proteins, WikiCS, DBLP, CS, ACM, Physics) in diverse domains (web-page, citation, co-author, flight transport, biomedical and online user relation). 
Note that we use ROC-AUC as the evaluation metric for the class imbalanced datasets, i.e., Twitch-EN, Twitch-RU and Ogbn-Proteins, following~\citet{LHLHGBL21}. 
For other datasets, we use node classification accuracy as our general evaluation metric. 
See Appendix~\ref{sec:appendix_realdata_details} for detailed descriptions, statistics and references. 

% \smallskip\noindent
\textbf{Synthetic datasets}.
We generate random synthetic graphs with various homophily ratios $h$ and node features by adopting a similar approach~\citep{APKALHSG19,KO21} but with some modifications.
For instance, synthetic graphs~\citep{APKALHSG19} have no available contextual node attributes.
Specifically, each synthetic graph has $10$ classes and $1,000$ nodes per class. 
Nodes are assigned random features sampled from 2D Gaussians (Syn) or  contextual features from real-world datasets~\citep{HFZDRLCL20} (Syn-Prod). 
Except for the homophily ratio, we also control the average degree of each graph (around $5$, $10$ or $15$) to investigate the performance with respect to graph sparsity.
Here, we give detailed descriptions of the generation process.

% \smallskip\noindent
\textit{Graph generation}.
We generate synthetic graph $\mathcal{G}$ of $\vert\mathcal{V}\vert$ nodes with $\vert\mathcal{Y}\vert$ different class labels, and $\mathcal{G}$ has $\vert\mathcal{V}\vert/\vert\mathcal{Y}\vert$ nodes per class. 
$\vert\mathcal{V}\vert$ and $\vert\mathcal{Y}\vert$ are two prescribed numbers to determine the size of $\mathcal{G}$. 
A synthetic graph's homophily ratio $h$ is mainly controlled by $p_{in}$ and $p_{out}$, where $p_{in}$ means the possibility of existing an edge between two nodes with the same label and $p_{out}$ is the possibility of existing an edge between two nodes with different class labels. 
Furthermore, the average degree of $\mathcal{G}$ is $d_{avg}=\vert\mathcal{V}\vert/\vert\mathcal{Y}\vert \cdot \delta$, where $\delta = p_{in} + (\vert\mathcal{Y}\vert-1) \cdot p_{out}$. 
Following the described graph generation process, with given $\vert\mathcal{V}\vert$, $\mathcal{Y}$ and $d_{avg}$, we choose $p_{in}$ from $\{0.0001\delta, 0.1\delta, 0.2\delta, \dots, 0.9\delta, 0.9999\delta \}$. 
Note that the synthetic graph generation process requires both $p_{in}$ and $p_{out}$ are positive numbers, hence we use $p_{in}=0.0001\delta$ and $0.9999\delta$ to estimate $h=0$ and $h=1$ cases, respectively. 

% \smallskip\noindent
\textit{Node features generation}.
In order to comprehensively evaluate the performances of different models, we assign each node with statistic features (Syn) or real-world contextual node features (Syn-Prod). 
For graphs with statistic node features, the feature values of nodes are sampled from 2D Gaussian~\citep{APKALHSG19}. 
The mean of Gaussian can be described in polar coordinates: each means has radius $300$ and angle $\frac{2\pi}{10} \times (class \; id)$.
The covariance matrix of each class is $3500 \times diag[7,2]$, that is rotated by angle $\frac{2\pi}{10} \times (class \; id)$. 
For datasets with real-world contextual node features, we first establish a class mapping $\psi: \mathcal{Y} \to \mathcal{Y}_{b}$ between classes in the synthetic graph $\mathcal{Y}$ to classes of existing benchmark graph $\mathcal{Y}_{b}$. 
The only requirement for the target graph dataset is that the class size and node set size in the benchmark is larger than that of the synthetic graph, i.e., $\vert\mathcal{Y}\vert_{b} \leq \vert\mathcal{Y}\vert$ and $\vert\mathcal{V}\vert \leq \vert\mathcal{V}\vert_{b}$.
In this paper, we adopt the large-scale benchmark, Ogbn-Products~\citep{HFZDRLCL20}.

\subsection{Experimental Setup}
\label{subsec:experimental_setup}
\textbf{Baseline methods.}
We compare our model against state-of-the-art graph neural networks and related node classification methods for all datasets under fair settings. 
Specifically, MLP~\citep{R61} is the baseline model that only utilises node attributes, while LINK~\citep{ZG09} only utilises graph structure. 
Meanwhile, we also adopt general GNN models with underlying homophily assumption: GCN~\citep{KW17}, GAT~\citep{VCCRLB18} and GCN2~\citep{CWHDL20}. 
Moreover, we adopt several models that are designed for heterophily graphs: Mixhop~\citep{APKALHSG19}, SuperGAT~\citep{KO21}, GPRGNN~\citep{CPLM21}, FAGCN~\citep{BWSS21}, H2GCN~\citep{ZYZHAK20} and CPGNN~\citep{ZRRMLAK21}. 
At last, two LP-based models: LP~\citep{Z05} and C\&S~\citep{HHSLB21}.

% \smallskip\noindent
\textbf{Implementation and splits.}
We follow the experimental setup of FAGCN and CPGNN with minor adjustments. Specifically, our experimental setup examines the semi-supervised node classification in the transductive setting. 
We consider three different choices for the random split into training/validation/test settings, which we call \textit{sparse} splittings (5\%/5\%/90\%), \textit{medium} splitting (10\%/10\%/80\%) and \textit{dense} splitting (48\%/32\%/20\%), respectively. 
The \textit{sparse} splitting (5\%/5\%/90\%) is similar to the original semi-supervised setting in~\citet{KW17}, but we do not restrict each class to have the same number of training instances since it is the case closer to the real-world application. 
For a fair comparison, we generate $10$ fixed split instances with different splitting and results are summarised after $10$ runs with random seeds. 
Note that the Ogbn-Proteins dataset adopts its default splitting settings. 
Other model setups and hyperparameter settings can be found in Appendix~\ref{sec:appendix_experimental_setup_hyperparam_tune}. 
Our implementation is available at \url{https://github.com/zhiqiangzhongddu/TMLR-CLP}. 

\subsection{Results on Real-world Graphs}
\label{subsec:experimental_results_realworld}

\begin{table*}[!ht]
\caption{
Summary of node classification results on \textit{heterophily} graphs under \textit{medium} splitting. 
\textsuperscript{\ddag} indicates the results from~\citet{LHLHGBL21}.
% More results with various train ratios refer to Appendix~\ref{sec:appendix_realdata_results}.
Top-$2$ performances per benchmark are highlighted in
\colorbox{green!20}{ } and \colorbox{blue!10}{ }, respectively.
% OOM: out-of-memory. 
}
% \vspace{-3mm}
\label{table:NC_experimental_results_hete}
% \small
\centering
\resizebox{1.\textwidth}{!}{
\begin{tabular}{l cccccccccccccc|c}
% \hline
\toprule
        & \textbf{Texas} & \textbf{Wisconsin} & \textbf{Actor} & \textbf{Squirrel} & \textbf{Chameleon} & \textbf{USA-A.} & \textbf{Bra.-A.} & \textbf{Wiki} & \textbf{Cornell} & \textbf{Eu.-A.} & \textbf{deezer} & \textbf{Tw.-EN} & \textbf{Tw.-RU} & \textbf{O.-Proteins} & \textbf{Rank} \\
\textbf{Hom.R} $h$ & \textbf{0.06} & \textbf{0.17} & \textbf{0.22} & \textbf{0.22} & \textbf{0.23} & \textbf{0.25} & \textbf{0.29} & \textbf{0.30} & \textbf{0.30} & \textbf{0.31} & \textbf{0.53} & \textbf{0.60} & \textbf{0.639} & \textbf{--} & \\
% 
% $\parallel\mathbf{A}^{\mathbf{F}}\parallel_{1}$ & \textit{0.06} & \textit{0.17} & \textit{0.22} & \textit{0.22} & \textit{0.23} & \textit{0.25} & \textit{0.29} & \textit{0.30} & \textit{0.30} & \textit{0.31} & \textit{0.53} & \textit{0.60} & \textit{0.639} & \textit{0.00}\\
\midrule
\midrule
        % & \multicolumn{13}{c}{\textbf{Train ratio $10\%$}}  \\
MLP     & \cellcolor{blue!10}67.94${\scriptstyle\pm3.87}$ & 69.32${\scriptstyle\pm3.33}$ & 32.07${\scriptstyle\pm0.72}$ & 26.18${\scriptstyle\pm0.81}$ & 35.94${\scriptstyle\pm1.47}$ &  54.92${\scriptstyle\pm2.34}$ & \cellcolor{blue!10}59.52${\scriptstyle\pm11.66}$ & 70.13${\scriptstyle\pm1.18}$ & 68.19${\scriptstyle\pm4.55}$ & \cellcolor{blue!10}50.41${\scriptstyle\pm3.24}$ & 63.77${\scriptstyle\pm0.30}$ & 59.56${\scriptstyle\pm0.92}$ & 49.33${\scriptstyle\pm1.55}$ & 73.43$\scriptstyle\pm0.12$\textsuperscript{\ddag}  & 3 \\
LINK & 59.52${\scriptstyle\pm4.11}$ & 47.79${\scriptstyle\pm7.05}$ & 24.03${\scriptstyle\pm0.61}$ & \cellcolor{green!20}46.02${\scriptstyle\pm0.96}$ & \cellcolor{green!20}58.28${\scriptstyle\pm1.53}$ & 24.71${\scriptstyle\pm1.17}$ & 27.97${\scriptstyle\pm4.42}$ & 25.07${\scriptstyle\pm1.16}$ & 46.47${\scriptstyle\pm12.28}$ & 29.59${\scriptstyle\pm3.55}$ & 55.95${\scriptstyle\pm0.34}$ & 55.65${\scriptstyle\pm1.02}$ & 51.27${\scriptstyle\pm1.20}$ & 63.49$\scriptstyle\pm0.02$\textsuperscript{\ddag} & 10 \\
\midrule
GCN     & 54.17${\scriptstyle\pm3.18}$ & 47.55${\scriptstyle\pm3.50}$ & 26.82${\scriptstyle\pm0.97}$ & 24.71${\scriptstyle\pm0.86}$ & 34.61${\scriptstyle\pm2.93}$ & 30.88${\scriptstyle\pm2.42}$ & 26.84${\scriptstyle\pm5.84}$ & 53.15${\scriptstyle\pm1.51}$ & 55.81${\scriptstyle\pm1.54}$ & 31.65${\scriptstyle\pm4.61}$ & 59.94${\scriptstyle\pm0.55}$ & 59.79${\scriptstyle\pm0.55}$ & 51.51${\scriptstyle\pm1.05}$ & 72.03${\scriptstyle\pm0.32}$\textsuperscript{\ddag} & 10 \\
GAT     & 54.12${\scriptstyle\pm3.25}$ & 48.73${\scriptstyle\pm3.32}$ & 27.37${\scriptstyle\pm1.03}$ & 24.55${\scriptstyle\pm0.90}$ & 36.60${\scriptstyle\pm2.30}$ & 28.13${\scriptstyle\pm4.19}$ & 23.76${\scriptstyle\pm1.57}$ & 47.21${\scriptstyle\pm1.60}$ & 55.18${\scriptstyle\pm2.54}$ & 24.34${\scriptstyle\pm1.21}$ & 56.22${\scriptstyle\pm1.17}$ & 58.66${\scriptstyle\pm0.91}$ & 51.65${\scriptstyle\pm1.78}$ & OOM\textsuperscript{\ddag} & 12 \\
GCN2    & 55.22${\scriptstyle\pm6.17}$ & 47.63${\scriptstyle\pm4.61}$ & 27.14${\scriptstyle\pm0.65}$ & 25.5${\scriptstyle\pm2.08}$ & 36.26${\scriptstyle\pm2.72}$ & 36.59${\scriptstyle\pm3.01}$ & 27.22${\scriptstyle\pm5.35}$ & 60.29${\scriptstyle\pm3.17}$ & 53.87${\scriptstyle\pm6.38}$ & 35.05${\scriptstyle\pm5.86}$ & 62.33${\scriptstyle\pm0.81}$ & 59.66${\scriptstyle\pm0.45}$ & 51.53${\scriptstyle\pm2.36}$ & 74.10$\scriptstyle\pm0.59$ & 8 \\
\midrule
Mixhop & 54.62${\scriptstyle\pm3.49}$ & 51.63${\scriptstyle\pm4.36}$ & 27.46${\scriptstyle\pm1.39}$ & 27.81${\scriptstyle\pm1.13}$ & 38.14${\scriptstyle\pm2.10}$ & 52.68${\scriptstyle\pm1.56}$ & 44.41${\scriptstyle\pm8.22}$ & 61.74${\scriptstyle\pm2.20}$ & 51.29${\scriptstyle\pm7.12}$ & 45.55${\scriptstyle\pm3.88}$ & 64.16${\scriptstyle\pm0.85}$ & 60.38${\scriptstyle\pm0.99}$ & 52.54${\scriptstyle\pm1.49}$ & \cellcolor{blue!10}75.60$\scriptstyle\pm0.85$\textsuperscript{\ddag} & 5 \\
SuperGAT& 54.88${\scriptstyle\pm2.84}$ & 49.94${\scriptstyle\pm3.20}$ & 26.69${\scriptstyle\pm0.62}$ & 24.88${\scriptstyle\pm1.05}$ & 35.49${\scriptstyle\pm2.26}$ & 27.02${\scriptstyle\pm3.86}$ & 23.47${\scriptstyle\pm1.97}$ & 33.23${\scriptstyle\pm1.79}$ & 54.47${\scriptstyle\pm1.79}$ & 24.63${\scriptstyle\pm1.21}$ & 57.07${\scriptstyle\pm0.64}$ & 59.66${\scriptstyle\pm0.46}$ & 50.95${\scriptstyle\pm1.88}$ & OOM & 13 \\
GPRGNN  & 55.31${\scriptstyle\pm3.29}$ & 50.89${\scriptstyle\pm4.00}$ & 27.72${\scriptstyle\pm0.92}$ & 25.29${\scriptstyle\pm1.15}$ & 34.67${\scriptstyle\pm2.82}$ & 41.83${\scriptstyle\pm6.48}$ & 24.85${\scriptstyle\pm3.43}$ & 68.02${\scriptstyle\pm1.30}$ & 55.03${\scriptstyle\pm4.15}$ & 31.47${\scriptstyle\pm5.43}$ & 62.74${\scriptstyle\pm0.39}$ & 59.42${\scriptstyle\pm0.71}$ & 51.17${\scriptstyle\pm1.50}$ & OOM\textsuperscript{\ddag} & 9 \\
FAGCN   & 60.95${\scriptstyle\pm4.05}$ & 63.08${\scriptstyle\pm5.42}$ & 32.60${\scriptstyle\pm0.85}$ & 24.93${\scriptstyle\pm1.10}$ & 36.68${\scriptstyle\pm1.80}$ & \cellcolor{blue!10}56.14${\scriptstyle\pm1.34}$ & 48.19${\scriptstyle\pm12.13}$ & \cellcolor{blue!10}72.12${\scriptstyle\pm0.75}$ & 62.32${\scriptstyle\pm4.32}$ & 48.22${\scriptstyle\pm3.30}$ & \cellcolor{blue!10}65.04${\scriptstyle\pm0.45}$ & \cellcolor{blue!10}60.76${\scriptstyle\pm0.74}$ & 50.19${\scriptstyle\pm2.02}$ & OOM & 2 \\
H2GCN   & 61.29${\scriptstyle\pm5.20}$ & 65.67${\scriptstyle\pm8.51}$ & 32.27${\scriptstyle\pm0.91}$ & 26.95${\scriptstyle\pm1.74}$ & 36.93${\scriptstyle\pm1.73}$ & 54.24${\scriptstyle\pm1.56}$ & 38.95${\scriptstyle\pm8.06}$ & 70.57${\scriptstyle\pm1.23}$ & 57.26${\scriptstyle\pm6.46}$ & 40.56${\scriptstyle\pm4.78}$ & 62.82${\scriptstyle\pm0.68}$ & 59.06${\scriptstyle\pm0.92}$ & 51.22${\scriptstyle\pm1.33}$ & OOM\textsuperscript{\ddag} & 7 \\
CPGNN   & 62.95${\scriptstyle\pm15.24}$ & \cellcolor{blue!10}70.05${\scriptstyle\pm7.30}$ & \cellcolor{blue!10}32.42${\scriptstyle\pm0.65}$ & 28.70${\scriptstyle\pm1.41}$ & \cellcolor{blue!10}47.70${\scriptstyle\pm2.04}$ & 25.21${\scriptstyle\pm1.01}$ & 27.51${\scriptstyle\pm6.56}$ & 70.18${\scriptstyle\pm1.13}$ & \cellcolor{blue!10}68.04${\scriptstyle\pm5.85}$ & 34.86${\scriptstyle\pm1.90}$ & 64.95${\scriptstyle\pm0.39}$ & 57.07${\scriptstyle\pm1.28}$ & \cellcolor{blue!10}52.37${\scriptstyle\pm0.34}$ & OOM & 4 \\
\midrule
LP      & 15.58${\scriptstyle\pm5.47}$ & 11.40${\scriptstyle\pm3.25}$ & 17.69${\scriptstyle\pm0.57}$ & 17.59${\scriptstyle\pm1.30}$ & 20.62${\scriptstyle\pm2.02}$ & 24.35${\scriptstyle\pm1.31}$ & 24.48${\scriptstyle\pm3.22}$ & 23.89${\scriptstyle\pm0.77}$ & 18.51${\scriptstyle\pm3.19}$ & 27.20${\scriptstyle\pm1.74}$ & 55.44${\scriptstyle\pm0.46}$ & 54.42${\scriptstyle\pm0.81}$ & 51.90${\scriptstyle\pm1.40}$ & 75.14$\scriptstyle\pm0.00$\textsuperscript{\ddag} & 14 \\
C\&S    & 66.90${\scriptstyle\pm6.60}$ & 67.34${\scriptstyle\pm7.47}$ & 31.94${\scriptstyle\pm1.30}$ & 26.85${\scriptstyle\pm0.94}$ & 26.85${\scriptstyle\pm0.94}$ & 45.26${\scriptstyle\pm4.70}$ & 55.33${\scriptstyle\pm9.31}$ & 71.49${\scriptstyle\pm1.27}$ & 67.04${\scriptstyle\pm5.29}$ & 37.32${\scriptstyle\pm5.50}$ & 63.92${\scriptstyle\pm0.71}$ & 59.36${\scriptstyle\pm1.84}$ & 52.12${\scriptstyle\pm0.83}$ & 71.13$\scriptstyle\pm0.69$\textsuperscript{\ddag} & 5 \\
\midrule
CLP (Ours) & \cellcolor{green!20}69.63${\scriptstyle\pm3.75}$ & \cellcolor{green!20}72.64${\scriptstyle\pm5.79}$ & \cellcolor{green!20}33.1${\scriptstyle\pm0.65}$ & \cellcolor{blue!10}31.76${\scriptstyle\pm1.03}$ & 43.29${\scriptstyle\pm1.10}$ & \cellcolor{green!20}56.3${\scriptstyle\pm1.44}$ & \cellcolor{green!20}63.53${\scriptstyle\pm11.00}$ & \cellcolor{green!20}74.08${\scriptstyle\pm2.03}$ & \cellcolor{green!20}70.36${\scriptstyle\pm4.83}$ & \cellcolor{green!20}53.83${\scriptstyle\pm2.63}$ & \cellcolor{green!20}65.69${\scriptstyle\pm0.32}$ & \cellcolor{green!20}60.81${\scriptstyle\pm0.78}$ & \cellcolor{green!20}52.78${\scriptstyle\pm0.79}$ & \cellcolor{green!20}75.73$\scriptstyle\pm0.24$ & 1 \\
\bottomrule
\end{tabular}
}
\vspace{-3mm}
\end{table*}

\textbf{Real-world graphs with \textit{heterophily}.}
The performance of diverse methods on heterophily graphs under \textit{medium} splitting is summarised in Tab.~\ref{table:NC_experimental_results_hete}, top-2 performances of each graph are highlighted in colour. 
Advanced GNN models that are designed for heterophily graphs generally perform better than GNNs designed with high-homophily assumption. 
MLP, which only utilises node features, achieves outstanding performances in several benchmarks. 
Our model, CLP, inherits the advantage of MLP but also benefits from graph structure, and it achieves outstanding and stable performance on all heterophily graphs.
Moreover, many baseline methods lead to out-of-memory (OOM) issues on the large dataset, i.e., Ogbn-Proteins, but CLP avoids this problem, demonstrating its memory efficiency. 

\begin{wrapfigure}{r}{0.4\textwidth}
% \begin{figure}[!ht]
\centering
\includegraphics[width=1.\linewidth]{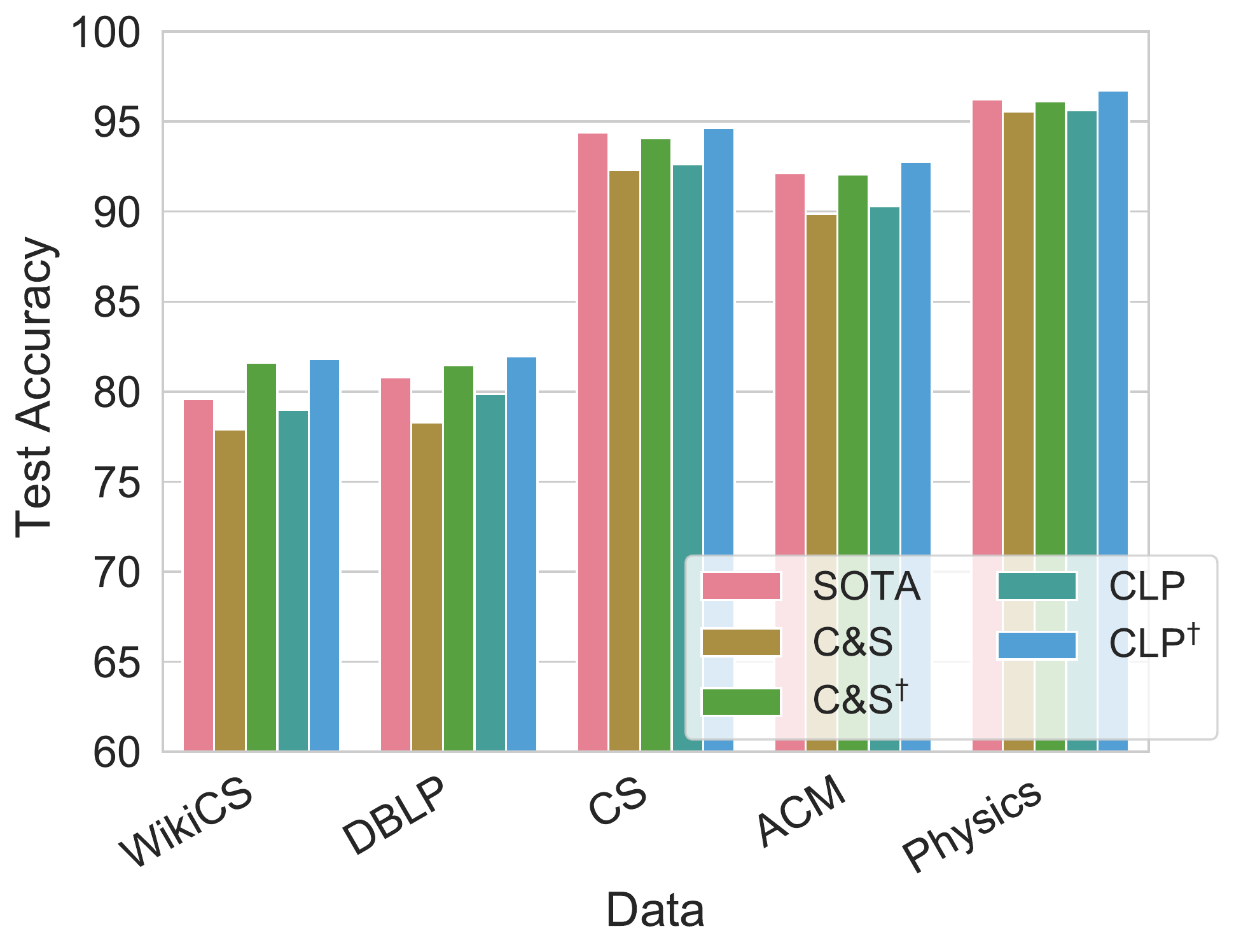}
% \vspace{-5mm}
\caption{
Performance comparison of C\&S and CLP with the best performance of GNN models (SOTA) on \textit{homophily} graphs under \textit{medium} splitting.
}
\vspace{-10mm}
\label{fig:performance_homophily_graphs}
% \end{figure}
\end{wrapfigure}

% \smallskip\noindent
\textbf{Real-world graphs with \textit{homophily}}.
The performance of representative models on homophily graphs under \textit{medium} splitting is summarised in Fig.~\ref{fig:performance_homophily_graphs}. 
Inspired by~\citet{HHSLB21}, we further adopt the spectral and diffusion features as additional node features to C\&S and CLP and compare their performances with the best performance of SOTA GNN models. 
C\&S\textsuperscript{\dag} and CLP\textsuperscript{\dag} refer to performance with additional node features and results from the figure demonstrates that CLP\textsuperscript{\dag} outperforms or matches the SOTA on \textit{homophily} graphs.  

\subsection{Results on Synthetic Graphs}
\label{subsec:experimental_results_syn}

\textbf{Synthetic graphs \textit{without} contextual node features.}
Most previous work~\citep{KW17,BWSS21,ZYZHAK20} on semi-supervised node classification has focused only on graphs with contextual features on the nodes. 
However, the vast majority of graph data does not have node-level contextual features in practical applications, which significantly limits the utility of methods proposed in prior work. 
Besides, several components of our approach depend on node features. For instance, the compatibility matrix estimation ($\hat{\mathbf{H}}$) relies on the prior predictions which are learned from node features. 
$\hat{\mathbf{H}}$ plays a crucial role in the following LP step. Therefore, it is natural to ask how CLP performs over graphs without contextual node features compared with other competitive models? 

To answer this question, we conduct extensive experiments on semi-supervised node classification with \textit{sparse}, \textit{medium} and \textit{dense} splittings on three synthetic datasets with different average degrees. 
For instance, the Syn-1 dataset contains $11$ graphs with $h$ from $0$ to $1$, and the average degree per graph is set to around $5$ ($4.95$ to $5.02$). 
Syn-2 and Syn-3 follow similar settings, but the average degree of each graph is set to $10$ and $15$, respectively.

We present the results of representative models of three synthetic datasets in Fig.~\ref{fig:accuracy_diff_methods_diff_syn_data}-(a, b, c).
We observe similar trends in three figures: CLP has the best trend overall, outperforming SOTA methods in heterophily settings while matching with other SOTA methods in homophily settings. 
The performance of vanilla GCN and GCN2 increases with respect to the homophily level ($h \rightarrow 1$). 
But, while synthetic graphs have no contextual node features, MLP is more accurate than them under strong heterophily ($h \rightarrow 0$). 
From Fig.~\ref{fig:accuracy_diff_methods_diff_syn_data}, we can find that the classification accuracy of MLP has been stable at about $45\%$, a relatively low level. 
Yet, CLP can still achieve the overall best performance. 
Overall, it indicates that our approach works for graphs without contextual features. 

\begin{figure}[!t]
\centering
\begin{subfigure}{.3\linewidth}
    \centering
    \includegraphics[width=\textwidth]{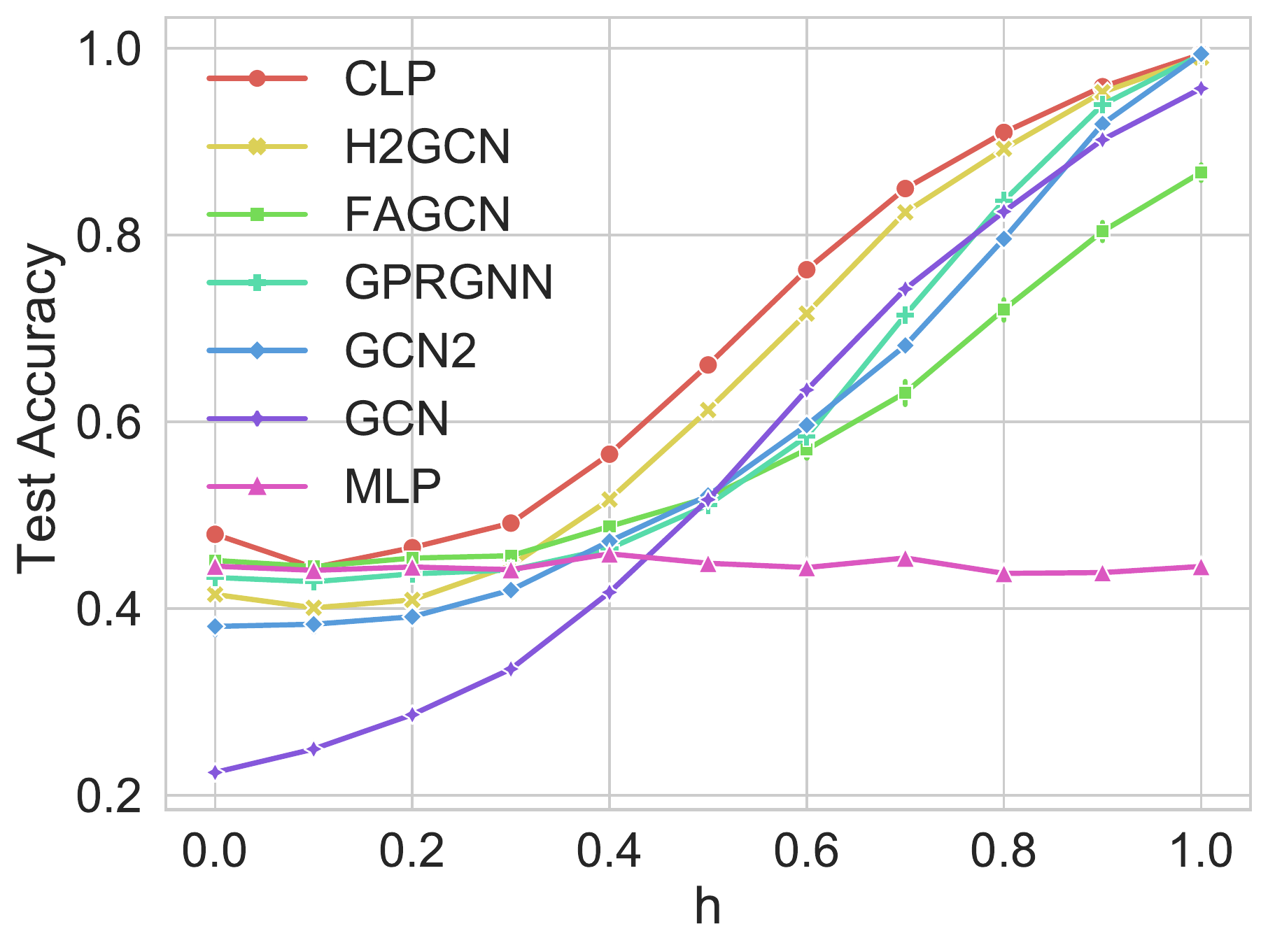}
    \caption{Syn-1, Label rate: 5\%}
    \label{fig:syn-1}
\end{subfigure}
    \hfill
\begin{subfigure}{.3\linewidth}
    \centering
    \includegraphics[width=\textwidth]{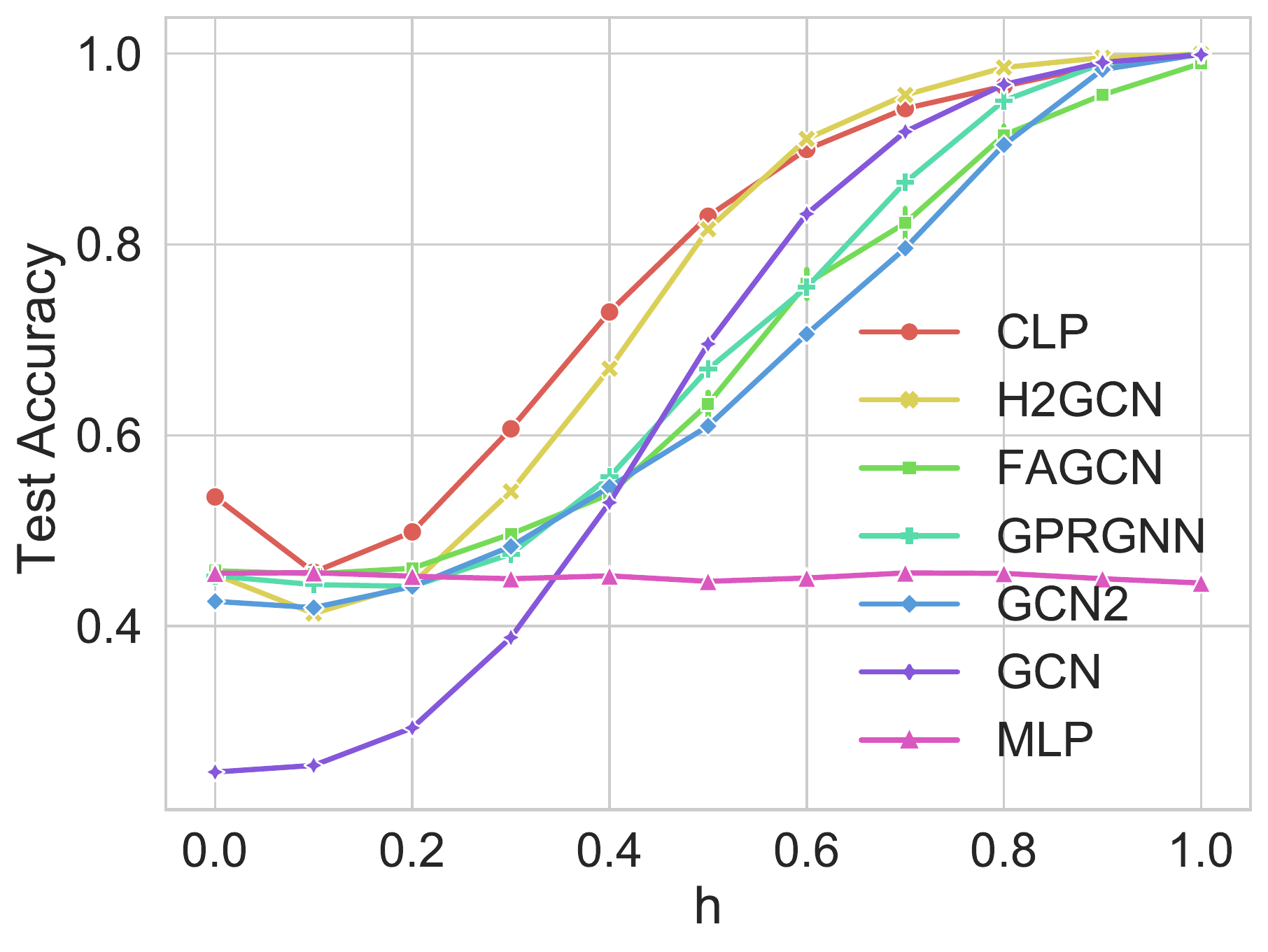}
    \caption{Syn-2, Label rate: 10\%}
    \label{fig:syn-2}
\end{subfigure}
    \hfill
\begin{subfigure}{.3\linewidth}
    \centering
    \includegraphics[width=\textwidth]{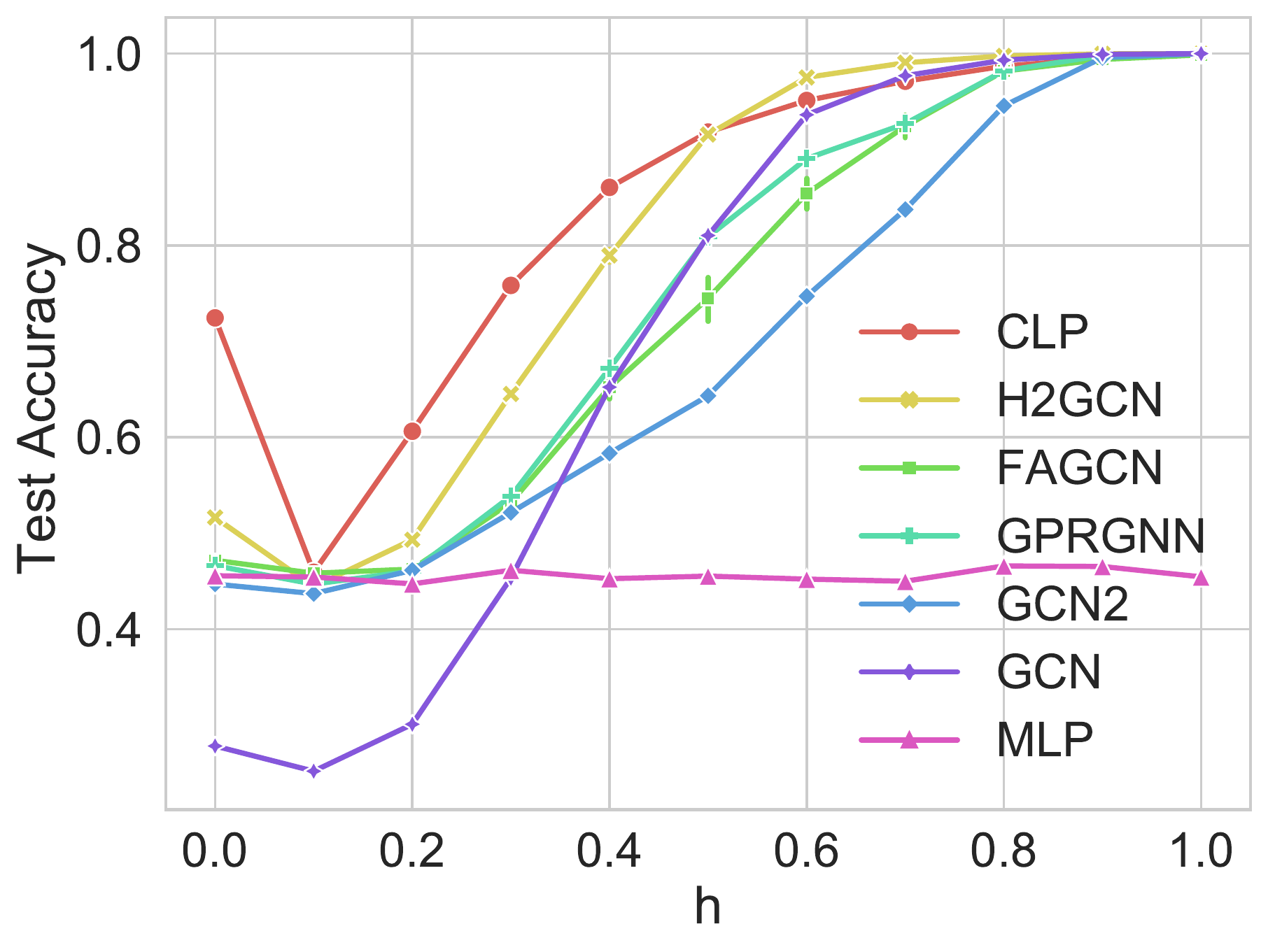}
    \caption{Syn-3, Label rate: 48\%}
    \label{fig:syn-3}
\end{subfigure}
  \bigskip
\begin{subfigure}{.3\linewidth}
    \centering
    \includegraphics[width=\textwidth]{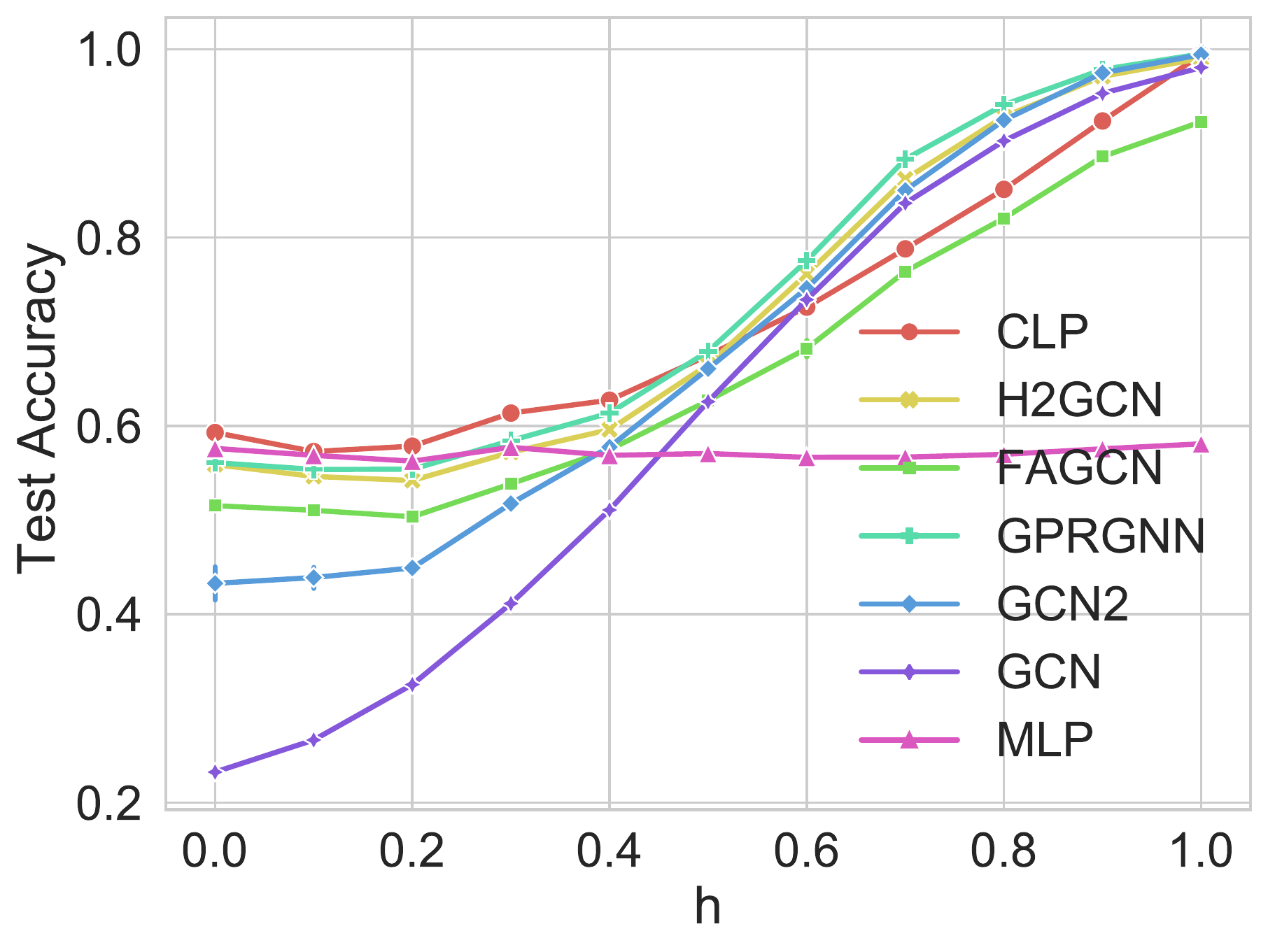}
    \caption{Syn-Prod-1, Label rate: 5\%}
    \label{fig:syn-p-1}
\end{subfigure}
    \hfill
\begin{subfigure}{.3\linewidth}
    \centering
    \includegraphics[width=\textwidth]{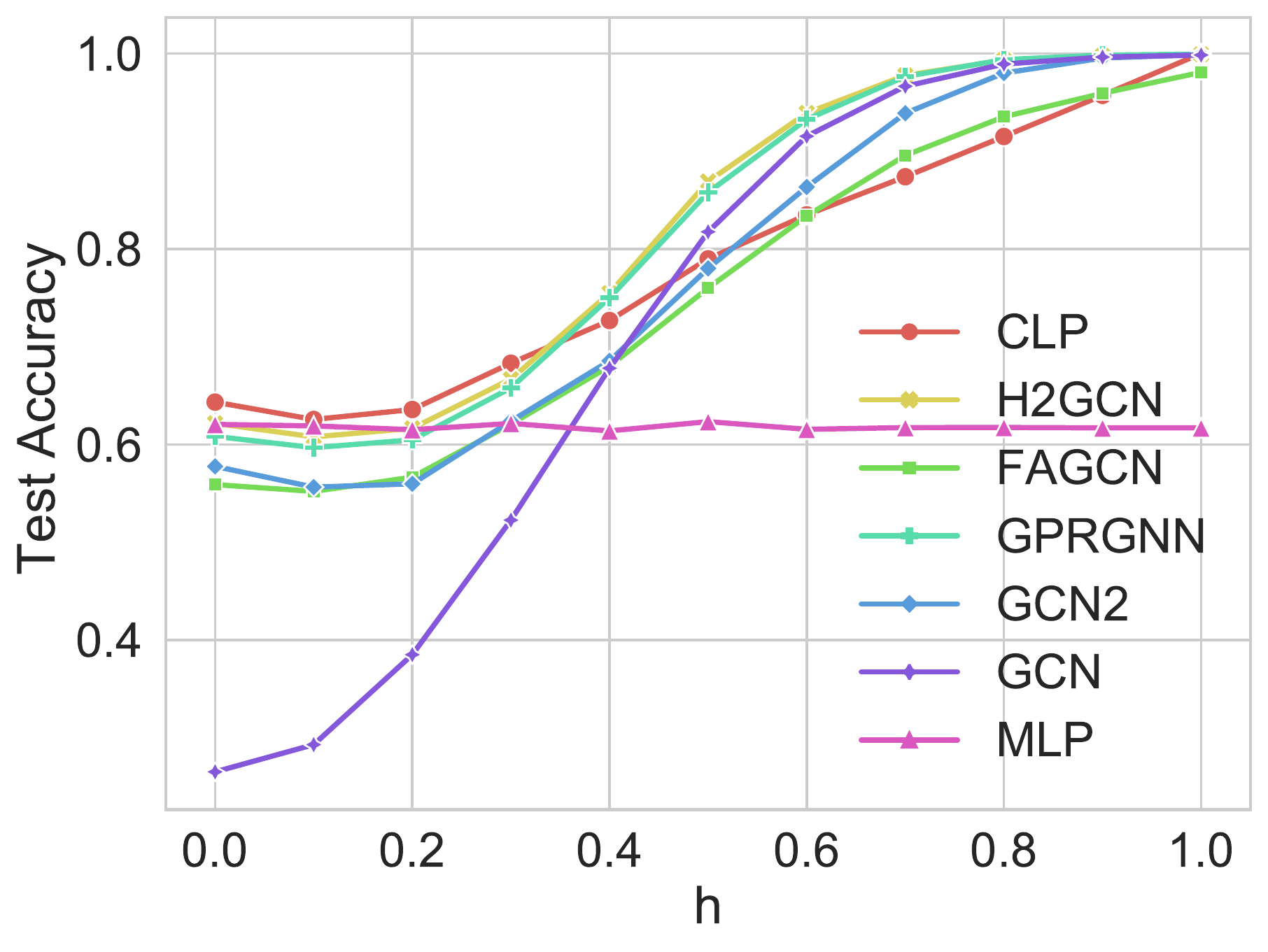}
    \caption{Syn-Prod-2, Label rate: 10\%}
    \label{fig:syn-p-2}
\end{subfigure}
    \hfill
\begin{subfigure}{.3\linewidth}
    \centering
    \includegraphics[width=\textwidth]{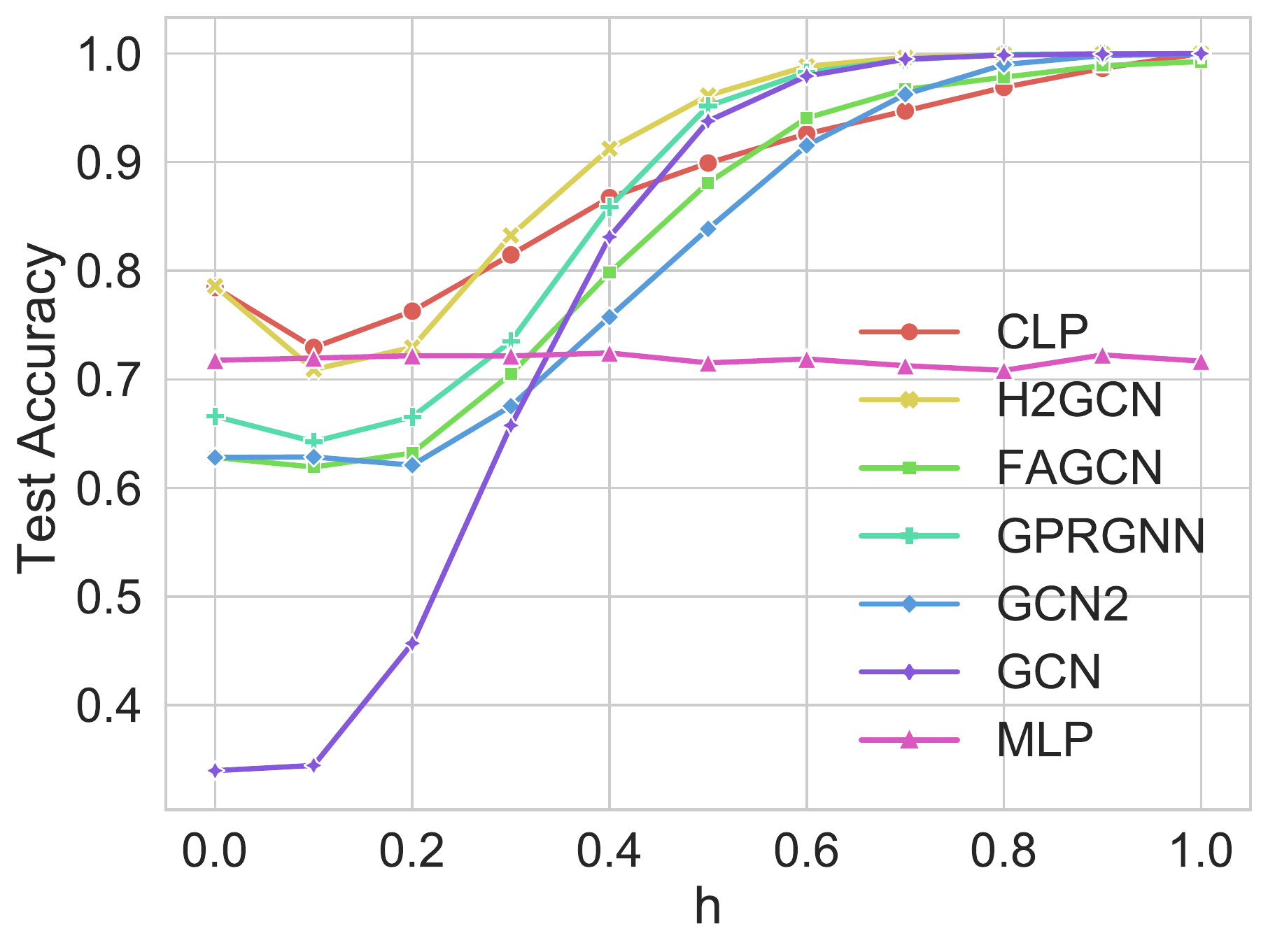}
    \caption{Syn-Prod-3, Label rate: 48\%}
    \label{fig:syn-p-3}
\end{subfigure}
% \vspace{-5mm}
\caption{
Classification accuracy of different methods with different label rates on synthetic datasets.
Only competitive results are presented due to the space limit.
% Appendix~\ref{sec:appendix_syndata_results} contains more results.
}
\vspace{-5mm}
\label{fig:accuracy_diff_methods_diff_syn_data}
\end{figure}

% \smallskip\noindent
\textbf{Synthetic graphs \textit{with} contextual node features.}
We perform extensive experiments on graphs with contextual features to further validate the performance of CLP under various settings. 
\begin{wrapfigure}{r}{0.3\textwidth}
% \begin{figure}[!ht]
\centering
\includegraphics[width=1.\linewidth]{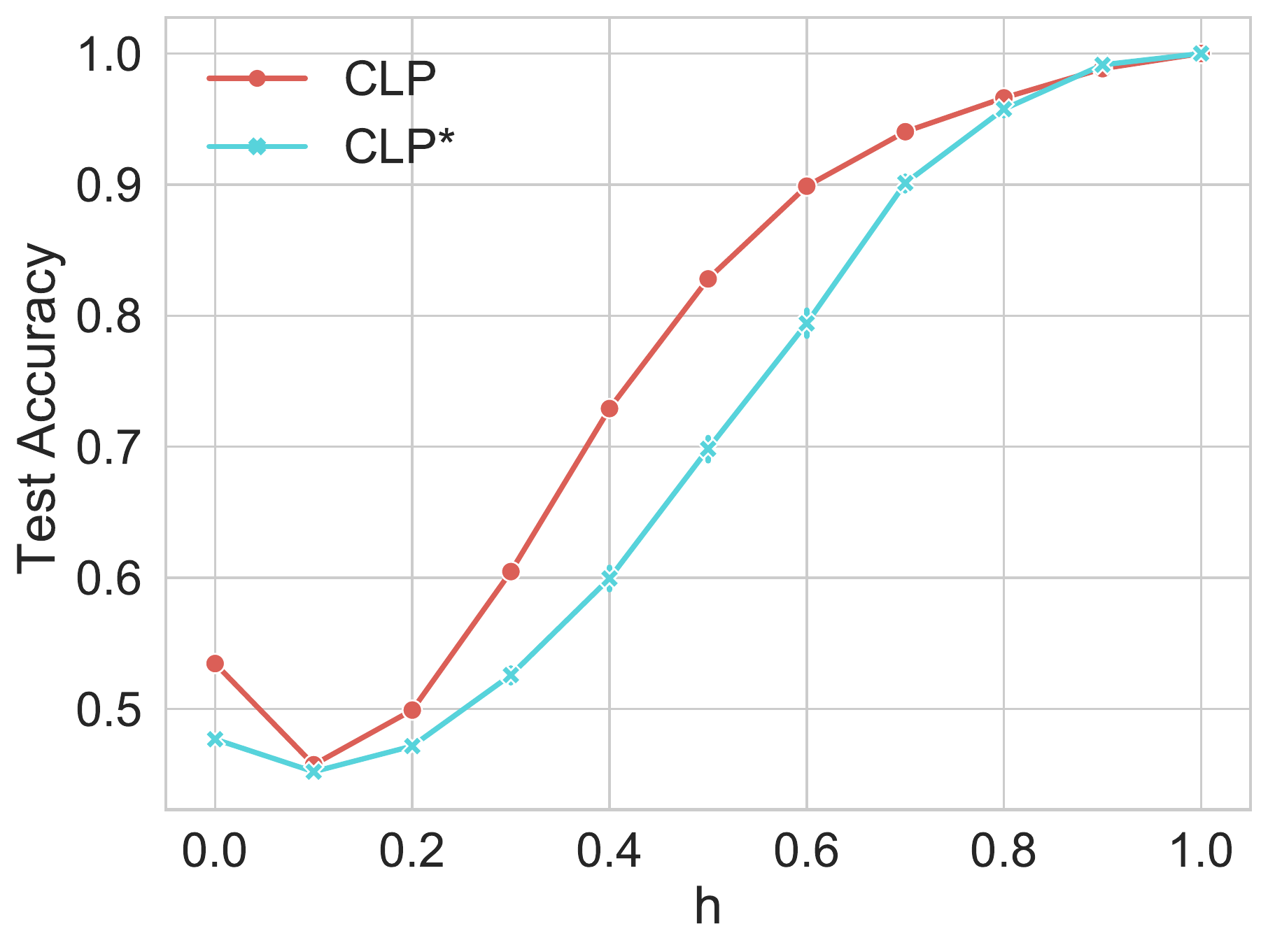}
\caption{
Performance comparison of CLP and CLP* on Syn-1 dataset with \textit{medium} splitting.
}
\vspace{-5mm}
\label{fig:compare_simple}
% \end{figure}
\end{wrapfigure}
Similar to the experiments on synthetic graphs \textit{without} contextual node features, there are three synthetic graphs, i.e., Syn-Prod-1, Syn-Prod-2 and Syn-Prod-3, which have the same graph structure as Syn-1, Syn-2 and Syn-3, but \textit{with} contextual node features. 
Experimental results are presented in Fig.~\ref{fig:accuracy_diff_methods_diff_syn_data}-(d, e, f). 
These figures emphasise that CLP is the best model for most heterophily cases ($h \rightarrow 1$), which again confirms the effectiveness of our approach. 
It echoes the results of the real-world graphs (Tab.~\ref{table:NC_experimental_results_hete}).
Besides, GCN and GCN2, which were proposed with implicit homophily assumption, are significantly less accurate than MLP (near-flat performance curve as it is graph-agnostic) under strong heterophily ($h \leq 0.4$). Such evidence can be found in some cases for other heterophilous GNN models (H2GCN, FAGCN, GPRGNN). 
For instance, they perform significantly better than GCN but are outperformed by MLP on Syn-Prod-1 under $h \leq 0.3$ (Fig.~\ref{fig:syn-p-1}). 
It reaffirms what we found in Sec.~\ref{sec:experimental_investigation}, i.e. MLP could be a better choice for making classification for strong heterophily node groups. 
Our approach, CLP, can consistently achieve better performance than MLP in graphs with any heterophily levels and sparsity levels.

\subsection{Additional Analysis}
\label{subsec:experimental_results_analysis}

% \smallskip\noindent 
\textbf{Comparison between two propagation schemes.}
In Sec.~\ref{subsec:compatible_LP}, we explained the design of our compatible LP process and discussed its advantages over prior work~\citep{ZYZHAK20}. 
The messages between two nodes are adaptively determined by nodes of both ends.
Here, we perform extensive experiments to empirically compare the performance of LP steps with two propagation schemes. 
We choose one synthetic dataset with $11$ graphs under various homophily (Syn-1) under the \textit{medium} splitting. 
Other settings follow the common setup of CLP as described in Sec.~\ref{subsec:experimental_setup}. 
The approach that utilises Eq.~\ref{eq:iteration_simple} named CLP\textsuperscript{*}. 
Their performances are reported in Fig.~\ref{fig:compare_simple}. 
We observe that CLP has the better trend overall, outperforming CLP\textsuperscript{*} in most heterophily settings ($h \leq 0.9$) and matching with CLP\textsuperscript{*} in other settings. 

\begin{figure}[!ht]
\centering
\begin{minipage}{.39\textwidth}
    \centering
    \includegraphics[width=1.\linewidth]{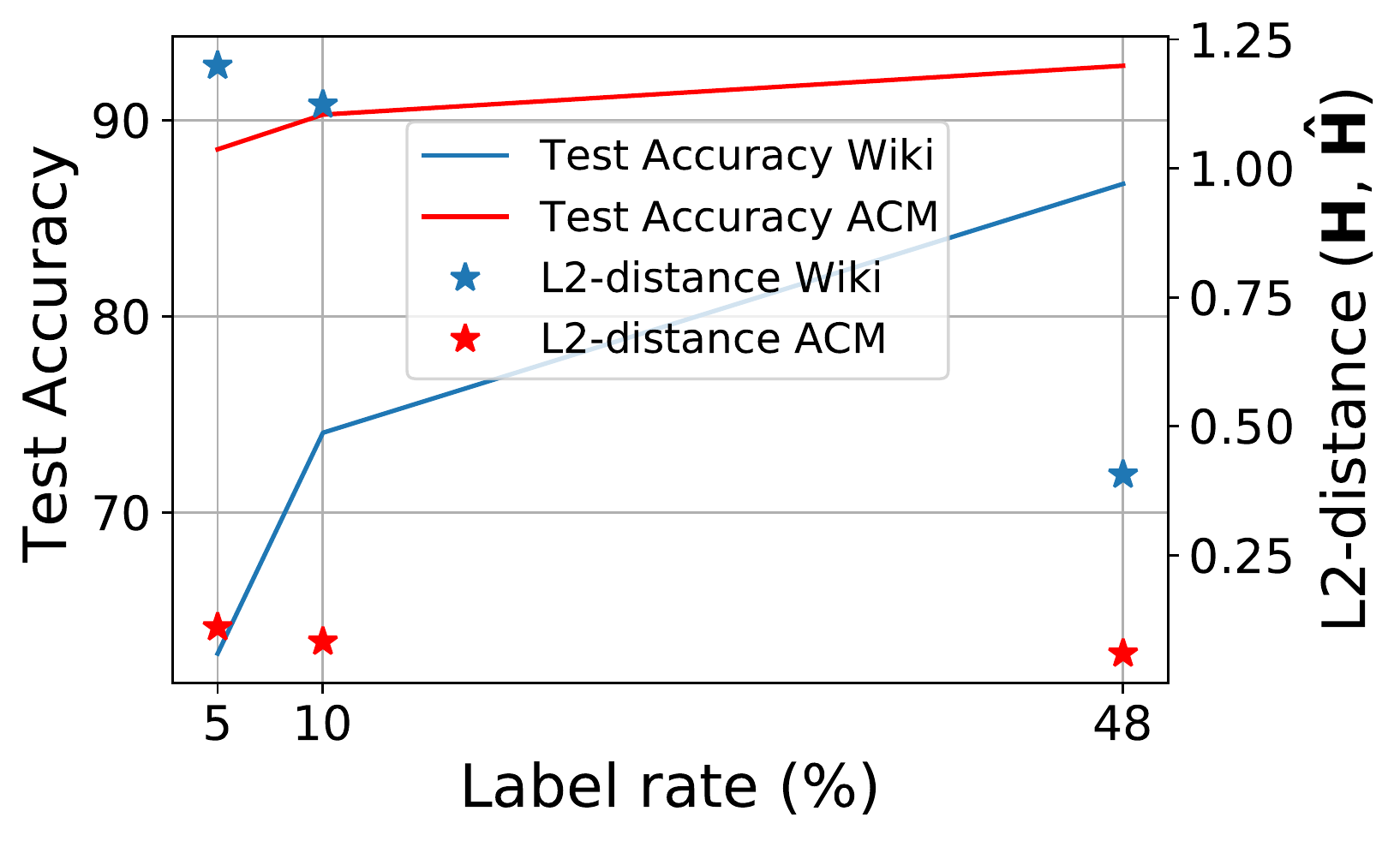}
    \caption{
    Classification accuracy and L2-distance between estimated/true compatibility matrix with different label rates.
    }
    \label{fig:influence_LR}
\end{minipage}
\hfill
\begin{minipage}{.60\textwidth}
    \centering
    \begin{subfigure}{.49\linewidth}
        \centering
        \includegraphics[width=\textwidth]{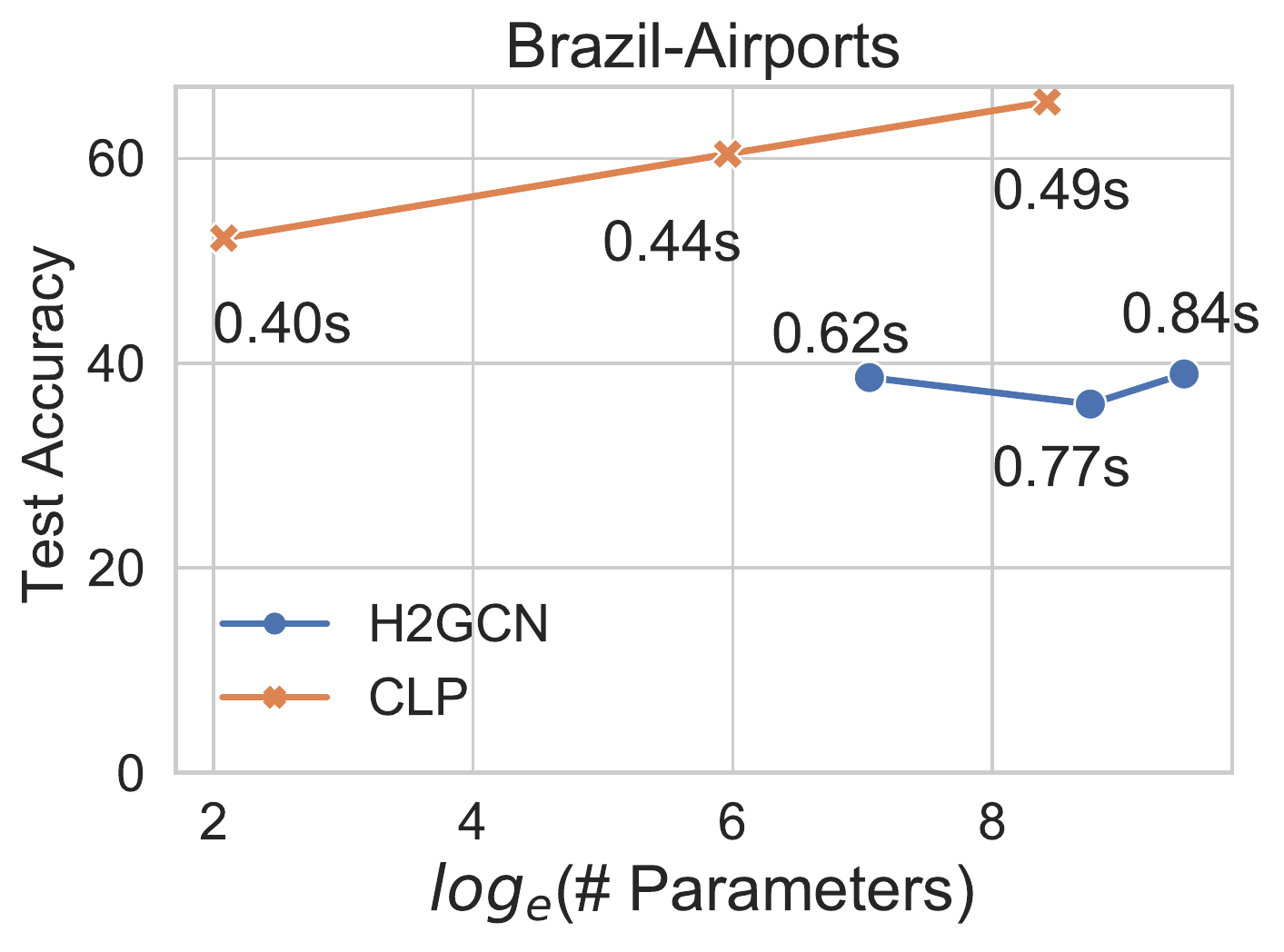}
    \end{subfigure}
    \begin{subfigure}{.49\linewidth}
        \centering
        \includegraphics[width=\textwidth]{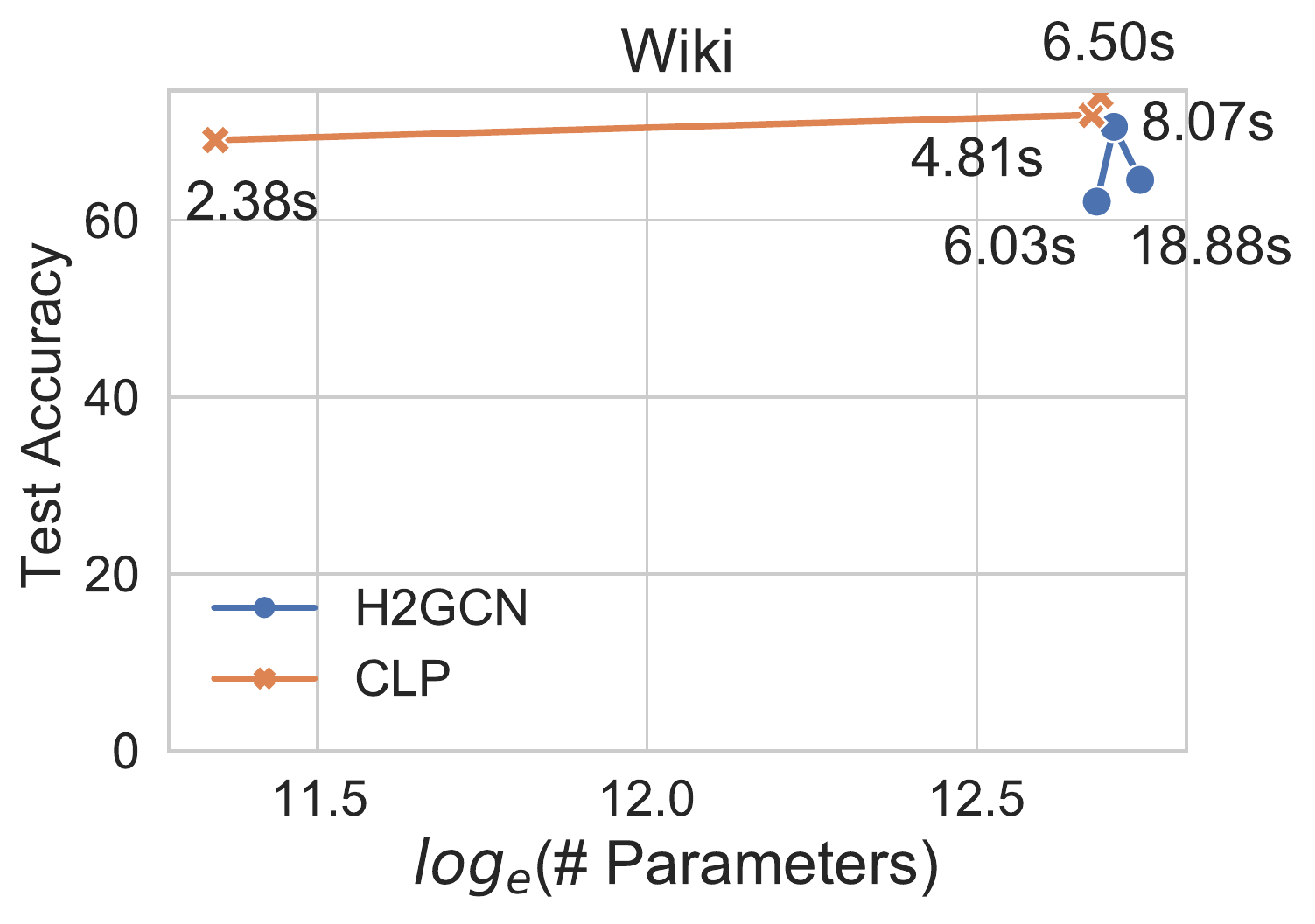}
    \end{subfigure}
    \caption{
    Classification accuracy and execution time of different methods with different layers on \textit{heterophily} graphs.
    Execution time is marked in the plot in terms of seconds ($s$).
    }
    \label{fig:classification_accuracy_execution_time}
\end{minipage}
\vspace{-3mm}
\end{figure}

% \begin{wrapfigure}{r}{0.33\textwidth}
% % \begin{figure}[!ht]
% \centering
% \includegraphics[width=1.\linewidth]{inserts/plot_LR_Acc_L2Dist.pdf}
% \caption{
% Classification accuracy and L2-distance between estimated/true compatibility matrix with different label rates.
% }
% \vspace{-2mm}
% \label{fig:influence_LR}
% % \end{figure}
% \end{wrapfigure}

% \smallskip\noindent
\textbf{Influence of label rate on test accuracy and quality of compatibility matrix estimation.}
Another interesting question under semi-supervised learning to study is the influence of label rates. 
Fig.~\ref{fig:influence_LR} presents the CLP's test accuracy and the quality of compatibility matrix estimation ($\hat{\mathbf{H}}$) with different splittings. 
Specifically, the quality of $\hat{\mathbf{H}}$ is evaluated by $dist(\mathbf{H}, \hat{\mathbf{H}})=\sqrt {\sum _{i=1}^{\vert\mathcal{Y}\vert}\sum _{j=1}^{\vert\mathcal{Y}\vert} \left( [\textbf{H}]_{ij} - [\hat{\textbf{H}}]_{ij}\right)^2 }$. 
It is not surprising to find that higher label rates lead to better performance and more accurate compatibility matrix estimation. 
Therefore, one of the future directions is to learn better compatibility matrix estimation according to prior predictions and graph structure. 

% \begin{figure}[!ht]
% \centering
% \begin{subfigure}{.49\linewidth}
%     \centering
%     \includegraphics[width=\textwidth]{inserts/plot_params_Brazil-Airports.pdf}
% \end{subfigure}
% \begin{subfigure}{.49\linewidth}
%     \centering
%     \includegraphics[width=\textwidth]{inserts/plot_params_Wiki.pdf}
% \end{subfigure}
% \caption{
% Classification accuracy and execution time of different methods with different layers on \textit{heterophily} graphs.
% Execution time is marked in the plot in terms of seconds ($s$).
% }
% % \vspace{-3mm}
% \label{fig:classification_accuracy_execution_time}
% \end{figure}

% \smallskip\noindent
\textbf{The number of parameters and execution time comparison.}
Our approach often requires significantly fewer parameters than GNN models since only the base predictor has parameters to train, which is less than GNN models. 
Moreover, another gain is faster training time because we do not use the graph structure for our prior predictions, and the LP step is time-efficient~\citep{G14,GGKF15}. 
As an example, we plot the number of parameters vs test accuracy of CLP and H2GCN of two heterophily graphs, i.e., Brazil-Airports and Wisconsin, in Fig.~\ref{fig:classification_accuracy_execution_time}. 
Note that H2GCN similarly contains an MLP component as a node feature encoder. 
We endow CLP and H2GCN with \textit{Linear}, $2$-layers and $3$-layers MLP models as base predictors (feature encoder for H2GCN). 
The hidden dimensions of CLP and H2GCN are the same as the general settings. 
Each model's execution time (average value of $10$ runs) under different settings is shown in Fig.~\ref{fig:classification_accuracy_execution_time}.  
We observe that CLP achieves much better performance with orders of magnitude fewer parameters and execution time.

%------------------------------------------------------------------------------

%------------------------------------------------------------------------------
\section{Conclusion}
\label{sec:conclusion}
In this paper, we focused on the graph learning tasks with challenging heterophily settings. 
Motivated by an experimental investigation of existing models' performance, we proposed an approach that extends LP algorithm to heterophily settings by smoothing the prior predictions across neighbours weighted by the compatibility matrix. 
A theoretical analysis shows that CLP has a closed-form solution with mild conditions on an appropriate matrix and we can thus give a detailed explanation of when CLP will support convergence. 
Comprehensive experiments demonstrate the effectiveness and efficiency of our approach on real-world and synthetic graphs with different settings. 
In future work, we plan to investigate a better compatibility matrix estimation approach and generalise CLP to the heterophily setting of regression problems on graphs.

%------------------------------------------------------------------------------

\subsubsection*{Acknowledgments}
This work is supported by the Luxembourg National Research Fund through grant PRIDE15/10621687/SPsquared.

\bibliography{full_format_references}
\bibliographystyle{tmlr}

\newpage
\appendix
%------------------------------------------------------------------------------
%------------------------------------------------------------------------------
\section{Comparison Between Different LP-related Methods}
\label{sec:appendix_comparison_diff_lp_methods}
\begin{wrapfigure}{r}{0.4\textwidth}
% \begin{figure}[!ht]
\centering
\includegraphics[width=1.\linewidth]{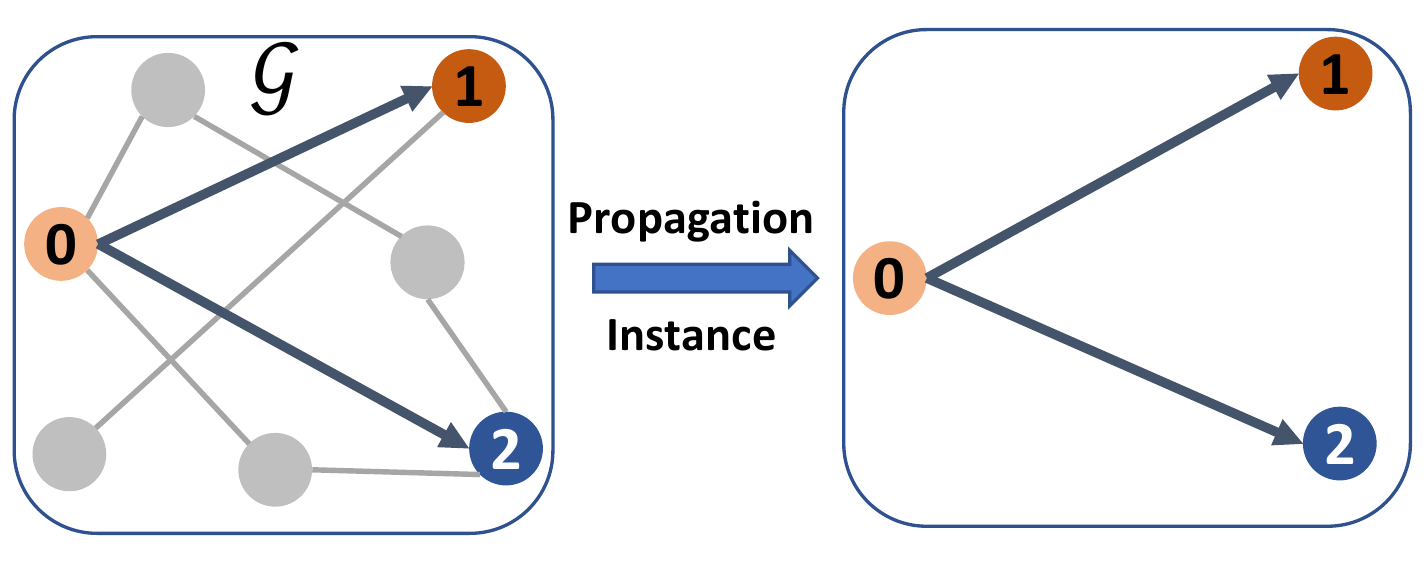}
\caption{An illustration of a propagation instance.}
\vspace{-6mm}
\label{fig:propagation_instance}
% \end{figure}
\end{wrapfigure}

The section presents the comparison between different Label Propagation (LP) related methods with the help of a propagation instance. 
As shown in Figure~\ref{fig:propagation_instance}, we execute LP-related algorithms on graph $\mathcal{G}$, specifically, using the propagation node $\mathbf{0} \to$ node $\mathbf{1}$ and node $\mathbf{0} \to$ node $\mathbf{2}$ as examples. Assume the graph is a heterophilous graph and each node is assigned with one of two classes. The compatibility matrix ($\mathbf{H}$) is given:
\begin{equation}
    \mathbf{H} = 
    \begin{bmatrix}
        0.2 & 0.8 \\
        0.8 & 0.2
    \end{bmatrix}
\end{equation}
The prior predictions of three nodes ($\mathbf{B}$), that are obtained according to method described in Section~\ref{subsec:simple_base_predictor} are:
\begin{equation}
    \mathbf{B} = 
    \begin{bmatrix}
        0.4 & 0.6 \\
        0.2 & 0.8 \\
        0.7 & 0.3
    \end{bmatrix}
\end{equation}
Next, we demonstrate and compare the propagation processes of LP-related approaches, i.e., LP, LP with compatibility matrix (Eq.~\ref{eq:iteration_simple}) and CLP (ours). 

\smallskip\noindent
\textbf{Label Propagation.}
LP assumes $\mathcal{G}$ has high homophily that nodes of the same class tend to be connected. Node $\mathbf{0}$ naturally think its neighbours should have the class label as itself, hence messages ($\mathbf{m}$) sent by node $\mathbf{0}$ are:
\begin{equation}
    \mathbf{m}[\mathbf{0} \to \mathbf{1}] = \mathbf{m}[\mathbf{0} \to \mathbf{2}] = \begin{bmatrix} 0.4 & 0.6 \end{bmatrix}
\end{equation}
Node $\mathbf{0}$ tries to assimilate its neighbouring nodes; however, its implicit assumption is not valid in heterophilous graphs. 

\smallskip\noindent
\textbf{Label Propagation with Compatibility Matrix.}
Existing models~\citep{Z05,ZRRMLAK21} perform another propagation process as shown in Eq.~\ref{eq:iteration_simple} to expand LP to heterophilous graphs.
In this case, node $\mathbf{0}$ believes that its neighbouring nodes should have another class label. After executing the propagation process of Eq.~\ref{eq:iteration_simple} ($\mathbf{A}\mathbf{B}\mathbf{H}$), received messages ($\mathbf{m}$) are:
\begin{equation}
\label{eq:received_messages_lp_cm}
    \mathbf{m} = 
    \begin{bmatrix}
        0. & 0. \\
        0.56 & 0.44 \\
        0.56 & 0.44
    \end{bmatrix}
\end{equation}
From Eq.~\ref{eq:received_messages_lp_cm}, we can summarise two conclusions:
\begin{enumerate}
    \item Eq.~\ref{eq:iteration_simple} helps LP to adapt to heterophilous graphs. $\mathbf{m} [\mathbf{0} \to \mathbf{1}]$ and $\mathbf{m}[\mathbf{0} \to \mathbf{2}]$ fit the heterophily assumption.
    \item Node $\mathbf{1}$ and $\mathbf{2}$ receive the same messages from node $\textbf{0}$ in such a propagation process. And node $\mathbf{0}$ tends to overturn the prior prediction of node $\mathbf{1}$.
    However, we argue that messages sent to different nodes should be various.
    It is mainly caused by the different settings of our paper from the classic LP problem. 
    LP problem usually does not utilise node attributes; the only valid supervision is the sparse node class label information. 
    Nevertheless, in our settings, prior label prediction is given according to node attributes, the experimental (Section~\ref{sec:experiments}) and empirical analyses results (Section~\ref{sec:experimental_investigation}) demonstrate the importance of prior label prediction on node classification tasks. Therefore, the LP process should not mandatory change the class label of neighbours if their prior prediction already has a high degree of credibility. 
    So, we argue that a new propagation method needs to be proposed for this setting. 
\end{enumerate}

\smallskip\noindent
\textbf{Compatible Label Propagation}. 
Our solution for the above-mentioned problem is Compatible Label Propagation (CLP), which realises differentiable message propagation with a novel edge weight calculation method (Eq.~\ref{eq:edge_weight}). 

First of all, let us review the compatibility matrix $\mathbf{H}$, which describes the overall frequencies of classes between neighbours of graphs $\mathcal{G}$. It is also reasonable to utilise $\mathbf{H}$ to quantify the weight between classes. For instance, $[\mathbf{H}]_{01}=0.8$ means that the edge weight between a node of class 0 and another node 1 is 0.8. 
However, in practice, most nodes only have a prior prediction vector to present their class label information. Take the node $\mathbf{0}$ as a sending node instance, $\mathbf{B}_0\mathbf{H} = \begin{bmatrix} 0.56 & 0.44 \end{bmatrix}$, edge weight between node $\mathbf{0}$ and a node of class 0 is 0.56. Similarly, the calculated edge weight between node $\mathbf{1}$ and a node of class 1 is 0.44.
After, the prior prediction of receiving node (e.g., node $\mathbf{1}$) further precise the edge weight between sending and receiving nodes by integrating its prior prediction. 
The obtained edge weight between node $\mathbf{0}$ and $\mathbf{1}$ is:
\begin{equation*}
    [\mathbf{F}]_{01}= (\mathbf{B}_0\mathbf{H}) \circ \mathbf{B}_{1} = \begin{bmatrix} 0.112 & 0.352 \end{bmatrix}
\end{equation*}
In the end, follow the propagation process of Eq.~\ref{eq:approximated_propagation}, message between node $\mathbf{0}$ and $\mathbf{1}$ is $\mathbf{m}[\mathbf{0} \to \mathbf{1}] = [\mathbf{F}]_{01} \circ \mathbf{B}_0$. Similarly, we can have the received messages ($\mathbf{m}$) of one propagation iteration are:
\begin{equation}
\label{eq:received_messages_clp}
\mathbf{m} = 
\begin{bmatrix}
    0. & 0. \\
    0.18 & 0.83 \\
    0.66 & 0.34
\end{bmatrix}
\end{equation}
The results of Eq.~\ref{eq:received_messages_clp} surprisingly tells us the following interesting findings:
\begin{enumerate}
\item CLP does not overturn its neighbour's prior prediction ($\mathbf{B}_1 = \begin{bmatrix} 0.2 & 0.8 \end{bmatrix}$) even the graph $\mathcal{G}$ is a heterophilous graph. 
\item CLP adapts well to the heterophilous settings that it does not assume neighbouring nodes should have the same class label but follow the guidance of compatibility matrix ($\mathbf{H}$). 
For instance, $\mathbf{m}[\mathbf{0} \to \mathbf{2}]=\begin{bmatrix} 0.66 & 0.34 \end{bmatrix}$ means that node $\mathbf{0}$ realises its neighbour node $\mathbf{2}$ is allowed to have different class label than itself.
% \item CLP is able to spread reasonable message to nodes with uncertain class label.
% $\mathbf{m}[\mathbf{0} \to \mathbf{3}]=\begin{bmatrix} 0.46 & 0.54 \end{bmatrix}$
\end{enumerate}
%------------------------------------------------------------------------------

%------------------------------------------------------------------------------
\section{Proofs}
\label{sec:appendix_proof}

\begin{proof}[Proof of Proposition~\ref{proposition:closed-form_solution_clp}]
We start by considering only $k$-th class in Eq.~\ref{eq:approximated_propagation}. In this case, we can rewrite Eq.~\ref{eq:approximated_propagation} as:
\begin{align}
% \label{eq:approximate_class_k}
    % Eq.~\ref{eq:approximated_propagation}: \hat{\mathbf{B}} &= (1-\alpha)\hat{\mathbf{D}} + \alpha \; \mathcal{F}(\mathbf{A}^{\mathbf{F}} \oplus \hat{\mathbf{B}})  \nonumber \\
    Eq.~\ref{eq:approximated_propagation}: \hat{\mathbf{B}} &= (1-\alpha)\hat{\mathbf{D}} + \alpha \; \mathbf{A}^{\mathbf{F}}_k \oplus \hat{\mathbf{B}} \nonumber \\
    % Eq.~\ref{eq:approximated_propagation}: \hat{\mathbf{B}} &= (1-\alpha)\hat{\mathbf{D}} + \alpha \; \mathcal{F}(\mathop{\parallel}\limits_{k \in \vert\mathcal{Y}\vert} \mathbf{A}^{\mathbf{F}}_{k} \; [\hat{\mathbf{B}}]_{:,k})  \nonumber \\
    Eq.~\ref{eq:approximated_propagation}: \hat{\mathbf{B}} &= (1-\alpha)\hat{\mathbf{D}} + \alpha \; \mathop{\parallel}\limits_{k \in \vert\mathcal{Y}\vert} \mathbf{A}^{\mathbf{F}}_{k} \; [\hat{\mathbf{B}}]_{:,k}  \nonumber \\
    Eq.~\ref{eq:approximated_propagation}: [\hat{\mathbf{B}}]_{:,k} &= (1-\alpha)[\hat{\mathbf{D}}]_{:,k} + \alpha \;  \mathbf{A}^{\mathbf{F}}_{k} [\hat{\mathbf{B}}]_{:,k} 
\end{align}
We then can apply Roth’s column lemma~\citep{HS81,GGKF15} to obtain a closed-form solution:
\begin{align}
    [\hat{\mathbf{B}}]_{:,k} &= (1-\alpha) [\hat{\mathbf{D}}]_{:,k} + \alpha\mathbf{A}^{\mathbf{F}}_k [\hat{\mathbf{B}}]_{:,k} \nonumber \\
    vec([\hat{\mathbf{B}}]_{:,k}) &= (1-\alpha)  \; vec([\hat{\mathbf{D}}]_{:,k}) + \alpha \; vec(\mathbf{A}^{\mathbf{F}}_k [\hat{\mathbf{B}}]_{:,k}) \nonumber \\
    vec([\hat{\mathbf{B}}]_{:,k}) - \alpha \; vec(\mathbf{A}^{\mathbf{F}}_k [\hat{\mathbf{B}}]_{:,k}) &= (1-\alpha) \; vec([\hat{\mathbf{D}}]_{:,k}) \nonumber \\
    vec([\hat{\mathbf{B}}]) - \alpha \; (\mathbf{I} \otimes \mathbf{A}^{\mathbf{F}}_k)vec([\hat{\mathbf{B}}]_{:,k}) &= (1-\alpha) \; vec([\hat{\mathbf{D}}]_{:,k}) \nonumber \\
    (\mathbf{I} - \alpha \; (\mathbf{I} \otimes \mathbf{A}^{\mathbf{F}}_k))vec([\hat{\mathbf{B}}]_{:,k}) &= (1-\alpha) \; vec([\hat{\mathbf{D}}]_{:,k}) \nonumber \\
    vec([\hat{\mathbf{B}}]_{:,k}) &= (\mathbf{I} - \alpha \; (\mathbf{I} \otimes \mathbf{A}^{\mathbf{F}}_k))^{-1} (1-\alpha) \; vec([\hat{\mathbf{D}}]_{:,k}) \\
    % vec(\hat{\mathbf{B}}) &= (\mathbf{I} - \alpha \; (\mathbf{I} \odot \mathbf{A}^{\mathbf{F}}))^{-1} (1-\alpha) \; vec(\hat{\mathbf{D}}) \\
    % where \;\; \mathbf{I} \odot \mathbf{A}^{\mathbf{F}} &= \mathop{\parallel}\limits_{k \in \vert\mathcal{Y}\vert} \mathbf{I} \otimes \mathbf{A}^{\mathbf{F}}_k \nonumber
\end{align}
Following similar manner, we can prove the validity of Proposition~\ref{proposition:closed-form_solution_clp} for other classes. 
\end{proof}

\begin{proof}[Proof of Proposition~\ref{lemma:exact_convergence}]
First, following the Jacobi method~\citep{S03},  we notice that the solution to Eq.~\ref{eq:closed_form_solution} can be expressed via an iterative form: 
\begin{equation}
\label{eq:iterate_class_k}
    vec([\hat{\mathbf{B}}^{(r+1)}]_{:,k}) \leftarrow (1-\alpha) \; vec([\hat{\mathbf{D}}]_{:,k}) + \alpha \mathbf{A}^{\mathbf{F}}_k vec([\hat{\mathbf{B}}^{(r)}]_{:,k})
\end{equation}
These equations converge for any choice of the initial value of $\mathbf{B_0}$ if and only if the spectral radius of matrix $\alpha \mathbf{A}^{\mathbf{F}}_k$ is less than one. 
Moreover, from Proposition~\ref{proposition:closed-form_solution_clp} we know that the convergence guarantees for CLP Eq.~\ref{eq:approximated_propagation} are equivalent to the closed-form solution Eq.~\ref{eq:closed_form_solution}. 
Hence, Eq.~\ref{eq:iterate_class_k} converges if and only if the spectral radius of $\rho(\alpha\mathbf{A}^{\mathbf{F}}_k) = \alpha\rho(\mathbf{A}^{\mathbf{F}}_k) < 1 \iff \rho(\mathbf{A}^{\mathbf{F}}_k) < \alpha^{-1} $.
\end{proof}

%------------------------------------------------------------------------------

%------------------------------------------------------------------------------
\section{Real-World Dataset: Details}
\label{sec:appendix_realdata_details}
\begin{table*}
\caption{Statistics of real-world graphs.
$|\mathcal{V}|$: number of nodes;
$|\mathcal{E}|$: number of edges;
$\kappa$: dimensionality of nodes features;
OSF: nodes only have structure related features;
$d_{avg}$: average degree;
$|\mathcal{Y}|$: number of possible class labels;
$h$: homophily ratio;
Split: split approach of graph;
Directed: if the graph is directed.
}
\label{table:appendix_summary_real_datasets}
\centering
\resizebox{.8\linewidth}{!}{
\begin{tabular}{l|r|r|r|r|r|r|r|r|r}
            \hline
Dataset     & $|\mathcal{V}|$   & $|\mathcal{E}|$   & $\kappa$    & OSF   & $|\mathcal{Y}|$ & $d_{avg}$  & $h$   & Split  & Directed \\
            \hline
Texas       & 183       & 325       & 1,703         & False & 5         & 1.7760    & 0.061 & Random & True \\
Wisconsin   & 251       & 515       & 1,703         & False & 5         & 2.0518    & 0.170 & Random & True \\
Actor       & 7,600     & 30,019    & 932           & False & 5         & 3.9499    & 0.216 & Random & True \\
Squirrel    & 5,201     & 198,493   & 2,089         & False & 5         & 76.3288   & 0.223 & Random & False \\
Chameleon   & 2,277     & 31,421    & 2325          & False & 5         & 27.5986   & 0.234 & Random & False \\
USA-Airports & 1,190    & 13,599    & 1             & True  & 4         & 22.8555   & 0.251 & Random & False \\
Brazil-Airports & 131   & 2077      & 1             & True  & 4         & 16.3969   & 0.286 & Random & False \\
Wiki        & 2,405     & 12,761    & 4973          & False & 17        & 10.6121   & 0.298 & Random & False \\
Cornell     & 183       & 298       & 1,703         & False & 5         & 1.6284    & 0.298 & Random & True \\
Europe-Airports & 399   & 11988     & 1             & True  & 4         & 30.0501   & 0.309 & Random & False \\
% Pokec       & 1,632,803 & 30,622,564& 65            & False & 3         & 18.7546   & 0.445 & Random & True \\
% Emails      & 799       & 10,182    & N.A.          & True  & 18        & 25.4869   & 0.507 & Random & False \\
% FB100-Penn94& 41,554    & 2,724,458 & 4,814         & False & 3         & 65.564    & 0.470 & Random & False \\
deezer-europe& 28,281   & 185,504   & 31,241        & False & 2         & 6.5593    & 0.525 & Random & False \\
Twitch-EN   & 7,126     & 77,774    & 2,514         & False & 2         & 17.2035   & 0.556 & Random & True \\
Twitch-RU   & 4,385     & 37,304    & 2,224         & False & 2         & 16.1232   & 0.618 & Random & True \\
WikiCS      & 11,701    & 291,039   & 300           & False & 10        & 24.8730   & 0.691 & Random & True \\
% Citeseer    & 3,327     & 4,552     & 3,703         & False & 6         & 2.7364    & 0.736 & Fixed  & False \\
% Yelp-Chi    & 45,954    & 7,693,958 & 32            & False & 2         & 167.427   & 0.773 & Random & False \\
% Computers   & 13,752    & 491,722   & 767           & False & 10        & 35.7564   & 0.777 & Random & False \\
% Cora-ML     & 2,995     & 16,316    & 2,879         & False & 7         & 5.4477    & 0.789 & Random & False \\
DBLP        & 4,057     & 3,528     & 334           & False & 4         & 1.7397    & 0.799 & Random & False \\
% Pubmed      & 19,717    & 88,648    & 500           & False & 3         & 4.4960    & 0.802 & Fixed  & False \\
CS 	        & 18,333    & 163,788   & 6,805         & False & 15        & 8.9341    & 0.808 & Random & False \\	
% Cora 	    & 2,708     & 5,278     & 1,433         & False & 7         & 3.8981    & 0.810 & Fixed  & False \\	
ACM         & 3,025     & 13,128    & 1,870         & False & 3         & 8.6797    & 0.821 & Random & False \\
% Photo       & 7,650     & 238,162   & 745           & False & 8         & 31.1323   & 0.827 & Random & False \\
Physics     & 34,493    & 495,924   & 8415          & False & 5         & 14.3775   & 0.931 & Random & False \\
Ogbn-Proteins& 132,534  & 39,561,252  & 8             & False & 2         & 6.5593    & 0.623 - 0.940 & Fixed & False \\
\hline
\end{tabular}
}
\end{table*}

This section describes details (including nodes, edges, features, labels and basic statistics) of real-world datasets. We reported the summarised statistics of real-world datasets in Table~\ref{table:appendix_summary_real_datasets}. 

\subsection{Air-flight Network}
\textbf{USA-Airports}, \textbf{Europe-Airports}, \textbf{Brazil-Airports}
The USA-Airports, Europe-Airports and Brazil-Airports datasets are three air traffic networks collected from government websites through the year 2016 and were used to evaluate algorithms to learn structure representations of nodes~\citep{LWWL20}. Nodes represent airports, and edges indicate whether there are commercial flights between them. In each dataset, the airports are divided into four different levels according to the annual passenger flow distribution by three quantiles: $25\%$, 50\%, 75\%. The goal is to infer the level of an airport using the connectivity pattern solely.

\subsection{Citation Network}
\textbf{ACM}.
The ACM is a paper network from the ACM dataset. Nodes features are the bag-of-words of the paper's keywords, and edges represent the same author who writes two papers. Papers published in KDD, SIGMOD, SIGCOMM, MobiCOMM are selected and divide the papers into three classes (database, wireless communication, data mining) by their research area.

% \textbf{Cora}, \textbf{Cora-ML}, \textbf{Citeseer}, \textbf{Pubmed}.
% We use four benchmark datasets for semi-supervised node classification task~\citep{KW17,BG18}. The node features are bag-of-words representations of documents.

\smallskip\noindent
\textbf{DBLP}.
The DBLP is an author network from the DBLP dataset. Node features are the elements of a bag of words represented of keywords, and edges indicate whether two authors are the co-author relationship. According to their research area, the authors are divided into four areas: database, data mining, machine learning, and information retrieval.

\subsection{Co-Author Network}
\textbf{CS}, \textbf{Physics}.
The CS and Physics datasets are co-author networks in each domain~\citep{SMBG18}. Nodes are authors, and edges indicate whether two authors co-authored a paper. Node features are paper keywords from the author's papers.

% \subsection{Amazon Co-Purchase}
% \textbf{Computers}.
% The Computers dataset is part of Amazon co-purchase graph~\citep{SMBG18}. Nodes are goods, and edges mean whether two goods are frequently purchased together. Node features are the bag-of-words representation of product reviews.

\subsection{Web-Page Network}
\textbf{WikiCS}.
The WikiCS dataset is a computer science-related page network in Wikipedia~\citep{MC20}. Nodes represent articles about computer science, and edges represent hyperlinks between articles. Node features are mean vectors of GloVe word embeddings~\citep{PSM14} of each article.

\smallskip\noindent
\textbf{Wiki}.
The Wiki dataset is an article network from Wikipedia~\citep{YLZSC15}. Node features are TFIDF vectors summarised from article content, and edges indicate whether there is a link between two articles. 

\smallskip\noindent
\textbf{Squirrel}, \textbf{Chameleon}.
The Squirrel and Chameleon are two page-page networks on specific topics in Wikipedia~\citep{RAS21}. Node features correspond to several informative nouns in the Wikipedia pages, and edges indicate mutual links between pages. We classify the nodes into five categories in terms of the average monthly traffic of the web page.

\smallskip\noindent
\textbf{Texas}, \textbf{Cornell}, \textbf{Wisconsin}.
The Texas, Cornell and Wisconsin are three sub-datasets of webpage datasets collected from computer science departments of universities by Carnegie Mellon University~\footnote{\url{http://www.cs.cmu.edu/afs/cs.cmu.edu/project/theo-11/www/wwkb/}}. Node features are bag-of-words representations of web pages, and edges are hyperlinks between them. The web pages are manually classified into five categories, student, project, course, staff, and faculty.

\subsection{Actor Network}
\textbf{Actor}.
The Actor is the actor-only reduced subgraph of the film-director-actor-writer network~\citep{TJWY09}. Nodes are classified into five categories in terms of words of actor’s Wikipedia. Each node corresponds to an actor, node features are keywords of the actor's Wikipedia page, and edges mean whether two actors denote co-occurrence on the same page.

\subsection{Twitch User Network}
\textbf{Twitch-EN, Twitch-RU}.
The Twitch-En and Twitch-RU are two use networks~\citep{LHLHGBL21} sampled from the online video live streaming platform. Nodes are Twitch users, and edges are mutual friendships provided by the platform. Node features are games liked, location and streaming habits. Each graph is associated with users of a particular region. The class labels denote whether a streamer uses explicit language.

\subsection{Social Network}
\textbf{deezer-europe}.
The deezer-europe dataset is user social network on Deezer~\footnote{\url{https://www.deezer.com/en/}} from European countries~\citep{LHLHGBL21}. 
Nodes are Deezer users, and edges represent mutual follower relationships. 
Node features are based on artists liked by each user, and nodes are labelled with reported gender.

\subsection{Biomedical Network}
\textbf{Ogbn-Proteins.}
The Ogbn-Proteins is a biomedical network~\citep{HFZDRLCL20}. 
Nodes are proteins, and edges represent different biological relationships between proteins. 
Proteins are labelled from $112$ classes, in which each protein is given a binary label. 
Note that there are no predefined node features; only edges have features. Therefore, we average the features of connected edges as the node feature for each node. 
In the end, because there exist lots of node classes and each node may be assigned with several class labels, hence we give a range of homophily ratios ($h$) as shown in Table~\ref{table:appendix_summary_real_datasets} where each $h$ is computed in terms of one specific class label. 
Note that the Ogbn-Proteins does not utilise the \textit{sparse}, \textit{medium} and \textit{dense} splitting settings, it follow the dataset's default splitting. 
% Therefore, we do not present its performance in Table~\ref{table:NC_experimental_results_hete}. 

%------------------------------------------------------------------------------

%------------------------------------------------------------------------------
\section{Experimental Performances on Real-world datasets}
\label{sec:appendix_realdata_results}
\begin{wraptable}{r}{0.4\textwidth}
% \begin{table}
\caption{Node classification results on \textit{homophily} graphs under \textit{sparse} and \textit{dense} splittings.
Top-$2$ performances per benchmark are highlighted in
\colorbox{green!20}{ } and \colorbox{blue!10}{ }, respectively.
\textsuperscript{\dag} means utilise spectral features.
}
% \vspace{-3mm}
\label{table:more_NC_experimental_results_real_homophily}
% \small
\centering
% \resizebox{.4\linewidth}{!}{
\resizebox{1.\linewidth}{!}{
\begin{tabular}{l ccccc | c}
\hline
        & \textbf{WikiCS} & \textbf{DBLP} & \textbf{CS} & \textbf{ACM} & \textbf{Physics} & \textbf{Rank} \\
\textbf{Homo.R} $h$     & \textbf{0.691} & \textbf{0.799} & \textbf{0.808} & \textbf{0.821} & \textbf{0.931} \\
        & \multicolumn{6}{c}{\textbf{Train ratio $5\%$}} \\
% \hline
MLP & 72.46${\scriptstyle\pm0.72}$ & 74.53${\scriptstyle\pm1.28}$ & 90.32${\scriptstyle\pm0.41}$ & 83.27${\scriptstyle\pm1.58}$ & 94.35${\scriptstyle\pm0.22}$ & 13 \\
LINK & 67.05${\scriptstyle\pm0.98}$ & 36.7${\scriptstyle\pm2.05}$ & 68.06${\scriptstyle\pm1.11}$ & 59.23${\scriptstyle\pm1.99}$ & 82.82${\scriptstyle\pm0.34}$ & 16 \\
GCN & 77.06${\scriptstyle\pm0.46}$ & 78.03${\scriptstyle\pm1.09}$ & 91.88${\scriptstyle\pm0.2}$ & 89.6${\scriptstyle\pm0.95}$ & 95.43${\scriptstyle\pm0.11}$ & 6 \\
GAT & 77.92${\scriptstyle\pm0.54}$ & 77.2${\scriptstyle\pm1.07}$ & 89.34${\scriptstyle\pm8.44}$ & 88.98${\scriptstyle\pm0.87}$ & 95.06${\scriptstyle\pm0.16}$ & 9 \\
GCN2 & 68.28${\scriptstyle\pm2.27}$ & 78.47${\scriptstyle\pm1.04}$ & 92.7${\scriptstyle\pm0.16}$ & 90.22${\scriptstyle\pm0.74}$ & \cellcolor{blue!10}95.63${\scriptstyle\pm0.09}$ & 5 \\
Mixhop & 76.29${\scriptstyle\pm0.90}$ & 76.31${\scriptstyle\pm1.41}$ & 92.81${\scriptstyle\pm0.29}$ & 88.56${\scriptstyle\pm1.4}$ & 95.77${\scriptstyle\pm0.12}$ & 8 \\
SuperGAT & 66.74${\scriptstyle\pm1.41}$ & 73.61${\scriptstyle\pm2.0}$ & 77.89${\scriptstyle\pm1.53}$ & 88.59${\scriptstyle\pm1.19}$ & 94.14${\scriptstyle\pm0.20}$ & 14 \\
GPRGNN & 77.54${\scriptstyle\pm0.48}$ & 78.64${\scriptstyle\pm0.97}$ & 92.09${\scriptstyle\pm0.17}$ & 90.0${\scriptstyle\pm0.93}$ & 95.58${\scriptstyle\pm0.12}$ & 4 \\
FAGCN & 77.04${\scriptstyle\pm0.73}$ & 77.48${\scriptstyle\pm0.84}$ & \cellcolor{green!20}93.58${\scriptstyle\pm0.27}$ & 88.77${\scriptstyle\pm0.96}$ & \cellcolor{green!20}95.90${\scriptstyle\pm0.16}$ & 3 \\
H2GCN & 75.67${\scriptstyle\pm0.71}$ & 77.32${\scriptstyle\pm1.48}$ & 92.16${\scriptstyle\pm0.3}$ & 90.50${\scriptstyle\pm1.00}$ & 95.57${\scriptstyle\pm0.16}$ & 6 \\
CPGNN & 71.64${\scriptstyle\pm8.21}$ & 75.27${\scriptstyle\pm1.22}$ & 91.15${\scriptstyle\pm2.27}$ & 88.61${\scriptstyle\pm2.26}$ & 95.40${\scriptstyle\pm0.39}$ & 11 \\
LP & 59.16${\scriptstyle\pm0.76}$ & 42.64${\scriptstyle\pm1.00}$ & 82.96${\scriptstyle\pm0.56}$ & 66.95${\scriptstyle\pm1.15}$ & 91.11${\scriptstyle\pm0.27}$ & 15 \\
C\&S & 74.47${\scriptstyle\pm2.42}$ & 74.94${\scriptstyle\pm1.67}$ & 90.79${\scriptstyle\pm0.67}$ & 88.0${\scriptstyle\pm1.39}$ & 95.04${\scriptstyle\pm0.18}$ & 12 \\
CLP & 75.87${\scriptstyle\pm1.07}$ & 76.53${\scriptstyle\pm1.46}$ & 91.28${\scriptstyle\pm0.46}$ & 88.53${\scriptstyle\pm1.28}$ & 95.19${\scriptstyle\pm0.15}$ & 10 \\
% MLP\textsuperscript{\dag}& 76.2${\scriptstyle\pm0.89}$ & 79.63${\scriptstyle\pm1.07}$ & 92.98${\scriptstyle\pm0.27}$ & 90.18${\scriptstyle\pm1.24}$ & 95.7${\scriptstyle\pm0.19}$ & 3 \\
C\&S\textsuperscript{\dag}& \cellcolor{green!20}79.50${\scriptstyle\pm0.35}$ & \cellcolor{green!20}78.97${\scriptstyle\pm1.17}$ & \cellcolor{blue!10}93.17${\scriptstyle\pm0.31}$ & \cellcolor{blue!10}90.98${\scriptstyle\pm0.9}$ & 95.82${\scriptstyle\pm0.11}$ & 1 \\
CLP\textsuperscript{\dag}& \cellcolor{blue!10}79.31${\scriptstyle\pm1.02}$ & \cellcolor{blue!10}78.95${\scriptstyle\pm1.04}$ & 92.97${\scriptstyle\pm0.31}$ & \cellcolor{green!20}90.99${\scriptstyle\pm0.85}$ & \cellcolor{blue!10}95.86${\scriptstyle\pm0.15}$ & 2 \\
        & \multicolumn{6}{c}{\textbf{Train ratio $48\%$}}  \\
MLP & 80.89${\scriptstyle\pm0.68}$ & 81.11${\scriptstyle\pm1.07}$ & 94.15${\scriptstyle\pm0.32}$ & 90.41${\scriptstyle\pm0.87}$ & 95.84${\scriptstyle\pm0.21}$ & 13 \\
LINK & 76.47${\scriptstyle\pm0.81}$ & 53.02${\scriptstyle\pm1.28}$ & 86.49${\scriptstyle\pm0.46}$ & 76.51${\scriptstyle\pm1.68}$ & 93.95${\scriptstyle\pm0.12}$ & 16 \\
GCN & 82.06${\scriptstyle\pm0.74}$ & 82.34${\scriptstyle\pm1.27}$ & 93.55${\scriptstyle\pm0.31}$ & 91.54${\scriptstyle\pm1.06}$ & 96.34${\scriptstyle\pm0.23}$ & 10 \\
GAT & 81.79${\scriptstyle\pm0.69}$ & 82.50${\scriptstyle\pm1.17}$ & 91.34${\scriptstyle\pm8.75}$ & 91.67${\scriptstyle\pm1.01}$ & 96.09${\scriptstyle\pm1.27}$ & 12 \\
GCN2 & 70.48${\scriptstyle\pm1.86}$ & 83.04${\scriptstyle\pm1.10}$ & 94.97${\scriptstyle\pm0.38}$ & 92.26${\scriptstyle\pm0.77}$ & 96.63${\scriptstyle\pm0.27}$ & 8 \\
Mixhop & 83.30${\scriptstyle\pm0.66}$ & 82.32${\scriptstyle\pm1.18}$ & 95.11${\scriptstyle\pm0.39}$ & 92.15${\scriptstyle\pm1.05}$ & 96.83${\scriptstyle\pm0.19}$ & 6 \\
SuperGAT & 66.87${\scriptstyle\pm1.04}$ & 77.07${\scriptstyle\pm1.44}$ & 77.75${\scriptstyle\pm1.70}$ & 90.48${\scriptstyle\pm1.04}$ & 94.53${\scriptstyle\pm0.26}$ & 14 \\
GPRGNN & 81.93${\scriptstyle\pm0.84}$ & 83.14${\scriptstyle\pm1.09}$ & 94.49${\scriptstyle\pm0.28}$ & 92.84${\scriptstyle\pm0.95}$ & 96.63${\scriptstyle\pm0.21}$ & 6 \\
FAGCN & 82.08${\scriptstyle\pm0.49}$ & 83.95${\scriptstyle\pm1.24}$ & 95.63${\scriptstyle\pm0.36}$ & 92.63${\scriptstyle\pm0.6}$ & \cellcolor{blue!10}96.85${\scriptstyle\pm0.19}$ & 3 \\
H2GCN & 83.90${\scriptstyle\pm0.64}$ & 83.9${\scriptstyle\pm1.26}$ & 94.97${\scriptstyle\pm0.40}$ & 92.99${\scriptstyle\pm0.96}$ & 96.71${\scriptstyle\pm0.20}$ & 3 \\
CPGNN & 76.57${\scriptstyle\pm7.19}$ & 82.87${\scriptstyle\pm1.03}$ & 93.07${\scriptstyle\pm1.93}$ & 92.07${\scriptstyle\pm1.39}$ & 96.28${\scriptstyle\pm0.43}$ & 11 \\
LP & 65.95${\scriptstyle\pm0.88}$ & 57.79${\scriptstyle\pm1.35}$ & 90.44${\scriptstyle\pm0.69}$ & 77.09${\scriptstyle\pm2.56}$ & 95.19${\scriptstyle\pm0.28}$ & 15 \\
C\&S & 82.06${\scriptstyle\pm1.41}$ & 82.74${\scriptstyle\pm1.17}$ & 94.24${\scriptstyle\pm0.44}$ & 92.54${\scriptstyle\pm1.10}$ & 96.45${\scriptstyle\pm0.18}$ & 9 \\
CLP & 83.37${\scriptstyle\pm0.52}$ & \cellcolor{blue!10}84.26${\scriptstyle\pm1.18}$ & 94.66${\scriptstyle\pm0.37}$ & 92.79${\scriptstyle\pm1.12}$ & 96.58${\scriptstyle\pm0.18}$ & 5 \\
% MLP\textsuperscript{\dag}& 84.0${\scriptstyle\pm0.79}$ & 83.99${\scriptstyle\pm0.94}$ & 95.45${\scriptstyle\pm0.32}$ & 93.21${\scriptstyle\pm0.75}$ & 96.71${\scriptstyle\pm0.18}$ & 2 \\
C\&S\textsuperscript{\dag}& \cellcolor{blue!10}85.51${\scriptstyle\pm0.64}$ & 83.69${\scriptstyle\pm1.05}$ & 95.36${\scriptstyle\pm0.4}$ & \cellcolor{blue!10}93.30${\scriptstyle\pm0.69}$ & 96.71${\scriptstyle\pm0.27}$ & 2 \\
CLP\textsuperscript{\dag}& \cellcolor{green!20}85.91${\scriptstyle\pm0.62}$ & \cellcolor{green!20}84.52${\scriptstyle\pm1.07}$ & \cellcolor{green!20}95.73${\scriptstyle\pm0.38}$ & \cellcolor{green!20}93.62${\scriptstyle\pm0.84}$ & \cellcolor{green!20}96.91${\scriptstyle\pm0.17}$ & 1 \\
\hline
\end{tabular}
}
% \end{table}
\end{wraptable}

\begin{table}
\caption{Node classification results on \textit{heterophily} graphs under \textit{sparse} and \textit{dense} splittings.
Top-$2$ performances per benchmark are highlighted in
\colorbox{green!20}{ } and \colorbox{blue!10}{ }, respectively.
}
% \vspace{-3mm}
\label{table:more_NC_experimental_results_real_heterophily}
% \small
\centering
\resizebox{1.\textwidth}{!}{
\begin{tabular}{l ccccccccccccc | c}
\hline
        & \textbf{Texas} & \textbf{Wisconsin} & \textbf{Actor} & \textbf{Squirrel} & \textbf{Chameleon} & \textbf{USA-A.} & \textbf{Bra.-A.} & \textbf{Wiki} & \textbf{Cornell} & \textbf{Eu.-A.} & \textbf{deezer} & \textbf{Twitch-EN} & \textbf{Twitch-RU} & \textbf{Rank} \\
\textbf{Hom.R} $h$ & \textbf{0.061} & \textbf{0.170} & \textbf{0.216} & \textbf{0.223} & \textbf{0.234} & \textbf{0.251} & \textbf{0.286} & \textbf{0.298} & \textbf{0.298} & \textbf{0.309} & \textbf{0.53} & \textbf{0.60} & \textbf{0.639} & \\
        & \multicolumn{14}{c}{\textbf{Train ratio $5\%$}}  \\
MLP     & 57.83${\scriptstyle\pm5.49}$ & 62.24${\scriptstyle\pm7.12}$ & \cellcolor{green!20}32.43${\scriptstyle\pm1.14}$ & 24.97${\scriptstyle\pm0.78}$ & 29.61${\scriptstyle\pm2.17}$ & \cellcolor{blue!10}54.55${\scriptstyle\pm1.71}$ & 43.53${\scriptstyle\pm10.88}$ & 63.26${\scriptstyle\pm1.48}$ & \cellcolor{blue!10}58.84${\scriptstyle\pm7.97}$ & 43.41${\scriptstyle\pm7.65}$ & 62.68${\scriptstyle\pm0.43}$ & 58.63${\scriptstyle\pm1.64}$ & 50.00${\scriptstyle\pm0.91}$ & 3 \\
LINK & 54.10${\scriptstyle\pm8.18}$ & 38.65${\scriptstyle\pm11.02}$ & 23.18${\scriptstyle\pm0.93}$ & \cellcolor{green!20}36.95${\scriptstyle\pm1.76}$ & \cellcolor{green!20}50.74${\scriptstyle\pm2.56}$ & 25.08${\scriptstyle\pm1.30}$ & 25.97${\scriptstyle\pm6.82}$ & 20.26${\scriptstyle\pm1.16}$ & 39.11${\scriptstyle\pm15.27}$ & 28.25${\scriptstyle\pm2.36}$ & 55.22${\scriptstyle\pm0.29}$ & 54.1${\scriptstyle\pm1.35}$ & 51.08${\scriptstyle\pm0.81}$ & 13 \\
GCN     & 55.21${\scriptstyle\pm2.17}$ & 44.19${\scriptstyle\pm7.21}$ & 26.21${\scriptstyle\pm1.28}$ & 24.05${\scriptstyle\pm0.73}$ & 30.77${\scriptstyle\pm4.07}$ & 29.68${\scriptstyle\pm2.60}$ & 26.15${\scriptstyle\pm3.77}$ & 46.05${\scriptstyle\pm2.48}$ & 51.99${\scriptstyle\pm6.97}$ & 30.53${\scriptstyle\pm4.73}$ & 58.79${\scriptstyle\pm0.64}$ & 58.86${\scriptstyle\pm0.81}$ & 49.92${\scriptstyle\pm2.11}$ & 9 \\
GAT     & 54.61${\scriptstyle\pm2.03}$ & 44.9${\scriptstyle\pm6.96}$ & 26.7${\scriptstyle\pm1.14}$ & 23.64${\scriptstyle\pm0.91}$ & 30.93${\scriptstyle\pm3.66}$ & 25.82${\scriptstyle\pm2.21}$ & 24.8${\scriptstyle\pm2.71}$ & 38.8${\scriptstyle\pm2.78}$ & 51.73${\scriptstyle\pm7.47}$ & 27.65${\scriptstyle\pm4.25}$ & 55.92${\scriptstyle\pm0.83}$ & 57.50${\scriptstyle\pm0.97}$ & 50.81${\scriptstyle\pm1.90}$ & 12 \\
GCN2    & 53.67${\scriptstyle\pm5.04}$ & 42.51${\scriptstyle\pm9.05}$ & 26.39${\scriptstyle\pm1.07}$ & 24.19${\scriptstyle\pm1.85}$ & 31.81${\scriptstyle\pm3.63}$ & 32.47${\scriptstyle\pm5.91}$ & 26.99${\scriptstyle\pm4.21}$ & 51.81${\scriptstyle\pm2.99}$ & 50.44${\scriptstyle\pm8.82}$ & 34.05${\scriptstyle\pm8.20}$ & 61.06${\scriptstyle\pm0.96}$ & 58.72${\scriptstyle\pm0.83}$ & 49.29${\scriptstyle\pm2.99}$ & 8 \\
Mixhop & 50.51${\scriptstyle\pm8.19}$ & 45.46${\scriptstyle\pm5.94}$ & 25.41${\scriptstyle\pm1.26}$ & 26.45${\scriptstyle\pm1.47}$ & 35.06${\scriptstyle\pm2.45}$ & 51.10${\scriptstyle\pm3.50}$ & 33.92${\scriptstyle\pm8.55}$ & 50.94${\scriptstyle\pm1.76}$ & 46.96${\scriptstyle\pm10.37}$ & \cellcolor{blue!10}44.85${\scriptstyle\pm5.14}$ & 62.08${\scriptstyle\pm1.16}$ & 58.56${\scriptstyle\pm2.17}$ & \cellcolor{blue!10}52.42${\scriptstyle\pm2.49}$ & 6 \\
SuperGAT& 55.68${\scriptstyle\pm3.14}$ & 45.85${\scriptstyle\pm6.46}$ & 25.86${\scriptstyle\pm0.87}$ & 23.38${\scriptstyle\pm0.91}$ & 32.22${\scriptstyle\pm2.37}$ & 25.83${\scriptstyle\pm2.10}$ & 24.32${\scriptstyle\pm2.03}$ & 29.56${\scriptstyle\pm1.66}$ & 51.99${\scriptstyle\pm7.2}$ & 24.78${\scriptstyle\pm1.25}$ & 56.39${\scriptstyle\pm0.64}$ & 58.74${\scriptstyle\pm0.84}$ & 50.04${\scriptstyle\pm1.75}$ & 11 \\
GPRGNN  & 54.18${\scriptstyle\pm3.86}$ & 43.75${\scriptstyle\pm8.32}$ & 26.29${\scriptstyle\pm1.23}$ & 24.13${\scriptstyle\pm1.35}$ & 31.30${\scriptstyle\pm3.90}$ & 33.94${\scriptstyle\pm8.09}$ & 24.51${\scriptstyle\pm2.62}$ & 58.97${\scriptstyle\pm3.26}$ & 51.38${\scriptstyle\pm7.72}$ & 28.42${\scriptstyle\pm7.45}$ & 61.03${\scriptstyle\pm0.48}$ & 58.32${\scriptstyle\pm0.89}$ & 50.30${\scriptstyle\pm2.86}$ & 10 \\
FAGCN   & 56.17${\scriptstyle\pm7.75}$ & 54.92${\scriptstyle\pm8.84}$ & 30.91${\scriptstyle\pm1.09}$ & 24.10${\scriptstyle\pm1.23}$ & 31.48${\scriptstyle\pm3.15}$ & 53.71${\scriptstyle\pm3.52}$ & 35.71${\scriptstyle\pm7.73}$ & \cellcolor{blue!10}64.30${\scriptstyle\pm2.38}$ & 52.04${\scriptstyle\pm12.16}$ & 31.96${\scriptstyle\pm4.66}$ & \cellcolor{blue!10}64.02${\scriptstyle\pm0.40}$ & \cellcolor{blue!10}59.32${\scriptstyle\pm2.01}$ & 49.64${\scriptstyle\pm2.10}$ & 4 \\
H2GCN   & 51.88${\scriptstyle\pm7.33}$ & 53.16${\scriptstyle\pm12.03}$ & 29.85${\scriptstyle\pm1.61}$ & 25.13${\scriptstyle\pm1.09}$ & 30.03${\scriptstyle\pm2.62}$ & 48.28${\scriptstyle\pm3.74}$ & 34.87${\scriptstyle\pm8.25}$ & 36.79${\scriptstyle\pm4.70}$ & 52.72${\scriptstyle\pm7.31}$ & 36.59${\scriptstyle\pm7.23}$ & 60.61${\scriptstyle\pm1.21}$ & 57.22${\scriptstyle\pm1.25}$ & 50.41${\scriptstyle\pm1.95}$ & 7 \\
CPGNN   & \cellcolor{blue!10}59.58${\scriptstyle\pm7.01}$ & \cellcolor{blue!10}64.41${\scriptstyle\pm6.69}$ & 30.44${\scriptstyle\pm0.96}$ & 27.16${\scriptstyle\pm1.64}$ & \cellcolor{blue!10}44.71${\scriptstyle\pm2.64}$ & 24.43${\scriptstyle\pm1.27}$ & 25.46${\scriptstyle\pm4.91}$ & 64.03${\scriptstyle\pm1.98}$ & 55.69${\scriptstyle\pm10.12}$ & 36.79${\scriptstyle\pm1.13}$ & 63.88${\scriptstyle\pm0.39}$ & 56.01${\scriptstyle\pm1.39}$ & 50.51${\scriptstyle\pm1.70}$ & 4 \\
LP      & 16.79${\scriptstyle\pm5.78}$ & 8.02${\scriptstyle\pm4.68}$ & 17.88${\scriptstyle\pm0.86}$ & 17.13${\scriptstyle\pm1.75}$ & 21.32${\scriptstyle\pm0.77}$ & 24.89${\scriptstyle\pm0.99}$ & 21.43${\scriptstyle\pm7.73}$ & 18.68${\scriptstyle\pm1.55}$ & 18.12${\scriptstyle\pm1.52}$ & 25.07${\scriptstyle\pm2.83}$ & 55.23${\scriptstyle\pm0.54}$ & 52.98${\scriptstyle\pm1.06}$ & 51.48${\scriptstyle\pm1.23}$ & 14 \\
C\&S    & 57.73${\scriptstyle\pm6.17}$ & 62.77${\scriptstyle\pm8.40}$ & \cellcolor{blue!10}31.6${\scriptstyle\pm1.31}$ & 25.66${\scriptstyle\pm1.04}$ & 25.66${\scriptstyle\pm1.04}$ & 44.92${\scriptstyle\pm3.63}$ & \cellcolor{blue!10}45.80${\scriptstyle\pm12.95}$ & \cellcolor{green!20}64.51${\scriptstyle\pm2.06}$ & 55.25${\scriptstyle\pm12.25}$ & 40.11${\scriptstyle\pm3.29}$ & 63.41${\scriptstyle\pm0.59}$ & 58.21${\scriptstyle\pm1.53}$ & 50.93${\scriptstyle\pm1.05}$ & 2 \\
CLP & \cellcolor{green!20}60.81${\scriptstyle\pm7.24}$ & \cellcolor{green!20}65.81${\scriptstyle\pm7.34}$ & 31.48${\scriptstyle\pm1.03}$ & \cellcolor{blue!10}30.01${\scriptstyle\pm1.16}$ & 39.07${\scriptstyle\pm1.66}$ & \cellcolor{green!20}55.74${\scriptstyle\pm1.93}$ & \cellcolor{green!20}56.1${\scriptstyle\pm12.4}$ & 62.82${\scriptstyle\pm4.14}$ & \cellcolor{green!20}59.35${\scriptstyle\pm7.16}$ & \cellcolor{green!20}50.78${\scriptstyle\pm4.12}$ & \cellcolor{green!20}64.79${\scriptstyle\pm0.26}$ & \cellcolor{green!20}59.46${\scriptstyle\pm1.76}$ & \cellcolor{green!20}52.51${\scriptstyle\pm1.22}$ & 1 \\
        & \multicolumn{14}{c}{\textbf{Train ratio $48\%$}}  \\
MLP     & 75.58${\scriptstyle\pm6.83}$ & 81.19${\scriptstyle\pm4.76}$ & \cellcolor{blue!10}36.96${\scriptstyle\pm1.03}$ & 31.87${\scriptstyle\pm1.24}$ & 45.43${\scriptstyle\pm2.14}$ & 55.86${\scriptstyle\pm2.60}$  & \cellcolor{blue!10}72.86${\scriptstyle\pm7.00}$ & 77.86${\scriptstyle\pm1.28}$ & \cellcolor{green!20}81.74${\scriptstyle\pm8.40}$ & 49.63${\scriptstyle\pm3.53}$ & 65.29${\scriptstyle\pm0.45}$ & 61.98${\scriptstyle\pm0.94}$ & 52.19${\scriptstyle\pm1.92}$ & 4 \\
LINK & 61.14${\scriptstyle\pm5.07}$ & 56.60${\scriptstyle\pm7.54}$ & 23.28${\scriptstyle\pm0.98}$ & \cellcolor{green!20}74.77${\scriptstyle\pm1.34}$ & \cellcolor{green!20}77.91${\scriptstyle\pm1.95}$ & 25.09${\scriptstyle\pm2.47}$ & 33.45${\scriptstyle\pm7.69}$ & 38.97${\scriptstyle\pm1.96}$ & 50.18${\scriptstyle\pm7.22}$ & 33.46${\scriptstyle\pm5.79}$ & 57.34${\scriptstyle\pm0.48}$ & 58.26${\scriptstyle\pm0.87}$ & 51.75${\scriptstyle\pm1.62}$ & 9 \\
GCN     & 55.68${\scriptstyle\pm4.55}$ & 48.86${\scriptstyle\pm5.55}$ & 28.69${\scriptstyle\pm1.18}$ & 28.12${\scriptstyle\pm1.29}$ & 43.18${\scriptstyle\pm1.98}$ & 34.56${\scriptstyle\pm3.20}$ & 29.25${\scriptstyle\pm9.09}$ & 62.05${\scriptstyle\pm1.88}$ & 51.16${\scriptstyle\pm5.46}$ & 37.04${\scriptstyle\pm4.02}$ & 62.15${\scriptstyle\pm0.53}$ & 60.59${\scriptstyle\pm1.53}$ & 51.26${\scriptstyle\pm2.33}$ & 12 \\
GAT     & 55.45${\scriptstyle\pm5.32}$ & 49.90${\scriptstyle\pm5.42}$ & 29.44${\scriptstyle\pm1.28}$ & 28.19${\scriptstyle\pm1.11}$ & 45.65${\scriptstyle\pm1.64}$ & 31.97${\scriptstyle\pm3.85}$ & 21.54${\scriptstyle\pm7.27}$ & 58.76${\scriptstyle\pm1.50}$ & 51.53${\scriptstyle\pm5.97}$ & 25.8${\scriptstyle\pm7.21}$ & 56.57${\scriptstyle\pm1.75}$ & 61.01${\scriptstyle\pm1.80}$ & 53.52${\scriptstyle\pm2.43}$ & 11 \\
GCN2    & 56.11${\scriptstyle\pm5.3}$ & 49.47${\scriptstyle\pm6.33}$ & 26.87${\scriptstyle\pm1.32}$ & 26.27${\scriptstyle\pm2.32}$ & 38.21${\scriptstyle\pm1.90}$ & 42.30${\scriptstyle\pm4.34}$ & 41.07${\scriptstyle\pm10.98}$ & 69.08${\scriptstyle\pm1.99}$ & 52.61${\scriptstyle\pm5.39}$ & 39.63${\scriptstyle\pm4.00}$ & 64.40${\scriptstyle\pm0.56}$ & 61.33${\scriptstyle\pm1.77}$ & 50.89${\scriptstyle\pm3.07}$ & 10 \\
Mixhop & 51.93${\scriptstyle\pm5.92}$ & 60.07${\scriptstyle\pm5.81}$ & 33.04${\scriptstyle\pm1.21}$ & 34.09${\scriptstyle\pm1.56}$ & 45.76${\scriptstyle\pm1.97}$ & 56.08${\scriptstyle\pm3.24}$ & 61.07${\scriptstyle\pm8.06}$ & 79.72${\scriptstyle\pm1.44}$ & 59.39${\scriptstyle\pm9.75}$ & 48.15${\scriptstyle\pm5.88}$ & 67.17${\scriptstyle\pm0.51}$ & \cellcolor{blue!10}62.78${\scriptstyle\pm1.70}$ & 55.33${\scriptstyle\pm2.67}$ & 5 \\
SuperGAT& 55.42${\scriptstyle\pm4.78}$ & 52.57${\scriptstyle\pm5.31}$ & 27.69${\scriptstyle\pm1.10}$ & 25.97${\scriptstyle\pm4.73}$ & 44.91${\scriptstyle\pm2.18}$ & 27.91${\scriptstyle\pm5.23}$ & 24.04${\scriptstyle\pm7.9}$ & 36.69${\scriptstyle\pm2.42}$ & 52.42${\scriptstyle\pm6.46}$ & 24.32${\scriptstyle\pm6.30}$ & 58.70${\scriptstyle\pm0.62}$ & 60.94${\scriptstyle\pm1.97}$ & 53.31${\scriptstyle\pm2.05}$ & 13 \\
GPRGNN  & 56.26${\scriptstyle\pm5.45}$ & 56.47${\scriptstyle\pm5.14}$ & 33.43${\scriptstyle\pm1.10}$ & 31.16${\scriptstyle\pm1.32}$ & 43.78${\scriptstyle\pm2.31}$ & 48.58${\scriptstyle\pm3.22}$ & 31.07${\scriptstyle\pm7.5}$ & 81.38${\scriptstyle\pm1.53}$ & 61.03${\scriptstyle\pm9.69}$ & 37.69${\scriptstyle\pm6.08}$ & 65.96${\scriptstyle\pm0.63}$ & 62.46${\scriptstyle\pm1.75}$ & 52.56${\scriptstyle\pm2.19}$ & 8 \\
FAGCN   & 71.45${\scriptstyle\pm6.75}$ & 78.67${\scriptstyle\pm5.41}$ & 36.84${\scriptstyle\pm1.28}$ & 28.41${\scriptstyle\pm1.25}$ & 44.3${\scriptstyle\pm1.68}$ & 56.57${\scriptstyle\pm2.07}$ & 66.07${\scriptstyle\pm10.13}$ & \cellcolor{blue!10}83.13${\scriptstyle\pm1.38}$ & 78.66${\scriptstyle\pm6.76}$ & \cellcolor{blue!10}51.11${\scriptstyle\pm6.28}$ & 66.83${\scriptstyle\pm0.31}$ & 62.78${\scriptstyle\pm1.09}$ & 52.95${\scriptstyle\pm2.32}$ & 3 \\
H2GCN   & 75.00${\scriptstyle\pm7.93}$ & 80.90${\scriptstyle\pm5.71}$ & \cellcolor{green!20}37.24${\scriptstyle\pm1.53}$ & 33.22${\scriptstyle\pm1.47}$ & 46.08${\scriptstyle\pm2.65}$ & \cellcolor{blue!10}56.90${\scriptstyle\pm2.77}$ & 56.07${\scriptstyle\pm6.40}$ & 79.01${\scriptstyle\pm1.64}$ & 78.29${\scriptstyle\pm8.71}$ & 46.42${\scriptstyle\pm4.82}$ & \cellcolor{blue!10}67.33${\scriptstyle\pm0.47}$ & 62.17${\scriptstyle\pm1.53}$ & 53.09${\scriptstyle\pm2.50}$ & 2 \\
CPGNN   & \cellcolor{blue!10}77.76${\scriptstyle\pm6.18}$ & \cellcolor{green!20}83.63${\scriptstyle\pm6.06}$ & 36.52${\scriptstyle\pm0.83}$ & 32.26${\scriptstyle\pm1.57}$ & \cellcolor{blue!10}51.59${\scriptstyle\pm3.28}$ & 22.19${\scriptstyle\pm2.73}$ & 35.71${\scriptstyle\pm0.04}$ & 77.78${\scriptstyle\pm1.60}$ & 79.74${\scriptstyle\pm6.96}$ & 36.26${\scriptstyle\pm4.79}$ & 66.33${\scriptstyle\pm0.47}$ & 60.04${\scriptstyle\pm1.32}$ & \cellcolor{blue!10}55.38${\scriptstyle\pm0.93}$ & 6 \\
LP      & 20.26${\scriptstyle\pm6.23}$ & 17.26${\scriptstyle\pm6.06}$ & 18.22${\scriptstyle\pm1.49}$ & 15.58${\scriptstyle\pm1.01}$ & 21.77${\scriptstyle\pm1.81}$ & 24.1${\scriptstyle\pm2.69}$ & 21.79${\scriptstyle\pm9.38}$ & 37.20${\scriptstyle\pm1.55}$ & 20.53${\scriptstyle\pm5.86}$ & 30.86${\scriptstyle\pm3.83}$ & 56.42${\scriptstyle\pm0.47}$ & 56.22${\scriptstyle\pm1.40}$ & 54.68${\scriptstyle\pm2.94}$ & 14 \\
C\&S   & 76.51${\scriptstyle\pm6.87}$ & 80.96${\scriptstyle\pm5.14}$ & 30.95${\scriptstyle\pm1.32}$ & 31.65${\scriptstyle\pm1.50}$ & 34.11${\scriptstyle\pm1.50}$ & 40.08${\scriptstyle\pm2.49}$ & 58.93${\scriptstyle\pm10.5}$ & 80.06${\scriptstyle\pm1.57}$ & 80.59${\scriptstyle\pm7.77}$ & 38.52${\scriptstyle\pm6.69}$ & 62.84${\scriptstyle\pm1.36}$ & 62.52${\scriptstyle\pm0.96}$ & 53.85${\scriptstyle\pm1.77}$ & 7 \\
CLP & \cellcolor{green!20}79.65${\scriptstyle\pm6.94}$ & \cellcolor{blue!10}82.39${\scriptstyle\pm3.84}$ & 36.75${\scriptstyle\pm1.03}$ & \cellcolor{blue!10}34.76${\scriptstyle\pm1.23}$ & 51.32${\scriptstyle\pm1.55}$ & \cellcolor{green!20}57.67${\scriptstyle\pm3.36}$ & \cellcolor{green!20}78.17${\scriptstyle\pm6.26}$ & \cellcolor{green!20}86.77${\scriptstyle\pm1.65}$ & \cellcolor{blue!10}81.52${\scriptstyle\pm5.83}$ & \cellcolor{green!20}56.27${\scriptstyle\pm5.16}$ & \cellcolor{green!20}67.68${\scriptstyle\pm0.37}$ & \cellcolor{green!20}63.01${\scriptstyle\pm0.44}$ & \cellcolor{green!20}56.55${\scriptstyle\pm1.94}$ & 1 \\
\hline
\end{tabular}
}
\end{table}

As supplemental results of Table~\ref{table:NC_experimental_results_hete} and Figure~\ref{fig:performance_homophily_graphs} in Section~\ref{subsec:experimental_results_realworld},
Table~\ref{table:more_NC_experimental_results_real_homophily} and Table~\ref{table:more_NC_experimental_results_real_heterophily} give the detailed node classification performance on all \textit{heterophily} and \textit{homophily} graphs with \textit{sparse} and \textit{dense} splittings. 

%------------------------------------------------------------------------------

% %------------------------------------------------------------------------------
% \section{Synthetic Dataset: Details}
% \label{sec:appendix_syn_data_details}
% \input{pages/appendix_syn_data_details}
% %------------------------------------------------------------------------------

%------------------------------------------------------------------------------
\section{Experimental Setup \& Hyperparamater Tuning}
\label{sec:appendix_experimental_setup_hyperparam_tune}
The dataset splitting settings are described in Section~\ref{subsec:experimental_setup}. Here we further describe other detailed common experimental settings and specific settings for each model.

\smallskip\noindent
\textbf{Experimental settings}.
For a fair comparison, we set the feature embedding dimension to $64$, learning rate to $0.01$, the number of epochs to $500$, early stopping steps to $50$, weight decay to $0.00005$  and dropout ratio to $0.5$.
For three label propagation-based models, we set the number of propagation layers to $50$.

\smallskip\noindent
\textbf{Model implementation}.
For MLP, we used our own implementation of MLP~\citep{R61} with 1-, 2-, and 3-hidden layers. 
We report better performance of each dataset.
For GCN~\citep{KW17}, GAT~\citep{VCCRLB18}, GCN2~\citep{CWHDL20}, SuperGAT~\citep{KO21}, LP~\citep{Z05}: we use their implementation on Pytorch-Geometric~\url{https://pytorch-geometric.readthedocs.io/en/latest/}.
For LINK~\citep{ZG09} and Mixhop~\citep{APKALHSG19}, we use the implementation given by~\citet{LHLHGBL21}. 
\begin{itemize}
    \item MLP:
    \begin{itemize}
        \item Non-linearity Function: Relu
    \end{itemize}
    \item LINK:
    \begin{itemize}
        \item weight decay: \{.001, .01, .1\}
    \end{itemize}
    \item GCN~\citep{KW17}:
    \begin{itemize}
        \item Non-linearity Function: Relu
    \end{itemize}
    \item GAT~\citep{VCCRLB18}:
    \begin{itemize}
        \item num\_head: \{8, 16\}
        \item Non-linearity Function: Relu
    \end{itemize}
    \item GCN2~\citep{CWHDL20}:
    \begin{itemize}
        \item alpha: 0.1
        \item theta: 0.5
        \item number of layers: \{16, 32, 64\}
        \item layer normalisation: \{True, False\}
    \end{itemize}
    \item Mixhop
    \begin{itemize}
        \item hidden dimension: \{8, 16, 32\} 
        \item number of layers: \{2, 3\} 
        \item Each layer has uses the 0th, 1st, and 2nd powers of the adjacency 
        \item Non-linearity Function: Relu
    \end{itemize}
    \item SuperGAT~\citep{KO21}:
    \begin{itemize}
        \item num\_head: 8
        \item attention\_type: ``MX''
        \item edge\_sample\_ratio: 0.8
        \item Non-linearity Function: \{Relu, Elu\}
    \end{itemize}
    \item GPRGNN~\citep{CPLM21}: we used their official implementation released by authors on GitHub~\url{https://github.com/jianhao2016/GPRGNN}:
    \begin{itemize}
        \item Weight initialisation: Personalized PageRank (PPR)
        \item alpha: 0.1
        \item K: 10
        \item Non-linearity Function: \{Relu, Elu\}
    \end{itemize}
    \item FAGCN~\citep{BWSS21}: we used their official implementation released by authors on GitHub~\url{https://github.com/bdy9527/FAGCN}:
    \begin{itemize}
        \item layer\_normalisation: \{True, False\}
        \item eps: 0.3
        \item Non-linearity Function: \{Relu, Elu\}
    \end{itemize}
    \item H2GCN~\citep{ZYZHAK20}: we used our own implementation with Pytorch follow the paper:
    \begin{itemize}
        \item Feature encoder: same as the MLP that was used for CLP (ours)
        \item num\_layers: 2
    \end{itemize}
    \item CPGNN~\citep{ZRRMLAK21}: we used their official implementation released by authors on GitHub~\url{https://github.com/GemsLab/CPGNN}:
    \begin{itemize}
        \item num\_layers: {1, 2}
        \item Feature encoder: same as the MLP that used for CLP (ours)
        \item Feature encoder pre-train: \{True, False\}
    \end{itemize}
    \item C\&S~\citep{HHSLB21}: we used their official implementation released by authors on GitHub~\url{https://github.com/CUAI/CorrectAndSmooth}. Correction alpha and smoothing alpha $\in (0.1, 1)$ for the two propagation. Both Autoscale and FDiff-scale were used for all experiments. We report better results after tuning two alpha values for each dataset. 
\end{itemize}
For CLP, we follow similar settings as C\&S, and we report the best performance with $\alpha \in (0.1, 1)$. 
It should be noted that according to Proposition~\ref{lemma:exact_convergence}, for class $k$, CLP iterative updates Eq.~\ref{eq:closed_form_solution} converge if and only if $\rho(\mathbf{A}^{\mathbf{F}}_k) < \alpha^{-1}$. 
And we have $\rho(\mathbf{A}^{\mathbf{F}}_k) \le \parallel \mathbf{A}^{\mathbf{F}}_k \parallel_{2} \le \parallel \mathbf{A}^{\mathbf{F}}_k \parallel_{1}$.
Hence, we only need to ensure $\parallel \mathbf{A}^{\mathbf{F}}_k \parallel_{1} < \alpha^{-1}$ to guarantee the convergence the convergence of the iterative updates Eq.~\ref{eq:closed_form_solution}, for class $k$. 
Moreover, because $\parallel \mathbf{A}^{\mathbf{F}}_k \parallel_{1} = (\sum_i \sum_j \vert\mathbf{A}^{\mathbf{F}}_k(i, j)\vert) \le 1$, and $\alpha \in (0.1, 1)$ guarantees $\parallel \mathbf{A}^{\mathbf{F}}_k \parallel_{1} \leq 1 <\alpha^{-1}$. 

\smallskip\noindent
\textbf{Hardware specifications}. We run experiments of benchmarks with a Tesla V100 GPU with 16G GPU Memory.
%------------------------------------------------------------------------------

%------------------------------------------------------------------------------

% \newpage
% %------------------------------------------------------------------------------
% \section{rebuttal}
% \input{pages/rebuttal}
% %------------------------------------------------------------------------------

% \newpage
% %------------------------------------------------------------------------------
% \section{discussion}
% \input{pages/discussion}
% %------------------------------------------------------------------------------

\end{document}